\newtheorem{theorem}{Theorem}
\newtheorem{proposition}[theorem]{Proposition}
\newtheorem{definition}{Definition}
\newtheorem{remark}{Remark}
\title{Beta Shapley: a Unified and Noise-reduced Data Valuation Framework for Machine Learning}
\date{\today}
\author{Yongchan Kwon}
\author{James Zou}
\affil{Stanford University}
\begin{document}
\twocolumn[

\aistatstitle{Beta Shapley: a Unified and Noise-reduced Data Valuation Framework for Machine Learning}

\aistatsauthor{Yongchan Kwon \And  James Zou} \aistatsaddress{Department of Biomedical Data Science, Stanford University}
]

\begin{abstract}
Data Shapley has recently been proposed as a principled framework to quantify the contribution of individual datum in machine learning. It can effectively identify helpful or harmful data points for a learning algorithm. In this paper, we propose Beta Shapley, which is a substantial generalization of Data Shapley. Beta Shapley arises naturally by relaxing the efficiency axiom of the Shapley value, which is not critical for machine learning settings. Beta Shapley unifies several popular data valuation methods and includes data Shapley as a special case. Moreover, we prove that Beta Shapley has several desirable statistical properties and propose efficient algorithms to estimate it. We demonstrate that Beta Shapley outperforms state-of-the-art data valuation methods on several downstream ML tasks such as: 1) detecting mislabeled training data; 2) learning with subsamples; and 3) identifying points whose addition or removal have the largest positive or negative impact on the model. 
\end{abstract}

\section{Introduction}
\label{s:intro}
Getting appropriate training data is often the biggest and most expensive challenge of machine learning (ML). In many real world applications, a fraction of the data could be very noisy due to outliers or label errors. Furthermore, data are often costly to collect and understanding of what types of data are more useful for training a model can help to guide data curation. For all of these motivations, data valuation has emerged as an important area of ML research. The goal of data valuation is to quantify the contribution of each training datum to the model's performance. 

Recently, inspired by ideas from economics and game theory, data Shapley has been proposed to represent the notion of which datum helps or harms the predictive performance of a model \citep{ghorbani2019}. 
Data Shapley has several benefits compared to existing data valuation methods. It uniquely satisfies the natural properties of fairness in cooperative game theory. Also, it better captures the influence of individual datum, showing superior performance on multiple downstream ML tasks, including identifying mislabeled observations in classification problems or detecting outliers in regression problems \citep{ghorbani2019, jia2019b, jia2019}.

Data Shapley is defined as a function of marginal contributions that measure the average change in a trained model's performance when a particular point is removed from a set with a given cardinality. The marginal contribution is a basic ingredient in many data valuation approaches. For example, the commonly used leave-one-out (LOO) analysis is equivalent to estimating a point's marginal contribution when it is removed from the entire training set. The marginal contribution of a point can vary if the cardinality of a given set changes, and data Shapley takes a simple average of the marginal contributions on all the different cardinalities. In this way, data Shapley can avoid the dependency on a specific cardinality, but it is unclear whether this uniform weight is optimal for quantifying the impact of individual datum. As we will show both theoretically and through experiments, this is in fact sub-optimal. The uniform averaging arises from the efficiency axiom of Shapley values, which is not essential in  ML settings. The axiom requires the sum of data values to equal the total utility, but it might not be sensible nor verifiable in practice.

\paragraph{Our contributions} In this paper, we propose Beta Shapley, a unified data valuation framework that naturally arises by relaxing the efficiency axiom. Our theoretical analyses show that Beta Shapley is characterized by reduced noise compared to data Shapley and can be applied to find optimal importance weights for subsampling. 
We develop an efficient algorithm to estimate it based on Monte Carlo methods. We demonstrate that Beta Shapley outperforms state-of-the-art data valuation methods on several downstream ML tasks including noisy label detection, learning with subsamples, and point addition and removal experiments. 

\paragraph{Related works}
The Shapley value was introduced in a seminar paper as a method of fair division of rewards in cooperative games \citep{shapley1953}. It has been applied to various ML problems, for instance, variable selection \citep{cohen2005, zaeri2018feature}, feature importance \citep{lundberg2017, covert2020understanding, lundberg2020local, covert2020explaining, covert2021improving}, model interpretation \citep{chen2018, sundararajan2019many, ghorbani2020neuron, wang2021shapley}, model importance \citep{rozemberczki2021shapley}, and the collaborative learning problems \citep{sim2020collaborative}.
As for the data valuation problem, data Shapley was introduced by \citet{ghorbani2019} and \citet{jia2019}, and many extensions have been studied in the literature. For example, KNN Shapley was proposed to address the computational cost of data Shapley by using the $k$-nearest neighborhood model \citep{jia2019b}, and distributional Shapley value was studied to deal with the random nature of data Shapley \citep{ghorbani2020distributional, kwon2021efficient}.

The relaxation of the Shapley axioms has been one of the central topics in the field of economics \citep{kalai1987weighted, weber1988probabilistic}. When the symmetry axiom is removed, the quasivalue and the weighted value have been studied \citep{shapley1953additive, banzhaf1964weighted, gilboa1991quasi, monderer1992weighted}. When the efficiency axiom is removed, the semivalue has been studied \citep{dubey1977probabilistic, dubey1981, ridaoui2018axiomatisation}. We refer to \citet{monderer2002variations} for a complementary literature review of variations of Shapley value. Our work is based on the semivalue and characterizes its statistical properties in ML settings.

\section{Preliminaries}
\label{s:preliminaries}
We review the marginal contribution, a key component for analyzing the impact of one datum, and various data valuation methods based on it. We first define some notations. Let $Z$ be a random variable for data defined on a set $\mathcal{Z} \subseteq \mathbb{R}^d$ for some integer $d$ and denote its distribution by $P_{Z}$. In supervised learning, we can think of $Z = (X, Y)$ defined on a set $\mathcal{X} \times \mathcal{Y}$, where $X$ and $Y$ describe the input and its label, respectively. Throughout this paper, we denote a set of independent and identically distributed (i.i.d.) samples from $P_Z$ by $\mathcal{D}=\{z_1, \dots, z_n\}$. We denote a utility function by $U: \cup_{j=0} ^{\infty} \mathcal{Z} ^j \to \mathbb{R}$. Here, we use the conventions $\mathcal{Z} ^0 := \{ \emptyset \}$ and $U(\emptyset)$ is the performance based on the best constant predictor. The utility function represents the performance of a model trained on a set of data points. In regression, for instance, one choice for $U(S)$ is the negative mean squared error of a model (e.g. linear regression) trained on the subset $S \subseteq \mathcal{D} \subseteq \mathcal{X} \times \mathcal{Y}$. Similarly, in classification, $U(S)$ can be the classification accuracy of a model (e.g. logistic regression) trained on $S$. Note that the utility depends on which model is used. Throughout this paper, the dependence on the model is omitted for notational simplicity, but it does not affect our results. Lastly, for a set $S$, we denote its cardinality by $|S|$, and for $m \in \mathbb{N}$, we use $[m]$ to denote a set of integers $\{ 1, \dots, m\}$. 

Data valuation has been studied as a problem to evaluate the impact of individual datum, and many existing metrics measure how much the model output or model performance changes after removing one data point of interest. These concepts can be formalized by the marginal contribution defined below. 

\begin{definition}[Marginal contribution]
For a function $h$ and $j \in [n]$, we define the marginal contribution of $z^* \in \mathcal{D}$ with respect to $j-1$ samples as
\begin{align*}
    \Delta_j (z^*; h, \mathcal{D}) := \frac{1}{\binom{n-1}{j-1}} \sum_{ S \in \mathcal{D}_{j} ^{\backslash z^*} } h(S\cup \{z^*\})-h(S),
\end{align*}
where $\mathcal{D}_{j} ^{\backslash z^*} := \{ S \subseteq \mathcal{D} \backslash \{z^*\}: |S|=j-1 \}$.
\label{def:marginal_contrib}
\end{definition}
The marginal contribution $\Delta_j (z^*; h, \mathcal{D})$ considers all possible subsets with the same cardinality $S \in \mathcal{D}_{j} ^{\backslash z^*}$ and measures the average changes of $h$ when datum of interest $z^*$ is removed from $S \cup \{z^*\}$. Note that when $j=n$, the marginal contribution $\Delta_n (z^*; h, \mathcal{D})$ equals to $h(\mathcal{D}) - h(\mathcal{D} \backslash \{ z^*\})$, and it captures the effect of deleting $z^*$ from the entire training dataset $\mathcal{D}$.

Many existing data valuation methods can be explained by the marginal contribution $\Delta_j (z^*; h, \mathcal{D})$. Specifically, Cook’s distance \citep{cook1980characterizations, cook1982residuals} is proportional to the squared $\ell_2$-norm of the marginal contribution $\norm{\Delta_n (z^*; h, \mathcal{D})}_2 ^2$ when $h$ outputs predictions of the given dataset $\mathcal{D}$, and the LOO method uses $\Delta_n (z^*; h, \mathcal{D})$ as data values when $h$ is a utility function $U$. Also, the influence function can be regarded as an approximation of LOO \citep{koh2017understanding}.

Data Shapley is another example that can be expressed as a function of marginal contributions \citep{ghorbani2019,jia2019, jia2019b}. To be more specific, data Shapley of datum $z^* \in \mathcal{D}$ is defined as
\begin{align}
    \psi_{\mathrm{shap}}(z^*; U, \mathcal{D}) := \frac{1}{n} \sum_{j=1} ^n \Delta_j (z^*; U, \mathcal{D}).
    \label{eqn:def_data_shapley_value}
\end{align}
Unlike Cook's distance or LOO methods, data Shapley in \eqref{eqn:def_data_shapley_value} considers all cardinalities and takes a simple average of the marginal contributions. By assigning the constant weight on different marginal contributions, it avoids the dependency on a specific cardinality and can capture the effect of one data point for small cardinality.

Data Shapley provides a principled data valuation framework in that it uniquely satisfies the natural properties of a fair division of rewards in cooperative game theory. \citet{shapley1953} showed that the Shapley value is the unique function $\psi$ that satisfies the following four axioms.
\begin{itemize}
    \item Linearity: for functions $U_1$, $U_2$ and $\alpha_1, \alpha_2 \in \mathbb{R}$, $ \psi( z^*; \alpha_1 U_1 + \alpha_2 U_2, \mathcal{D} ) = \alpha_1 \psi( z^*;  U_1 , \mathcal{D} ) + \alpha_2 \psi ( z^*; U_2, \mathcal{D} )$. 
    \item Null player: if $U(S\cup \{z^*\})=U(S)+c$ for any $S \subseteq \mathcal{D}\backslash \{z^*\}$ and some $c \in \mathbb{R}$, then $\psi (z^*; U, \mathcal{D}) = c$.
    \item Symmetry: for every $U$ and every permutation $\pi$ on $\mathcal{D}$, $\psi(\pi^* U) = \pi^* \psi U$ where $\pi^*U$ is defined as $(\pi^*U)(S) := U(\pi(S))$ for every $S \subseteq \mathcal{D}$.
    \item Efficiency: for every $U$, $\sum_{z \in \mathcal{D}} \psi(z; U, \mathcal{D}) = U(\mathcal{D})$.
\end{itemize}

Although data Shapley provides a fundamental framework for data values, there are some critical issues. In particular, it is unclear whether the uniform weight in \eqref{eqn:def_data_shapley_value} is optimal to represent the influence of one datum. When the cardinality $|S|$ is large enough, the performance change $U(S\cup\{z^*\})-U(S)$ is near zero, and thus the marginal contribution $\Delta_{|S|} (z^*; U, \mathcal{D})$ becomes negligible. In particular, when $U$ is a negative log-likelihood function, it can be shown that $U(S\cup\{z^*\})-U(S)=O_p(|S|^{-2})$ under mild conditions. This can make it hard to tell which data points contribute more to predictive performance. In the following section, we rigorously analyze the marginal contribution and show that using the uniform weight in \eqref{eqn:def_data_shapley_value} can be detrimental to capturing the influence of individual data.

\section{Theoretical analysis of marginal contribution}
\label{s:analysis_marginal_contributions}
In this section, we study asymptotic properties of the marginal contribution. To this end, we define a set $\mathfrak{D}=\{z^*, Z_1, \dots, Z_{n-1}\}$ where $Z_i$\rq{}s be i.i.d. random variables from $P_Z$, \textit{i.e.}, all elements of $\mathfrak{D}$ are random except for $z^*$. Given that $\Delta_j (z^*; U, \mathfrak{D})$ has the form of U-statistics \citep{hoeffding1948class}, Theorem 12.3 in \citet{van2000asymptotic} implies that for a fixed cardinality $j$, as $n$ approaches to infinity, we have 
\begin{align}
     (j^2 \zeta_{1} /n) ^{-1}\mathrm{Var} ( \Delta_j (z^*; U, \mathfrak{D}) \to 1,
    \label{eqn:u_stat_for_small_cardinality}
\end{align}
where $\zeta_{1} = \mathrm{Var} \left( \mathbb{E}[ U(S \cup \{z^*\})-U(S) \mid Z_1 ] \right)$ and $S$ is a random subset such that $|S|=j-1$ and each element in $S$ is chosen from $\{Z_1, \dots, Z_{n-1}\} = \mathfrak{D} \backslash \{z^*\}$ uniformly at random. All expectation and variance computations are under the $P_Z$. The result \eqref{eqn:u_stat_for_small_cardinality} shows that the asymptotic variance of $\Delta_j (z^*; U, \mathfrak{D})$ scales $O(j^2 \zeta_{1}/n)$ for a fixed cardinality $j$. However, since the data Shapley is a simple average of marginal contributions across all cardinalities $j \in \{1, \dots, n\}$, an analysis of marginal contribution for large $j$ is important to examine the statistical properties of the data Shapley. 

In the following theorem, we provide an asymptotic variance when the cardinality $j$ is allowed to increase to infinity. To begin with, for $j\in [n]$ we set $\zeta_{j} := \mathrm{Var} \left( U( S \cup \{z^*\})-U(S) \right)$ where $S$ is a random subset such that $|S|=j-1$ and $S\subseteq \mathfrak{D}\backslash\{z^*\}$. 

\begin{theorem}[Asymptotic distribution of the marginal contribution]
Suppose the cardinality $j = o(n^{1/2})$ and assume that $\lim_{j \to \infty}\zeta_{j}/(j\zeta_{1})$ is bounded. Then, $(j^2 \zeta_{1}/n) ^{-1} \mathrm{Var}(\Delta_j (z^*; U, \mathfrak{D})) \to 1$ as $n$ increases. 
\label{thm:u_stat_for_large_cardinality}
\end{theorem}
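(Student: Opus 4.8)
The plan is to recognize $\Delta_j(z^*;U,\mathfrak{D})$ as a U-statistic and compute its variance exactly through the Hoeffding (ANOVA) decomposition, then show that a single term dominates. Writing $m=j-1$ and $N=n-1$, the quantity $\Delta_j$ is precisely the U-statistic of order $m$ built from the $N$ i.i.d.\ variables $Z_1,\dots,Z_{n-1}$ with symmetric kernel
\[
g(z_{i_1},\dots,z_{i_m}) := U(\{z_{i_1},\dots,z_{i_m},z^*\}) - U(\{z_{i_1},\dots,z_{i_m}\}).
\]
Let $\delta_d^2$ denote the variances of the orthogonal components in the Hoeffding decomposition of $g$, so that $\delta_1^2 = \mathrm{Var}(\mathbb{E}[g\mid Z_1]) = \zeta_1$ and $\zeta_j = \mathrm{Var}(g) = \sum_{d=1}^m \binom{m}{d}\delta_d^2$. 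The standard variance identity for U-statistics, after collecting terms by Vandermonde's identity, gives the exact formula
\[
\mathrm{Var}(\Delta_j(z^*;U,\mathfrak{D})) = \sum_{d=1}^m a_d\,\delta_d^2, \qquad a_d := \frac{\binom{m}{d}\binom{N-d}{m-d}}{\binom{N}{m}} = \frac{[(m)_d]^2}{d!\,(N)_d},
\]
where $(x)_d := x(x-1)\cdots(x-d+1)$.

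First I would isolate the $d=1$ term. A direct computation gives the exact value $a_1 = m^2/N$, so the leading contribution is $a_1\delta_1^2 = \tfrac{(j-1)^2}{n-1}\zeta_1$. Since $j\to\infty$ and $n\to\infty$, one has $\left(\tfrac{(j-1)^2}{n-1}\right)\left(\tfrac{n}{j^2}\right)\to 1$, so this term alone already matches the claimed normalization $j^2\zeta_1/n$. It remains to show that the higher-order terms are asymptotically negligible relative to it.

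For the remainder I would use the sharp per-component bound $\delta_d^2 \le \zeta_j/\binom{m}{d}$, which follows immediately from $\zeta_j = \sum_{d'=1}^m \binom{m}{d'}\delta_{d'}^2 \ge \binom{m}{d}\delta_d^2$. Substituting this and simplifying yields $a_d\delta_d^2 \le \tfrac{(m)_d}{(N)_d}\zeta_j$, and since each factor of the falling-factorial ratio satisfies $\tfrac{m-i}{N-i}\le \tfrac{m}{N}$, we get $\tfrac{(m)_d}{(N)_d}\le (m/N)^d$. Summing the resulting geometric series over $d\ge 2$ gives
\[
\sum_{d=2}^m a_d\delta_d^2 \le \zeta_j \sum_{d=2}^m \left(\frac{m}{N}\right)^d \le \zeta_j\,\frac{(m/N)^2}{1-m/N}.
\]
Invoking the hypothesis, for large $j$ we have $\zeta_j \le Cj\zeta_1$ for some constant $C$; comparing with the leading term $a_1\delta_1^2 = \tfrac{m^2}{N}\zeta_1$ then shows the remainder is $O(m/N)$ times the leading term. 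Because $j=o(n^{1/2})$ forces $m/N\to 0$, the remainder is $o(j^2\zeta_1/n)$, and combining with the leading term yields $(j^2\zeta_1/n)^{-1}\mathrm{Var}(\Delta_j)\to 1$.

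The main obstacle is the control of the higher-order terms: the crude bound on the variances $\mathrm{Var}(\mathbb{E}[g\mid Z_1,\dots,Z_c])\le \zeta_j$ of the projections is too weak, because each such variance still contains the full first-order contribution $\zeta_1$ while the combinatorial weights for $c\ge 2$ are not small enough to absorb a factor of size $\zeta_j$. The essential step is therefore to pass from the projection variances to the orthogonal Hoeffding components $\delta_d^2$, whose per-component bound $\delta_d^2\le \zeta_j/\binom{m}{d}$ decays fast enough in $d$; this, together with the growth control $\zeta_j = O(j\zeta_1)$ supplied by the hypothesis, is exactly what closes the geometric estimate.
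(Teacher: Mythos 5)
Your proof is correct, and it takes a genuinely different route from the paper's: the paper's entire argument is a citation of Theorem 3.1(i) of DiCiccio and Romano (2020), a central limit theorem for U-statistics of growing order, whereas you give a self-contained derivation from the exact variance identity of the Hoeffding decomposition. Your identification of $\Delta_j$ as a U-statistic of order $m=j-1$ on $N=n-1$ i.i.d.\ points, the coefficients $a_d=\binom{m}{d}^2/\binom{N}{d}=[(m)_d]^2/(d!\,(N)_d)$, the per-component bound $\delta_d^2\le \zeta_j/\binom{m}{d}$, and the resulting estimate $\sum_{d\ge 2}a_d\delta_d^2\le \zeta_j\,(m/N)^2/(1-m/N)$ are all valid, and the ratio of remainder to leading term indeed reduces to $\zeta_j/\bigl(N\zeta_1(1-m/N)\bigr)=O(j/n)\to 0$. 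What each approach buys: the citation gets asymptotic normality as well (for which the condition $j=o(n^{1/2})$ is actually needed), while your elementary argument isolates exactly what drives the variance statement --- it only uses $j\to\infty$ and $j=o(n)$, so it proves the stated conclusion under a strictly weaker cardinality condition, and it makes transparent why the hypothesis $\zeta_j=O(j\zeta_1)$ is the right growth control. Your closing remark is also on point: working with the orthogonal components $\delta_d^2$ rather than the projection variances $\sigma_c^2$ is essential, since the classical coefficients of $\sigma_c^2$ for $c\ge 2$ are not small enough to absorb a bound of size $\zeta_j$, whereas the $\delta$-form coefficients decay like $(m/N)^d$. One small point worth making explicit if you write this up: the normalization $j^2\zeta_1/n$ versus the exact leading term $(j-1)^2\zeta_1/(n-1)$ forces $j\to\infty$, which is consistent with (and implicit in) the theorem's hypothesis being phrased as a limit in $j$.
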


\begin{figure}[t]
    \centering
    \includegraphics[width=0.475\columnwidth]{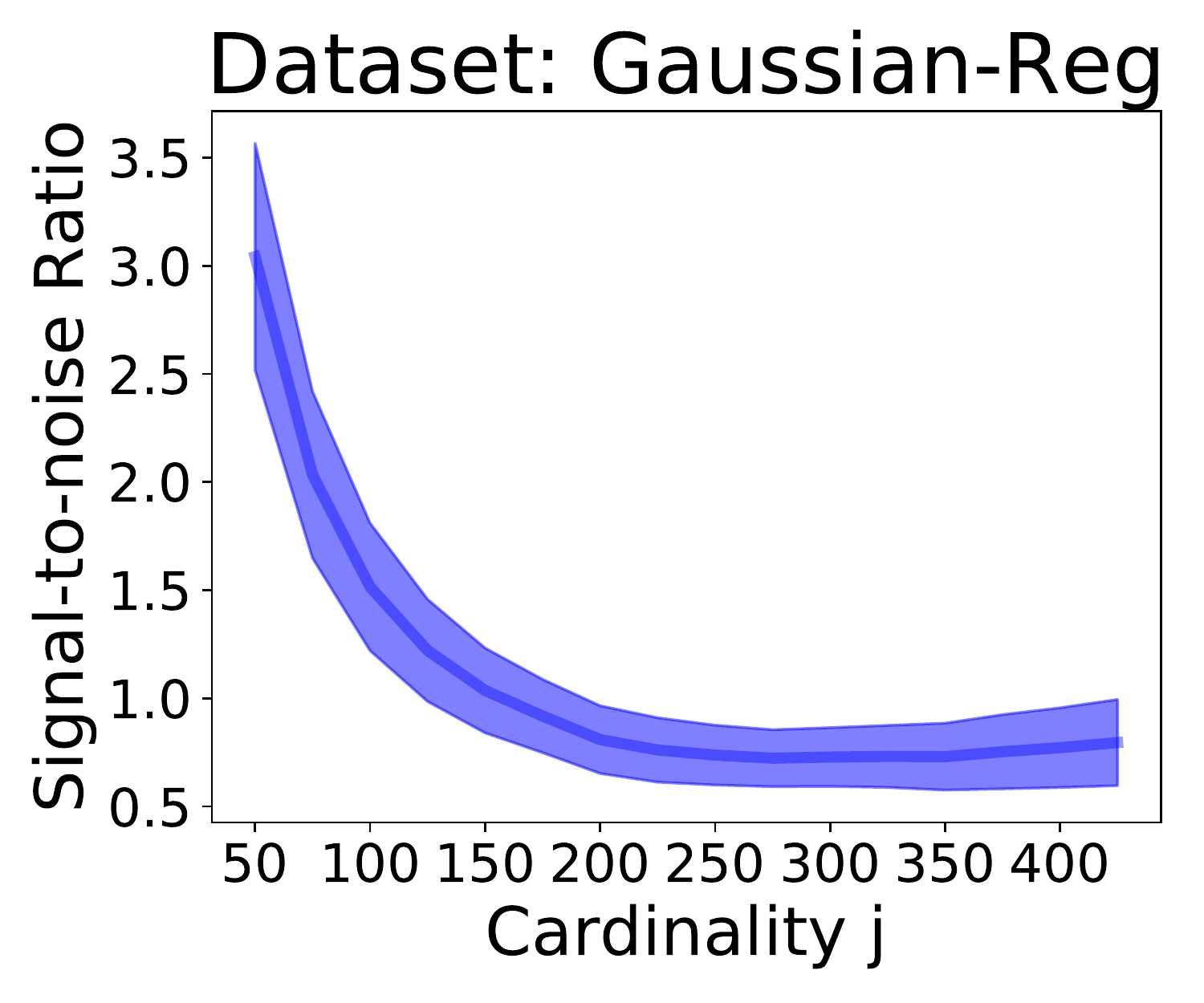}
    \includegraphics[width=0.475\columnwidth]{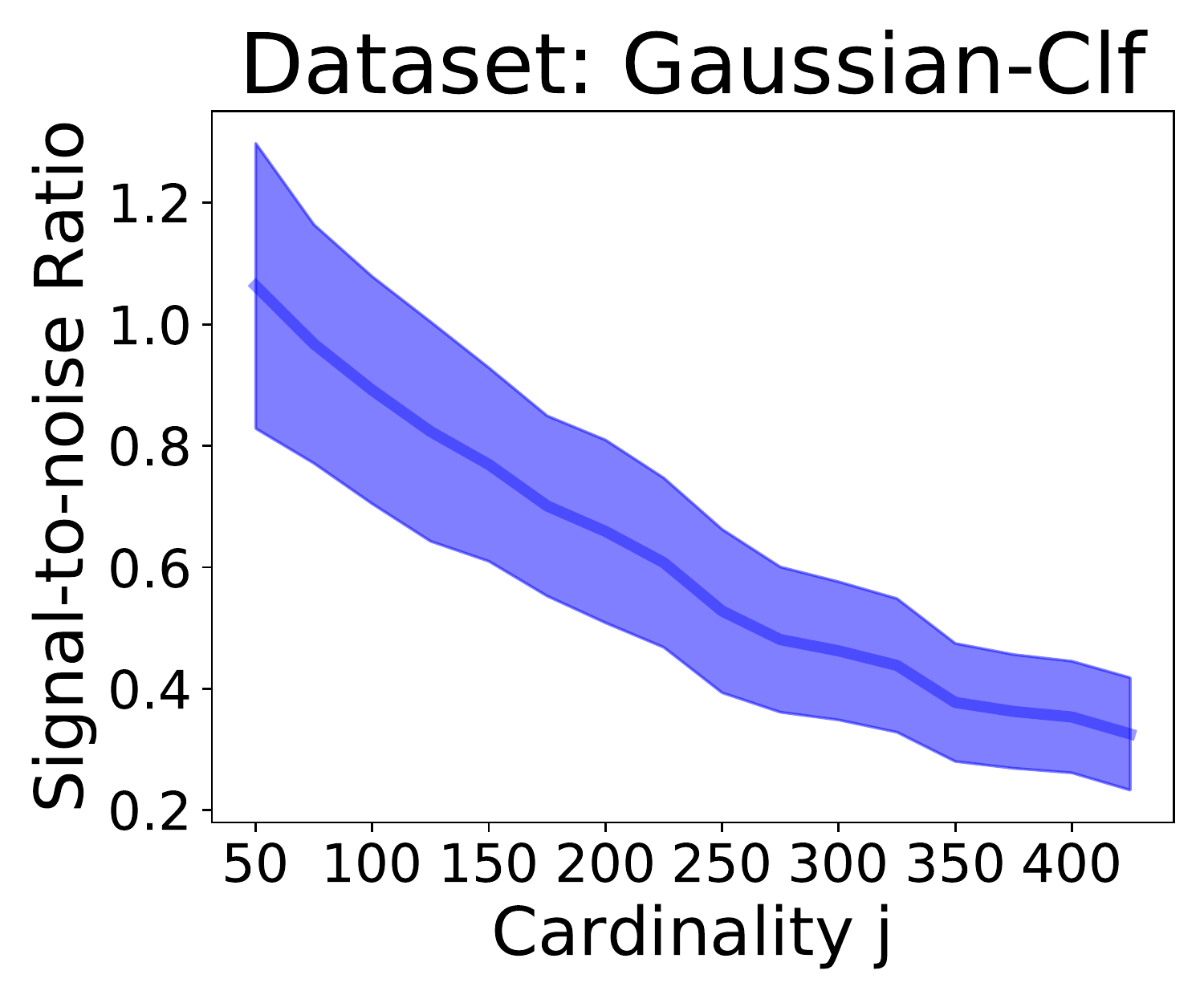}
    \caption{The signal-to-noise ratio of $\Delta_j (z^*; U, \mathfrak{D})$ as a function of the cardinality $j$ when $n=500$ in (left) regression and (right) classification settings. The data are generated from a generalized linear model. The signal-to-noise ratio generally decreases as the cardinality $j$ increases, showing that when $j$ is large, the signal of the marginal contribution at large cardinality is more likely to be perturbed by noise.} 
    \label{fig:snr}
\end{figure}

\begin{figure*}[t]
    \centering
    \includegraphics[width=0.245\textwidth]{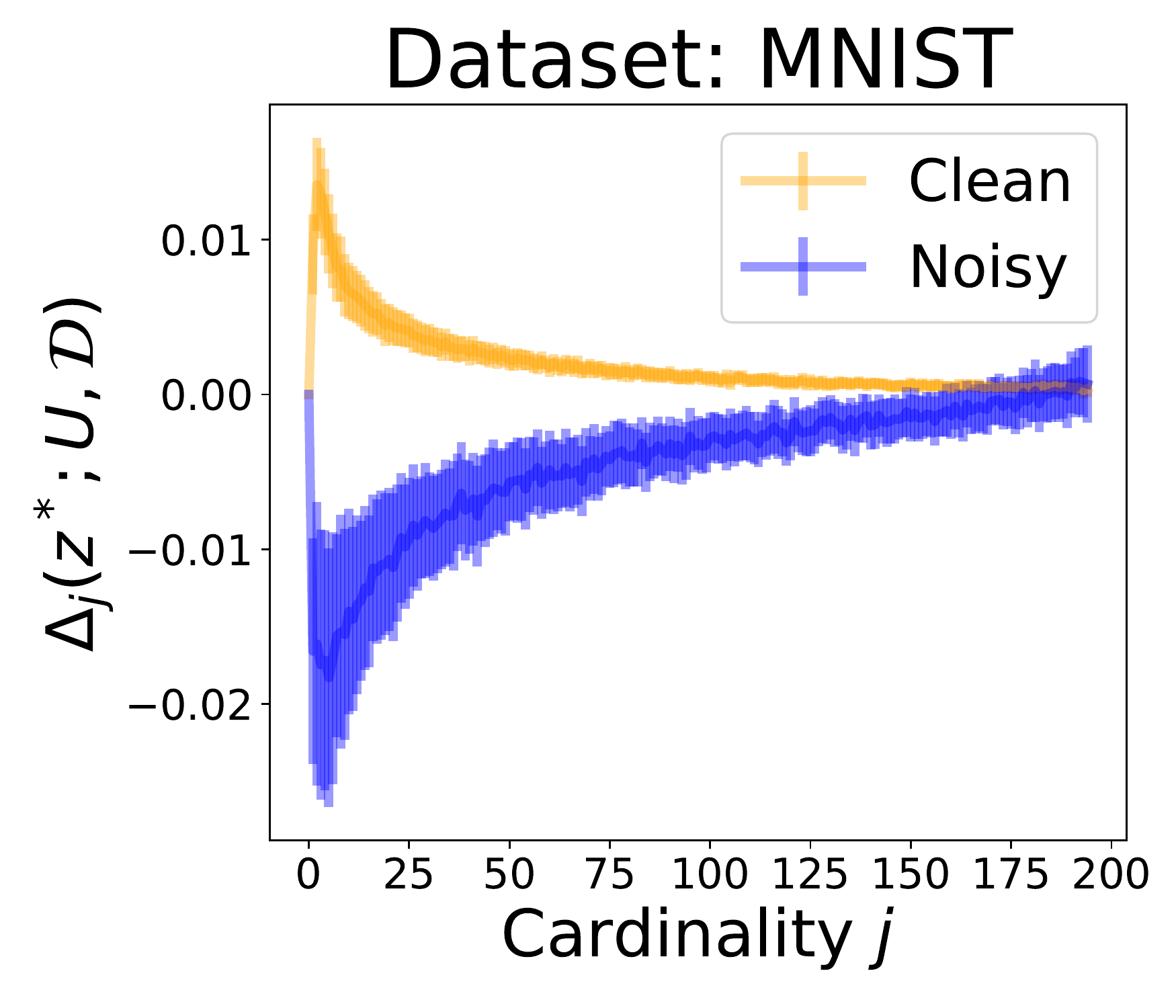}
    \includegraphics[width=0.245\textwidth]{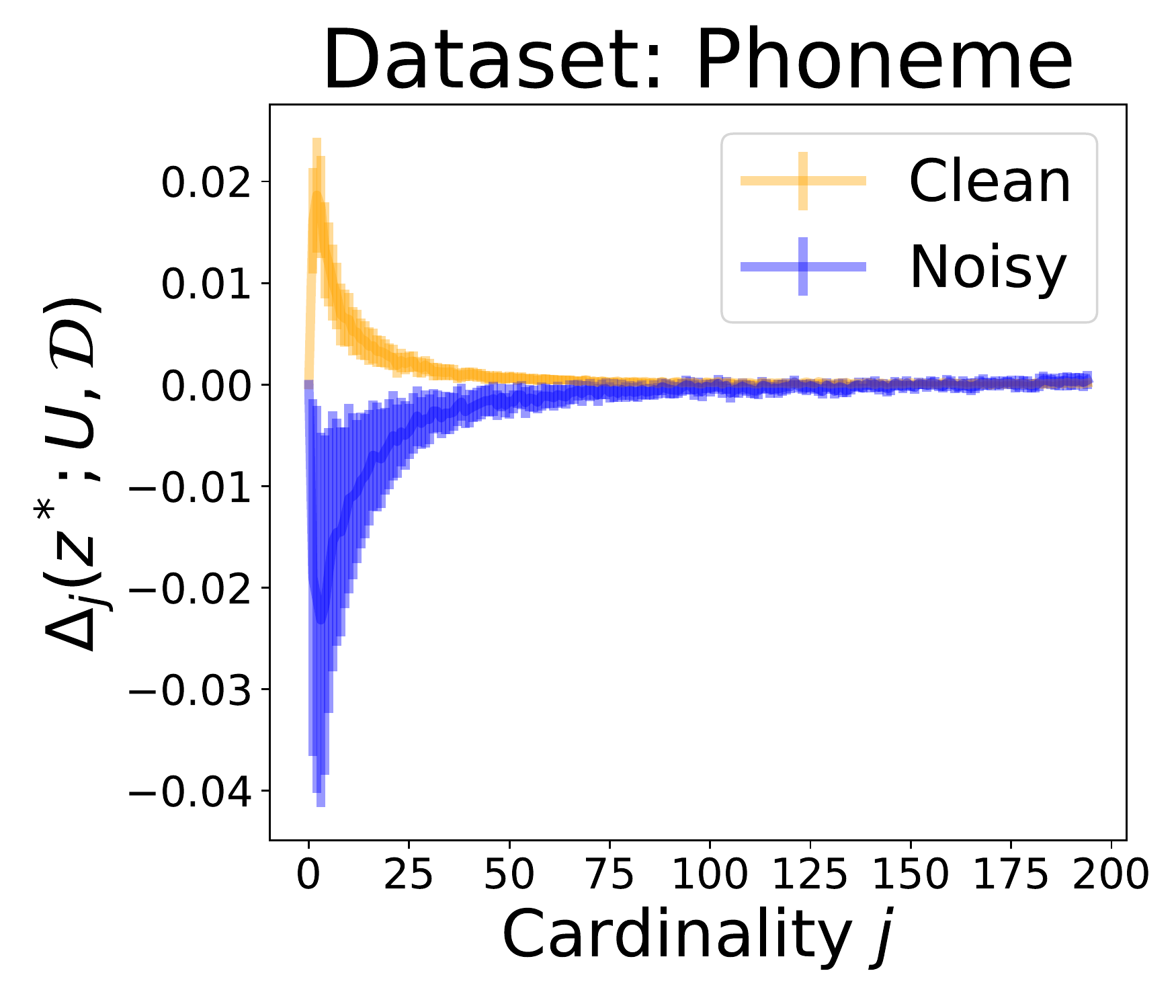}
    \includegraphics[width=0.245\textwidth]{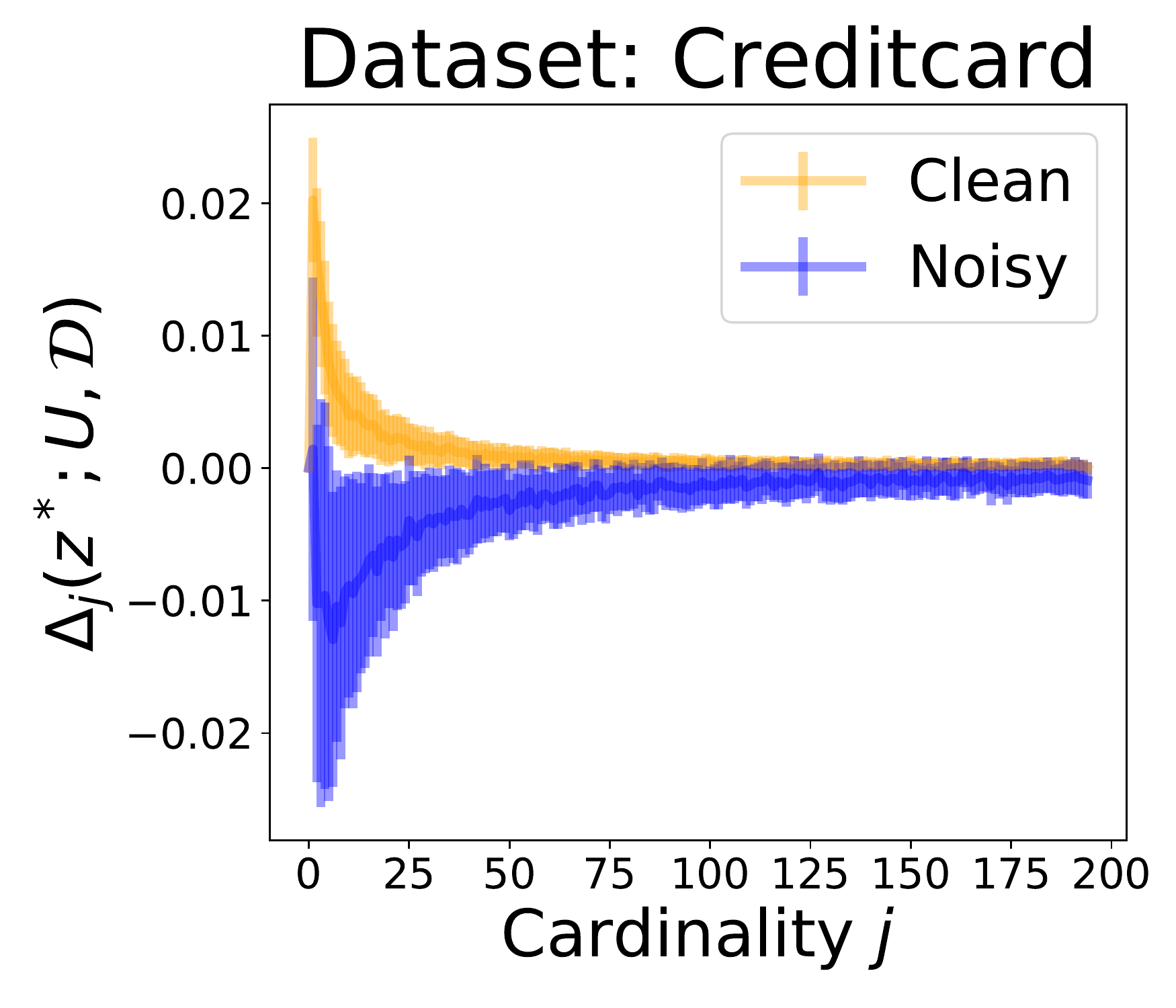}
    \includegraphics[width=0.245\textwidth]{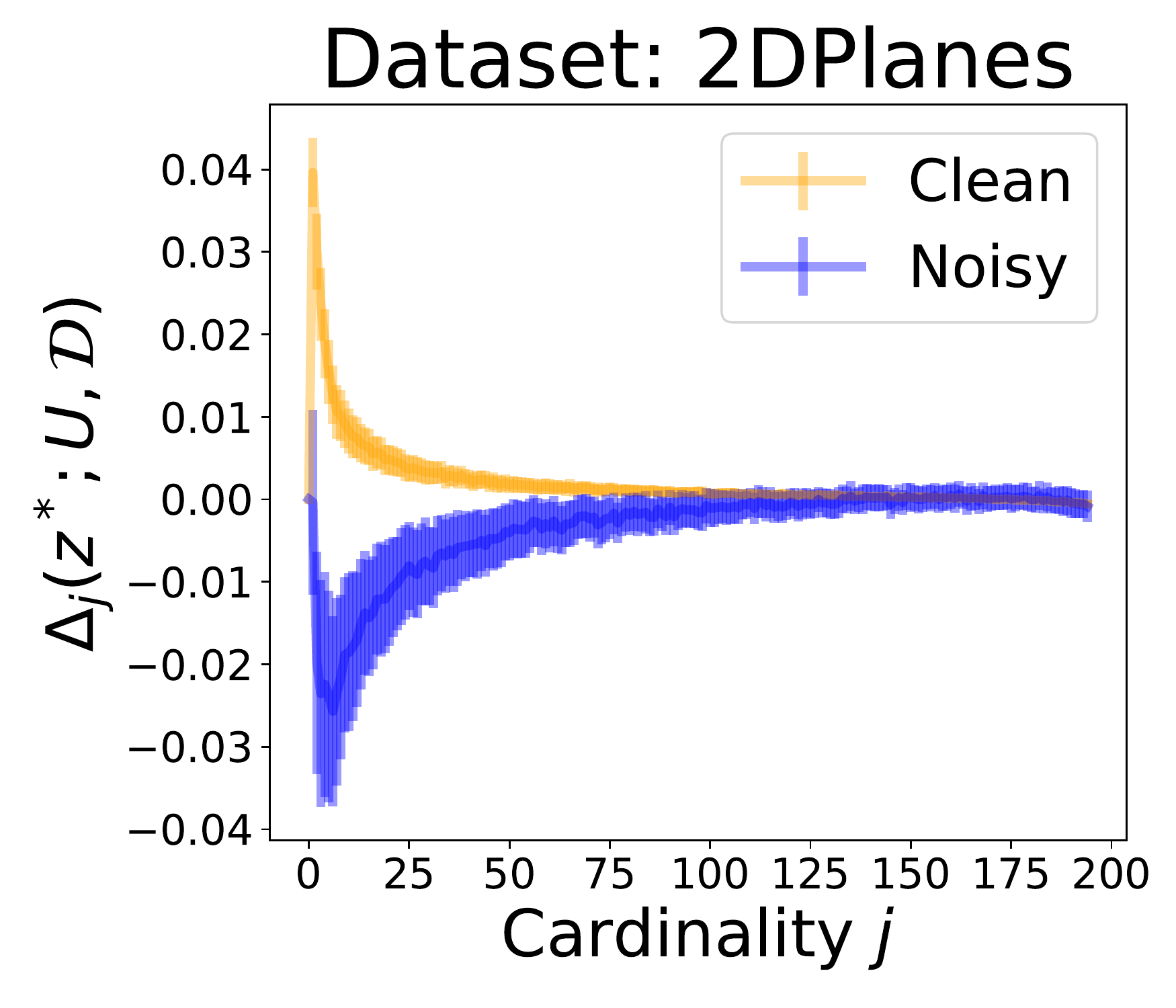}
    \caption{Illustrations of the marginal contribution $\Delta_j (z^*; U, \mathcal{D})$ as a function of the cardinality $j$ on the four datasets. Each color indicates a clean data point (yellow) and a noisy data point (blue). When the cardinality $j$ is large, the marginal contributions of the two groups become similar, so it is difficult to determine whether a data point is noisy by $\Delta_j (z^*; U, \mathcal{D})$. We provide additional results on different datasets in Appendix.} 
    \label{fig:clean_noisy_marginal_contributions}
\end{figure*}

Theorem \ref{thm:u_stat_for_large_cardinality} extends the previous asymptotic result in \eqref{eqn:u_stat_for_small_cardinality} to the case of diverging cardinality $j$. Note that for any $0 \leq \gamma < 1/2$, the cardinality $j = n^{\gamma}$ satisfies the condition $j=o(n^{1/2})$. This result provides a mathematical insight into the signal-to-noise ratio of $\Delta_j (z^*; U, \mathfrak{D})$ as the following remark.

\begin{remark}
Given that $|\mathbb{E}[\Delta_j (z^*; U, \mathfrak{D})]|$ usually decreases as $j$ increases in ML settings, the signal-to-noise ratio $|\mathbb{E}[\Delta_j (z^*; U, \mathfrak{D})]| / \sqrt{\mathrm{Var}(\Delta_j (z^*; U, \mathfrak{D}))}$ is expected to decrease as $j$ increases because $\mathrm{Var}(\Delta_j (z^*; U, \mathfrak{D}))$ is $O(j^2 \zeta_1/n)$.
\end{remark}

Figure \ref{fig:snr} illustrates the signal-to-noise ratio of $\Delta_j (z^*; U, \mathfrak{D})$ as a function of the cardinality $j$ for two example datasets with $n=500$. We denote a 95\% confidence band based on 50 repetitions under the assumption that the results follow the identical Gaussian distribution. We use a generalized linear model to generate data and use the negative mean squared error and the classification accuracy as a utility for regression and classification settings, respectively.
Figure \ref{fig:snr} clearly shows the signal-to-noise ratio decreases as $j$ increases. In other words, when $j$ is large, the signal of marginal contribution is more likely to be perturbed by noise. This result motivates us to assign large weight to small cardinality instead of using the uniform weight used in data Shapley. We provide details on implementation and additional results on real datasets in Appendix.

In Theorem \ref{thm:u_stat_for_large_cardinality}, we fix one data point $z^*$ and show that noise can be introduced when the cardinality is large. We now consider the entire dataset $\mathcal{D}$ and examine which cardinality is useful to capture the signal. To do so, we directly compare marginal contribution $\Delta_j (z^*; U, \mathcal{D})$ of mislabeled and correctly labeled data points in various classification settings. We set the sample size $n=200$ and assume that observed data can be mislabeled. As for mislabeled data, we flip the original label for a random 10\% of data points in $\mathcal{D}$. The utility function is the classification accuracy. Implementation details are provided in Appendix. 

Figure \ref{fig:clean_noisy_marginal_contributions} shows that there is a significant gap between the marginal contributions of the two groups when $j$ is small, but the gap becomes zero when $j$ is large. In particular, the 95\% confidence band for the clean data point (yellow) overlaps the confidence band for the mislabeled data point (blue) when $j$ is greater than $150$ in all datasets. This shows that using the uniform weight used in data Shapley \eqref{eqn:def_data_shapley_value} makes it difficult to tell whether or not a data point belongs to the clean group, and as a result, it can lead to undesirable decisions. 

One potential limit of Theorem \ref{thm:u_stat_for_large_cardinality} is that it is unknown whether the bound condition $\lim_{j \to \infty} \zeta_{j}/(j\zeta_{1})$ holds. We empirically show that this condition is plausible in Appendix.

\section{Proposed Beta-Shapley method}
\label{s:proposed}
Motivated by the results in Section \ref{s:analysis_marginal_contributions}, assigning large weights to small cardinality is expected to capture the impact of one datum better than data Shapley. In the following subsection, motivated by the idea of semivalue  \citep{dubey1981}, we show that removing the efficiency axiom gives a new form of data value that can assign larger weight on the marginal contribution based on small cardinality than large cardinality. 

\subsection{Data valuations without efficiency axiom}
\label{s:semivalue}
The efficiency axiom, which requires the total sum of data values to be equal to the utility, is not essential in ML settings. For example, multiplying data Shapley by a positive constant changes the sum of the data values but does not change the order between the data values. In other words, there are many data values that do not satisfy the efficiency but can equivalently identify low-quality data points as data Shapley. In this respect, we define a semivalue, which is characterized by all Shapley axioms except the efficiency axiom. 

\begin{definition}[semivalue]
We say a function $\psi$ is a semivalue if $\psi$ satisfies the linearity, null player, and symmetry axioms.
\end{definition}

In the following theorem, we now show how data values can be formulated without the efficiency axiom. 

\begin{theorem}[Representation of semivalues]
A value function $\psi_{\mathrm{semi}}$ is a semivalue, if and only if, there exists a weight function $w^{(n)}: [n] \to \mathbb{R}$ such that $\sum_{j=1} ^{n} \binom{n-1}{j-1} w^{(n)}(j)=n$ and the value function $\psi_{\mathrm{semi}}$ can be expressed as follows.
\begin{align}
    &\psi_{\mathrm{semi}}(z^*; U, \mathcal{D}, w^{(n)}) \notag \\
    &:= \frac{1}{n} \sum_{j=1} ^{n} \binom{n-1}{j-1} w^{(n)}(j)  \Delta_j (z^*; U, \mathcal{D}).   
    \label{eqn:def_prob_data_shapley_value}
\end{align}
\label{thm:semivalue_representation}
\end{theorem}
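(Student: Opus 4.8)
For the easier ($\Leftarrow$) direction, I would simply verify the three semivalue axioms directly from the representation \eqref{eqn:def_prob_data_shapley_value}. Linearity is immediate because each $\Delta_j(z^*; U, \mathcal{D})$ is linear in $U$ (the map $h \mapsto h(S\cup\{z^*\})-h(S)$ is linear), so any fixed-weight combination of the $\Delta_j$ inherits linearity. For the null player axiom, observe that if $U(S\cup\{z^*\})=U(S)+c$ for every $S$, then every summand in Definition~\ref{def:marginal_contrib} equals $c$, so $\Delta_j(z^*;U,\mathcal{D})=c$ for all $j$; plugging this in gives $\psi_{\mathrm{semi}} = \frac{c}{n}\sum_{j=1}^n \binom{n-1}{j-1} w^{(n)}(j) = c$, where the normalization constraint is exactly what forces the prefactor to equal $1$. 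Symmetry follows because $\Delta_j$ sums over all of $\mathcal{D}_j^{\backslash z^*}$ symmetrically, so relabeling by a permutation $\pi$ yields $\Delta_j(z^*;\pi^*U,\mathcal{D})=\Delta_j(\pi(z^*);U,\mathcal{D})$, and the cardinality-dependent weights are unaffected.

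The substantive direction is ($\Rightarrow$). The plan is to reduce an arbitrary semivalue to its action on a convenient spanning set and then solve a linear system for the weights. I would use the unanimity games $u_T(S) := \mathds{1}[T \subseteq S]$ for $\emptyset \neq T \subseteq \mathcal{D}$, which form a basis of the space of utility functions, so by linearity $\psi$ is determined by the values $\psi(z^*; u_T, \mathcal{D})$. If $z^* \notin T$, then $z^*$ is a null player for $u_T$ with $c=0$, so $\psi(z^*;u_T,\mathcal{D})=0$. If $z^* \in T$, the symmetry axiom forces $\psi(z^*;u_T,\mathcal{D})$ to depend only on $|T|=:t$; call this common value $b_t$. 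Crucially, applying the null player axiom to the singleton game $u_{\{z^*\}}$, for which $u_{\{z^*\}}(S\cup\{z^*\})-u_{\{z^*\}}(S)=1$ for every $S$, gives $b_1 = 1$.

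I would then compute the marginal contribution of the same basis elements. A short counting argument shows $\Delta_j(z^*; u_T, \mathcal{D}) = 0$ when $z^*\notin T$, and for $z^*\in T$ with $|T|=t$,
\begin{align*}
    \Delta_j(z^*; u_T, \mathcal{D}) = \frac{1}{\binom{n-1}{j-1}}\binom{n-t}{j-t},
\end{align*}
since the nonzero summands are exactly those $S \in \mathcal{D}_j^{\backslash z^*}$ containing $T\backslash\{z^*\}$. Substituting into the right-hand side of \eqref{eqn:def_prob_data_shapley_value} yields the relation $b_t = \frac{1}{n}\sum_{j=t}^{n} w^{(n)}(j)\binom{n-t}{j-t}$ for $t=1,\dots,n$. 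This is a linear system in the unknowns $w^{(n)}(1),\dots,w^{(n)}(n)$ whose coefficient matrix $\bigl(\binom{n-t}{j-t}\bigr)_{t,j}$ is upper triangular with unit diagonal ($\binom{n-t}{0}=1$), hence invertible; so the weights $w^{(n)}$ are uniquely determined and the representation \eqref{eqn:def_prob_data_shapley_value} reproduces $\psi$ on every $u_T$, and thus on every $U$ by linearity. Finally, the normalization constraint is not an extra hypothesis but a consequence: the $t=1$ equation reads $b_1 = \frac{1}{n}\sum_{j=1}^{n}\binom{n-1}{j-1} w^{(n)}(j)$, and since $b_1=1$ we recover $\sum_{j=1}^{n}\binom{n-1}{j-1} w^{(n)}(j)=n$.

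The main obstacle is the ($\Rightarrow$) direction, specifically recognizing the right basis (unanimity games), computing $\Delta_j(z^*;u_T,\mathcal{D})$ correctly, and identifying that the resulting size-indexed system is triangular and therefore always solvable; the observation that the normalization is equivalent to $b_1=1$ (forced by the null player axiom on the singleton game) is the cleanest way to close the argument.
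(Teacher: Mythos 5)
Your proof is correct, but it takes a genuinely different route from the paper's. The paper handles the hard ($\Rightarrow$) direction by citing Theorem 3 of Dubey (1977): linearity plus the null-player axiom already yield a representation $\psi = \sum_{S \subseteq \mathcal{D}\backslash\{z^*\}} \mu^{(n)}(S)\,(U(S\cup\{z^*\})-U(S))$ with subset-indexed weights summing to one, and symmetry is then used (via the games $v_S(A)=\mathds{1}(S\subsetneq A)$ and a permutation swapping two equal-cardinality subsets) to collapse $\mu^{(n)}$ to a function of $|S|$ alone. You instead give a self-contained argument: expand $U$ in the unanimity basis $u_T$, use the null-player axiom to kill the terms with $z^*\notin T$ (and the constant component), use symmetry to reduce $\psi(z^*;u_T,\mathcal{D})$ to a function $b_t$ of $t=|T|$, compute $\Delta_j(z^*;u_T,\mathcal{D})=\binom{n-t}{j-t}/\binom{n-1}{j-1}$, and invert the resulting unitriangular system. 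Your computations check out (including the observation that the paper's calibrated null-player axiom applied to $u_{\{z^*\}}$ forces $b_1=1$, which is exactly the normalization $\sum_j\binom{n-1}{j-1}w^{(n)}(j)=n$), and your route buys two things the paper's does not make explicit: it avoids importing Dubey's theorem as a black box, and it shows the weight function $w^{(n)}$ is \emph{uniquely} determined by $\psi$. The only cosmetic gap is the claim that the $u_T$ form a basis of all utility functions: since the paper allows $U(\emptyset)\neq 0$ you should adjoin the constant game to the spanning set, but both $\psi$ (by the null-player axiom with $c=0$) and the right-hand side of \eqref{eqn:def_prob_data_shapley_value} vanish on constants, so nothing changes.
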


Theorem \ref{thm:semivalue_representation} shows that every semivalue $\psi_{\mathrm{semi}}$ can be expressed as a weighted mean of marginal contributions without the efficiency axiom. Compared to data Shapley, a semivalue provides flexible formulation of data valuation and includes various existing data valuation methods. For example, when $w^{(n)}(j) = \binom{n-1}{j-1} ^{-1}$, the semivalue $\psi_{\mathrm{semi}}(z^*; U, \mathcal{D}, w^{(n)})$ becomes the data Shapley \citep{ghorbani2019}, and when $w^{(n)}(j) = n \mathds{1}(j=n)$, and it reduces to the LOO method. Moreover, for any Borel probability measure $\xi$ defined on $[0,1]$, a function $w^{(n)}$ defined as
\begin{align}
    w^{(n)}(j) := n \int_{0} ^1 t^{j-1} (1-t)^{n-j} d\xi (t)
    \label{eqn:general_weights}
\end{align}
satisfies the condition $\sum_{j=1} ^{n} \binom{n-1}{j-1} w^{(n)}(j)=n$, and thus the corresponding function $\psi_{\mathrm{semi}}(z^*; U, \mathcal{D}, w^{(n)})$ is a semivalue by Theorem \ref{thm:semivalue_representation}. We provide a detailed proof in Appendix.

The semivalue expressed in  \eqref{eqn:def_prob_data_shapley_value} provides a unified data valuation framework, but loses the uniqueness of the weights. This can be a potential drawback, but the following proposition shows that a semivalue is unique up to the sum of data values.

\begin{proposition}
Let $\psi_1$ and $\psi_2$ be two semivalues such that for any $U$ the sum of data values are same, \textit{i.e.},
\begin{align*}
    \sum_{z\in \mathcal{D}} \psi_1 (z; U, \mathcal{D})=\sum_{z\in \mathcal{D}} \psi_2 (z; U, \mathcal{D}).
\end{align*}
Then, the two semivalues are identical, \textit{i.e.}, $\psi_1=\psi_2$.
\label{prop:uniqueness_semivalue}
\end{proposition}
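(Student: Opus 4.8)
The plan is to invoke Theorem~\ref{thm:semivalue_representation} to write $\psi_1 = \psi_{\mathrm{semi}}(\cdot\,; U, \mathcal{D}, w_1)$ and $\psi_2 = \psi_{\mathrm{semi}}(\cdot\,; U, \mathcal{D}, w_2)$ for weight functions $w_1, w_2 : [n] \to \mathbb{R}$, and to show that the hypothesis forces $w_1 = w_2$. Everything hinges on computing the sum of data values $\sum_{z \in \mathcal{D}} \psi_{\mathrm{semi}}(z; U, \mathcal{D}, w)$ as an explicit linear functional of $U$ and isolating its dependence on $w$.

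First I would evaluate $\sum_{z^* \in \mathcal{D}} \binom{n-1}{j-1}\Delta_j(z^*; U, \mathcal{D})$ by double counting. Using Definition~\ref{def:marginal_contrib} to expand $\binom{n-1}{j-1}\Delta_j(z^*; U, \mathcal{D})$ as $\sum_{S} [U(S \cup \{z^*\}) - U(S)]$ over $S \subseteq \mathcal{D}\backslash\{z^*\}$ with $|S| = j-1$, and then summing over $z^*$, each $T \subseteq \mathcal{D}$ with $|T| = j$ arises as $S \cup \{z^*\}$ in exactly $j$ ways, while each $S$ with $|S| = j-1$ is subtracted once for each of the $n-j+1$ points lying outside it. Writing $A_k := \sum_{T \subseteq \mathcal{D}, |T|=k} U(T)$, this yields
\begin{align*}
\sum_{z^* \in \mathcal{D}} \binom{n-1}{j-1}\Delta_j(z^*; U, \mathcal{D}) = j A_j - (n - j + 1) A_{j-1}.
\end{align*}
Substituting into \eqref{eqn:def_prob_data_shapley_value} and reindexing the shifted term, I would collect the coefficient of each $A_k$ to obtain
\begin{align*}
\sum_{z \in \mathcal{D}} \psi_{\mathrm{semi}}(z; U, \mathcal{D}, w) = -w(1)A_0 + \frac{1}{n}\sum_{k=1}^{n-1}\bigl[k\,w(k) - (n-k)\,w(k+1)\bigr]A_k + w(n)A_n.
\end{align*}

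Next comes the decisive observation: because $U$ ranges over all utility functions, the functionals $U \mapsto A_k$ for $k = 0, \dots, n$ are linearly independent --- given any target values $a_0,\dots,a_n$ one may set $U(T) = a_{|T|}/\binom{n}{|T|}$ so that $A_k = a_k$. Hence the assumption that the two sums agree for every $U$ forces the coefficient of each $A_k$ to coincide for $w_1$ and $w_2$. Setting $d(k) := w_1(k) - w_2(k)$, matching the $A_0$-coefficient gives $d(1) = 0$, and matching the $A_k$-coefficients for $1 \le k \le n-1$ gives the recursion $k\,d(k) = (n-k)\,d(k+1)$. Starting from $d(1) = 0$, a straightforward induction propagates $d(k) = 0$ to every $k \in [n]$, so $w_1 = w_2$ and therefore $\psi_1 = \psi_2$.

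The main obstacle I anticipate is purely bookkeeping: getting the double-counting identity for $\sum_{z^*}\binom{n-1}{j-1}\Delta_j(z^*; U, \mathcal{D})$ right and then carefully reindexing the shifted term $A_{j-1}$ so that the coefficients of $A_k$ are collected correctly. Once that linear-functional representation is in hand, the linear independence of the $A_k$ and the collapsing recursion are immediate, and the normalization $\sum_{j=1}^{n} \binom{n-1}{j-1} w(j) = n$ plays no role beyond guaranteeing that $w_1, w_2$ are genuine semivalue weights.
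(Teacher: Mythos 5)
Your proof is correct, and it takes a genuinely different route from the paper: the paper's entire proof is a citation to Theorem 10 of Monderer and Samet (2002), whereas you give a self-contained elementary argument. Your double-counting identity $\sum_{z^*}\binom{n-1}{j-1}\Delta_j(z^*;U,\mathcal{D})=jA_j-(n-j+1)A_{j-1}$ checks out (each $j$-subset $T$ is hit $j$ times as $S\cup\{z^*\}$, each $(j-1)$-subset is subtracted $n-j+1$ times), the collected coefficients of $A_0,\dots,A_n$ are right, and the linear independence of the functionals $U\mapsto A_k$ via the cardinality-only utilities $U(T)=a_{|T|}/\binom{n}{|T|}$ is exactly what is needed to match coefficients. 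The resulting recursion $k\,d(k)=(n-k)\,d(k+1)$ with $d(1)=0$ (or, even more robustly, with $d(n)=0$ from the $A_n$-coefficient and descending induction, which avoids any worry about whether $U(\emptyset)$ is allowed to vary) collapses to $d\equiv 0$. What your approach buys is transparency and independence from the external reference --- it makes visible that the sum of a semivalue's data values is the linear functional $-w(1)A_0+\frac{1}{n}\sum_k[kw(k)-(n-k)w(k+1)]A_k+w(n)A_n$, and that the normalization constraint on $w$ is irrelevant to uniqueness; what the paper's citation buys is brevity and an anchor to the established semivalue literature. One small caveat: your argument pins down the weight function in the representation of Theorem~\ref{thm:semivalue_representation}, which is slightly more than the proposition asks for, but this is a feature rather than a gap since $w_1=w_2$ immediately yields $\psi_1=\psi_2$.
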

Proposition \ref{prop:uniqueness_semivalue} shows that any two semivalues are identical if they have the same sum of data values. 
In other words, if there is a weight function that could reflect a practitioner's prior knowledge on the total sum of values, then the semivalue based on the weight function is unique.

\subsection{Beta Shapley: Efficient Semivalue}
\label{s:betavalue}
The exact computation of $w^{(n)}(j)$ in Equation \eqref{eqn:general_weights} can be expensive or infeasible due to the integral. To address this, we propose Beta Shapley that has a closed form for the weight. To be more specific, we consider the Beta distribution with a pair of positive hyperparameters $(\alpha, \beta)$ for $\xi$. Then, the weight can be expressed as
\begin{align*}
    w_{\alpha,\beta} ^{(n)}(j) &:= n \int_{0} ^1 t^{j-1} (1-t)^{n-j} \frac{t^{\beta-1} (1-t)^{\alpha-1}}{\mathrm{Beta}(\alpha,\beta)}  dt \\
    &= n\frac{ \mathrm{Beta}(j+\beta-1,n-j+\alpha)}{\mathrm{Beta}(\alpha,\beta)},
\end{align*}
where $\mathrm{Beta}(\alpha,\beta)= \Gamma(\alpha)\Gamma(\beta)/\Gamma(\alpha+\beta)$ is the Beta function and $\Gamma(\cdot)$ is the Gamma function. This can be further simplified as the following closed form.
\begin{align}
    w_{\alpha,\beta} ^{(n)}(j) = n \frac{ \prod_{k=1} ^{j-1} (\beta +k-1) \prod_{k=1} ^{n-j} (\alpha+k-1)}{\prod_{k=1} ^{n-1} (\alpha+\beta+k-1)}.
    \label{eqn:closed_form_beta_weight}
\end{align}
We propose to use $\psi_{\mathrm{semi}}(z^*; U, \mathcal{D}, w_{\alpha,\beta} ^{(n)})$ and call it Beta$(\alpha, \beta)$-Shapley value.

The pair of hyperparameters $(\alpha,\beta)$ decides the weight distribution on $[n]$. For instance, when $(\alpha,\beta)=(1,1)$, the normalized weight $\tilde{w}_{\alpha,\beta}^{(n)}(j) := \binom{n-1}{j-1} w_{\alpha,\beta} ^{(n)}(j) = 1$ for all $j \in [n]$, giving the uniform weight on marginal contributions, \textit{i.e.}, Beta(1,1)-Shapley $\psi_{\mathrm{semi}}(z^*; U, \mathcal{D}, w_{1,1} ^{(n)})$ is exactly the original data Shapley. Figure \ref{fig:weight_distribution} shows various weight distributions for different pairs of $(\alpha, \beta)$. For simplicity, we fix one of the hyperparameter to be one. When $\alpha \geq \beta=1$, the normalized weight assigns large weights on the small cardinality and remove noise from the large cardinality. Conversely, Beta(1,$\beta$) puts more weights on large cardinality and it approaches to the LOO as $\beta$ increases. 

\begin{figure}[t]
    \centering
    \includegraphics[width=0.85\columnwidth]{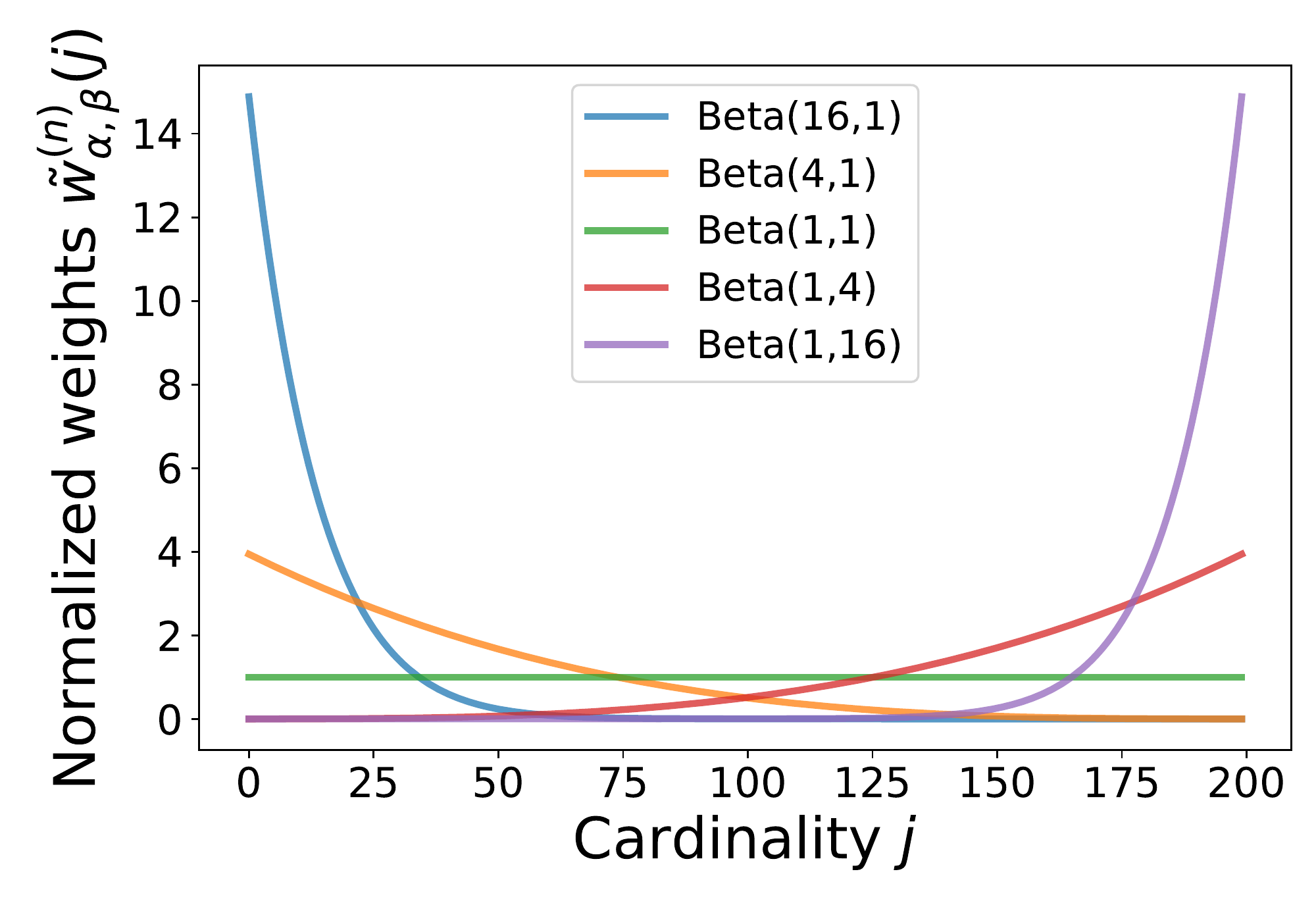}
    \caption{Illustration of the normalized weight $\tilde{w}_{\alpha,\beta}^{(n)}(j)$ for various pairs of $(\alpha,\beta)$ when $n=200$. Each color indicates a different hyperparameter pair $(\alpha,\beta)$.}
    \label{fig:weight_distribution}
\end{figure}

\paragraph{Beta($\alpha, \beta$)-Shapley for optimal subsampling weights}
When some data points in a given dataset are noisy or there are too many data points that could cause heavy computational costs, subsampling is an effective method to learn a model with a small number of high-quality samples. We show how Beta($\alpha, \beta$)-Shapley can be used to find the optimal importance weight that produces an estimator with the minimum variance in an asymptotic manner. We consider the Beta Shapley $\psi_{\mathrm{semi}}(z^*; h, \mathcal{D}, w_{\alpha,\beta} ^{(n)})$ for an M-estimator $h$. The M-estimator $h$ can include various estimators such as the maximum likelihood estimator \citep{van2000asymptotic}.

\begin{theorem}[Informal]
Let $\lambda_i$ be the importance weight for $i$-th sample $z_i$ and suppose $\lambda_i \propto \norm{\psi_{\mathrm{semi}}(z_i; h, \mathcal{D}, w_{\alpha,\beta} ^{(n)})}_2$ for some $\alpha \geq 1$. If we subsample based on the importance weight $\lambda_i$, then the M-estimator obtained by subsamples asymptotically achieves the minimum variance.
\label{thm:prob_shap_asymptotic}
\end{theorem}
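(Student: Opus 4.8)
The plan is to combine two ingredients: a classical characterization of the asymptotic variance of a subsampled M-estimator as a function of the sampling probabilities, and an asymptotic expansion showing that the Beta$(\alpha,\beta)$-Shapley value of an M-estimator is proportional to its influence function with a proportionality constant that does not depend on the data point. Writing $\pi_i \propto \lambda_i$ for the sampling probabilities, the first ingredient identifies the variance-minimizing $\pi_i^\star$, and the second shows that $\lambda_i \propto \norm{\psi_{\mathrm{semi}}(z_i; h, \mathcal{D}, w_{\alpha,\beta}^{(n)})}_2$ matches $\pi_i^\star$ up to a common scale, which is all a sampling distribution requires.

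For the first ingredient, I would model $h$ as the M-estimator $\hat\theta$ solving $\sum_{i} g(z_i,\theta)=0$ for an estimating function $g=\partial m/\partial\theta$. Drawing a subsample with inclusion probabilities $\pi_i$ and solving the inverse-probability-weighted equation $\sum_{i\in\mathrm{sub}} \pi_i^{-1}g(z_i,\theta)=0$ yields, under standard regularity conditions (a twice-differentiable $m$, a nonsingular matrix $M=\mathbb{E}[\partial g/\partial\theta]$, and finite second moments), an asymptotically normal estimator whose $\pi$-dependent variance has the sandwich form $V(\pi) \propto M^{-1}\big(\sum_i \pi_i^{-1}g_i g_i^\top\big)M^{-1}$ with $g_i=g(z_i,\theta_0)$. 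Minimizing $\mathrm{tr}\,V(\pi)$ subject to $\sum_i\pi_i=1$ gives $\mathrm{tr}\,V(\pi)\propto\sum_i\pi_i^{-1}\norm{M^{-1}g_i}_2^2$, and a Cauchy--Schwarz (equivalently, Lagrange-multiplier) argument yields the A-optimal rule $\pi_i^\star \propto \norm{M^{-1}g_i}_2 = \norm{\mathrm{IF}(z_i)}_2$, where $\mathrm{IF}(z_i)=-M^{-1}g_i$ is the influence function of $\hat\theta$ at $z_i$.

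For the second ingredient, I would use a first-order (von Mises) expansion of the M-estimator functional: adding $z^*$ to a set $S$ of size $j-1$ perturbs the estimator by $h(S\cup\{z^*\})-h(S)=\tfrac{1}{j}\mathrm{IF}(z^*)+R_j(z^*,S)$, so that $\Delta_j(z^*;h,\mathcal{D})=\tfrac{1}{j}\mathrm{IF}(z^*)+\bar R_j(z^*)$ to leading order. Using the Beta--Binomial representation $\binom{n-1}{j-1}w_{\alpha,\beta}^{(n)}(j)=n\,\Pr[J-1=j-1]$ with $J-1$ Beta-Binomially distributed, the Beta Shapley value becomes $\psi_{\mathrm{semi}}(z^*;h,\mathcal{D},w_{\alpha,\beta}^{(n)})=\mathbb{E}_J[\Delta_J(z^*;h,\mathcal{D})]=c_n\,\mathrm{IF}(z^*)+\mathbb{E}_J[\bar R_J(z^*)]$, where $c_n=\mathbb{E}_J[1/J]$ is the same for every data point. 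Hence, once the remainder is controlled, $\norm{\psi_{\mathrm{semi}}(z_i;h,\mathcal{D},w_{\alpha,\beta}^{(n)})}_2 = c_n\norm{\mathrm{IF}(z_i)}_2(1+o(1))$, so $\lambda_i\propto\norm{\mathrm{IF}(z_i)}_2\propto\pi_i^\star$, and subsampling with $\lambda_i$ attains the A-optimal asymptotic variance from the first ingredient.

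The main obstacle is controlling the remainder $\mathbb{E}_J[\bar R_J(z^*)]$ and showing it is asymptotically negligible relative to the leading term $c_n\,\mathrm{IF}(z^*)$, uniformly in $z^*$. The difficulty is that the linear approximation $h(S\cup\{z^*\})-h(S)\approx \tfrac1j\mathrm{IF}(z^*)$ degrades for small cardinalities, precisely where the Beta weights (with $\alpha\geq1$) place substantial mass; the $z^*$-dependent second-order terms must not accumulate into a contribution that fails to be collinear with $\mathrm{IF}(z^*)$, or the proportionality---and thus A-optimality---would be lost. I expect this to require quantitative bounds on the von Mises remainder $R_j$ (via smoothness and moment conditions on the estimating function and a uniform control of $\zeta_j/(j\zeta_1)$ in the spirit of Theorem~\ref{thm:u_stat_for_large_cardinality}), together with the condition $\alpha\geq1$ to tame the large-cardinality tail of the weight and keep $c_n$ and the remainder on the correct relative order.
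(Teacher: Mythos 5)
Your two-ingredient architecture matches the paper's proof: ingredient one (inverse-probability-weighted subsampling with weights proportional to $\norm{\mathcal{I}(z_i)}_2$ is A-optimal) is exactly what the paper imports as Theorem 2 of \citet{ting2018optimal}, stated there via Hadamard differentiability and the Horvitz--Thompson empirical measure rather than your sandwich-plus-Cauchy--Schwarz derivation; ingredient two (Beta Shapley equals a data-point-independent constant times the influence function, asymptotically) uses the same decomposition as the paper, and your $c_n=\mathbb{E}_J[1/J]$ is precisely the paper's $\gamma_{\alpha,\beta}^{(n)}/n$. The genuine gap is the step you explicitly flag as the ``main obstacle'' and leave open: showing that the small-cardinality terms, where $\Delta_j$ is \emph{not} close to $\frac{1}{j}\mathcal{I}(z^*)$, do not survive in the limit. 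Your proposed remedy --- uniform quantitative bounds on the von Mises remainder $R_j$ --- is not how the paper closes this gap and would be considerably harder. The paper instead writes $\psi_{\mathrm{semi}}(z^*;h,\mathcal{D},w_{\alpha,\beta}^{(n)})=\frac{\gamma_{\alpha,\beta}^{(n)}}{n}\sum_{j} a_{n,j}\,\psi_j(z^*,h)$ with $a_{n,j}=w_{\alpha,\beta}^{(n)}(j)\binom{n-1}{j-1}\frac{1}{j}/\gamma_{\alpha,\beta}^{(n)}$ nonnegative and summing to one, and invokes the Silverman--Toeplitz theorem: regularity of this summability method requires only that $a_{n,j}\to 0$ for each \emph{fixed} $j$, after which the weighted average inherits the limit $\psi_j\to\mathcal{I}(z^*)$ with no rate control on the remainder at all. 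The condition $a_{n,j}\to 0$ is then verified by explicit Beta/Gamma asymptotics: each numerator term is $O(n^{1-\beta})$ for fixed $j$, while $\gamma_{\alpha,\beta}^{(n)}$ is $O(1)$ for $\beta>1$, $O(\log n)$ for $\beta=1$, and $\Theta(n^{1-\beta})$ for $\beta<1$, so the ratio vanishes when $\beta\geq 1$ and not when $\beta<1$.

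This also corrects your reading of the parameter condition. The hypothesis that actually drives the proof is $\beta\geq 1$, as in the formal version (Theorem~\ref{thm:prob_shap_asymptotic_full}); the ``$\alpha\geq 1$'' in the informal statement is an inconsistency in the paper itself. Moreover its role is the opposite of ``taming the large-cardinality tail'': it guarantees that no fixed \emph{small} cardinality retains asymptotically positive normalized weight, i.e., the effective mass (after the extra $1/j$ factor and normalization by $\gamma_{\alpha,\beta}^{(n)}$) escapes to large $j$, which is exactly the regime where the linearization $\Delta_j\approx\frac{1}{j}\mathcal{I}(z^*)$ becomes exact. For $\beta<1$, a fixed small $j$ keeps a nonvanishing share of the weight and the limit need not be proportional to the influence function, so proportionality to $\pi_i^{\star}$ --- and hence A-optimality --- would indeed be lost, as you suspected.
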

We present a formal version of Theorem \ref{thm:prob_shap_asymptotic} with some technical conditions in Appendix.
This shows that the data value-based importance weight can be useful to capture the impact of one datum, and leads to the optimal estimator in the sense that it asymptotically achieves the minimum variance. 

\paragraph{Efficient estimation of Beta($\alpha, \beta$)-Shapley}
Although $w_{\alpha,\beta}^{(n)}(j)$ is easy-to-compute, the exact computation of Beta($\alpha, \beta$)-Shapley can be expensive because it requires an exponential number of model fittings. This could be a major challenge of using Beta($\alpha, \beta$)-Shapley in practice. To address this, we develop an efficient algorithm by adapting the Monte Carlo method \citep{maleki2013bounding, ghorbani2019}.

Beta($\alpha , \beta$)-Shapley can be expressed in the form of a weighted mean as follows.
\begin{align*}
    \frac{1}{n} \sum_{j=1} ^{n} \frac{1}{ |\mathcal{D}_{j} ^{\backslash z^*}| } \sum_{S \in \mathcal{D}_{j} ^{\backslash z^*} } \tilde{w}_{\alpha,\beta}^{(n)}(j) (h(S\cup \{z^*\})-h(S)).
\end{align*}
We note that $\tilde{w}_{\alpha,\beta}^{(n)}(j)=\binom{n-1}{j-1} w_{\alpha,\beta}^{(n)}(j)$ only needs to be calculated once using the closed form in Equation \eqref{eqn:closed_form_beta_weight}. As a result, it can be efficiently approximated by the Monte Carlo method: at each iteration, we first draw a number $j$ from the discrete uniform distribution from $[n]$, and randomly draw a subset $S$ is from a class of set $\mathcal{D}_{j} ^{\backslash z^*}$ uniformly at random. We then compute $\tilde{w}_{\alpha,\beta}^{(n)}(j) (h(S\cup \{z^*\})-h(S))$ and update the Monte Carlo estimates. We provide a pseudo algorithm in the Appendix.

\section{Numerical experiments}
\label{s:experiment}
In this section, we demonstrate the practical efficacy of Beta Shapley on various classification datasets. We conduct the three different ML tasks: noisy label detection, learning with subsamples, and point addition and removal experiments. We compare the eight methods: the \texttt{LOO-First} $\Delta_2 (z^*; U, \mathcal{D})$, the five variations of Beta$(\alpha,\beta)$-Shapley, the \texttt{LOO-Last} $\Delta_n (z^*; U, \mathcal{D})$ (which is the standard LOO), and the \texttt{KNN Shapley} proposed in \citet{jia2019b}. We use 15 standard datasets that are commonly used to benchmark classification methods and use a logistic regression classifier. Additional results with a support vector machine model and detailed information about experiment settings are provided in Appendix. 

\begin{table*}[t]
    \centering
    \caption{Comparison of mis-annotation detection ability of the eight data valuation methods on the fifteen classification datasets. The average and standard error of the F1-score are denoted by `average$\pm$standard error'. All the results are based on 50 repetitions. Boldface numbers denote the best method.}
    \resizebox{\textwidth}{!}{
    \begin{tabular}{l|cccccccc}
        \toprule
        Dataset & \texttt{LOO-First} & \texttt{Beta(16,1)} & \texttt{Beta(4,1)} & \texttt{Data Shapley} & \texttt{Beta(1,4)} & \texttt{Beta(1,16)} & \texttt{LOO-Last} & \texttt{KNN Shapley}\\
        \midrule
        Gaussian & $0.465\pm0.010$ & $\mathbf{0.470\pm0.010}$ & $0.454\pm0.012$ & $0.416\pm0.012$ & $0.255\pm0.012$ & $0.204\pm0.011$ & $0.147\pm0.013$ & $0.398\pm0.012$ \\
        Covertype & $0.324\pm0.011$ & $\mathbf{0.355\pm0.013}$ & $0.347\pm0.013$ & $0.337\pm0.012$ & $0.252\pm0.008$ & $0.236\pm0.008$ & $0.180\pm0.010$ & $0.278\pm0.012$ \\
        CIFAR10 & $0.252\pm0.011$ & $0.272\pm0.011$ & $\mathbf{0.276\pm0.011}$ & $0.272\pm0.012$ & $0.238\pm0.010$ & $0.213\pm0.008$ & $0.169\pm0.008$ & $0.259\pm0.010$ \\
        FMNIST & $0.487\pm0.012$ & $0.547\pm0.012$ & $\mathbf{0.555\pm0.011}$ & $0.523\pm0.013$ & $0.356\pm0.011$ & $0.271\pm0.011$ & $0.187\pm0.013$ & $0.484\pm0.017$ \\
        MNIST & $0.412\pm0.010$ & $0.482\pm0.011$ & $\mathbf{0.504\pm0.012}$ & $0.477\pm0.012$ & $0.345\pm0.011$ & $0.284\pm0.011$ & $0.203\pm0.013$ & $0.446\pm0.012$ \\
        Fraud & $\mathbf{0.623\pm0.009}$ & $0.591\pm0.016$ & $0.550\pm0.017$ & $0.427\pm0.022$ & $0.221\pm0.012$ & $0.233\pm0.015$ & $0.177\pm0.021$ & $0.491\pm0.013$ \\
        Apsfail & $0.624\pm0.015$ & $\mathbf{0.643\pm0.011}$ & $0.606\pm0.013$ & $0.494\pm0.019$ & $0.229\pm0.018$ & $0.236\pm0.017$ & $0.242\pm0.020$ & $0.483\pm0.014$ \\
        Click & $0.216\pm0.008$ & $\mathbf{0.218\pm0.009}$ & $0.214\pm0.009$ & $0.201\pm0.009$ & $0.174\pm0.009$ & $0.167\pm0.009$ & $0.140\pm0.011$ & $0.204\pm0.009$ \\
        Phoneme & $0.388\pm0.011$ & $0.409\pm0.011$ & $0.399\pm0.011$ & $0.350\pm0.012$ & $0.224\pm0.011$ & $0.192\pm0.011$ & $0.126\pm0.014$ & $\mathbf{0.446\pm0.011}$ \\
        Wind & $0.515\pm0.013$ & $\mathbf{0.521\pm0.015}$ & $0.515\pm0.016$ & $0.501\pm0.015$ & $0.248\pm0.011$ & $0.226\pm0.012$ & $0.163\pm0.016$ & $0.505\pm0.015$ \\
        Pol & $0.432\pm0.012$ & $0.451\pm0.011$ & $\mathbf{0.471\pm0.011}$ & $0.461\pm0.012$ & $0.229\pm0.010$ & $0.210\pm0.010$ & $0.184\pm0.015$ & $0.446\pm0.014$ \\
        Creditcard & $0.268\pm0.010$ & $\mathbf{0.270\pm0.010}$ & $0.265\pm0.012$ & $0.238\pm0.012$ & $0.194\pm0.011$ & $0.180\pm0.011$ & $0.152\pm0.010$ & $0.259\pm0.010$ \\
        CPU & $\mathbf{0.659\pm0.013}$ & $0.638\pm0.012$ & $0.613\pm0.014$ & $0.555\pm0.021$ & $0.281\pm0.018$ & $0.250\pm0.015$ & $0.212\pm0.021$ & $0.559\pm0.012$ \\
        Vehicle & $0.448\pm0.015$ & $0.469\pm0.015$ & $\mathbf{0.484\pm0.016}$ & $0.456\pm0.014$ & $0.360\pm0.013$ & $0.287\pm0.012$ & $0.217\pm0.015$ & $0.310\pm0.014$ \\
        2Dplanes & $0.518\pm0.009$ & $0.526\pm0.012$ & $0.505\pm0.012$ & $0.460\pm0.013$ & $0.278\pm0.013$ & $0.240\pm0.012$ & $0.177\pm0.018$ & $\mathbf{0.568\pm0.014}$ \\
        \midrule
        Average& 0.442& $\mathbf{0.458}$ & 0.451& 0.411& 0.259& 0.228& 0.178& 0.409\\
        \bottomrule
    \end{tabular}}
    \label{tab:noisy_label_detection}
\end{table*}

\subsection{Noisy label detection}
We first investigate the detection ability of Beta Shapley. As for detection rules, we use a clustering-based procedure as the number of mislabeled data points and the threshold for detecting noisy samples are usually unknown in practice. Specifically, we first divide all data values into two clusters using the K-Means clustering algorithm \citep{arthur2007k} and then classify a data point as a noisy sample if its value is less than the minimum of the two cluster centers. After this selection procedure, the F1-score is evaluated as a performance metric. We consider the two different types of label noise: synthetic noise and real-world label noise. As for the synthetic noise, we generate noisy samples by flipping labels for a random 10\% of training data points.

\paragraph{Synthetic noise}
Table~\ref{tab:noisy_label_detection} shows the F1-score of the eight data valuation methods. \texttt{Beta(16,1)}, which assigns larger weights to small cardinality, outperforms other variations of Beta Shapley as well as the baseline data valuation methods. In contrast, \texttt{Beta(1,4)} or \texttt{LOO-Last} perform much worse than other methods because they focus heavily on large cardinalities. \texttt{LOO-First} $\Delta_2 (z^*; U, \mathcal{D})$, which only considers the first marginal contribution, often suffers from a failure of training due to a small number of samples, and as a result, it performs worse than the \texttt{Beta(16,1)} and \texttt{Beta(4,1)} methods. This shows that focusing too much on only small cardinality can also degrade performance. 
We further compare Beta Shapley with the uncertainty-based method proposed by \citet{northcutt2021confident}. Although this method is not intended for data valuation, it is a state-of-the-art method of noise label detection and achieves average $F_1 = 0.42$ across the 15 datasets. This score is worse than \texttt{Beta(16,1)} or \texttt{Beta(4,1)}, showing \texttt{Beta(16,1)} Shapley values are very competitive in identifying mislabeled data points.

\begin{figure}[t]
    \centering
    \includegraphics[width=\columnwidth]{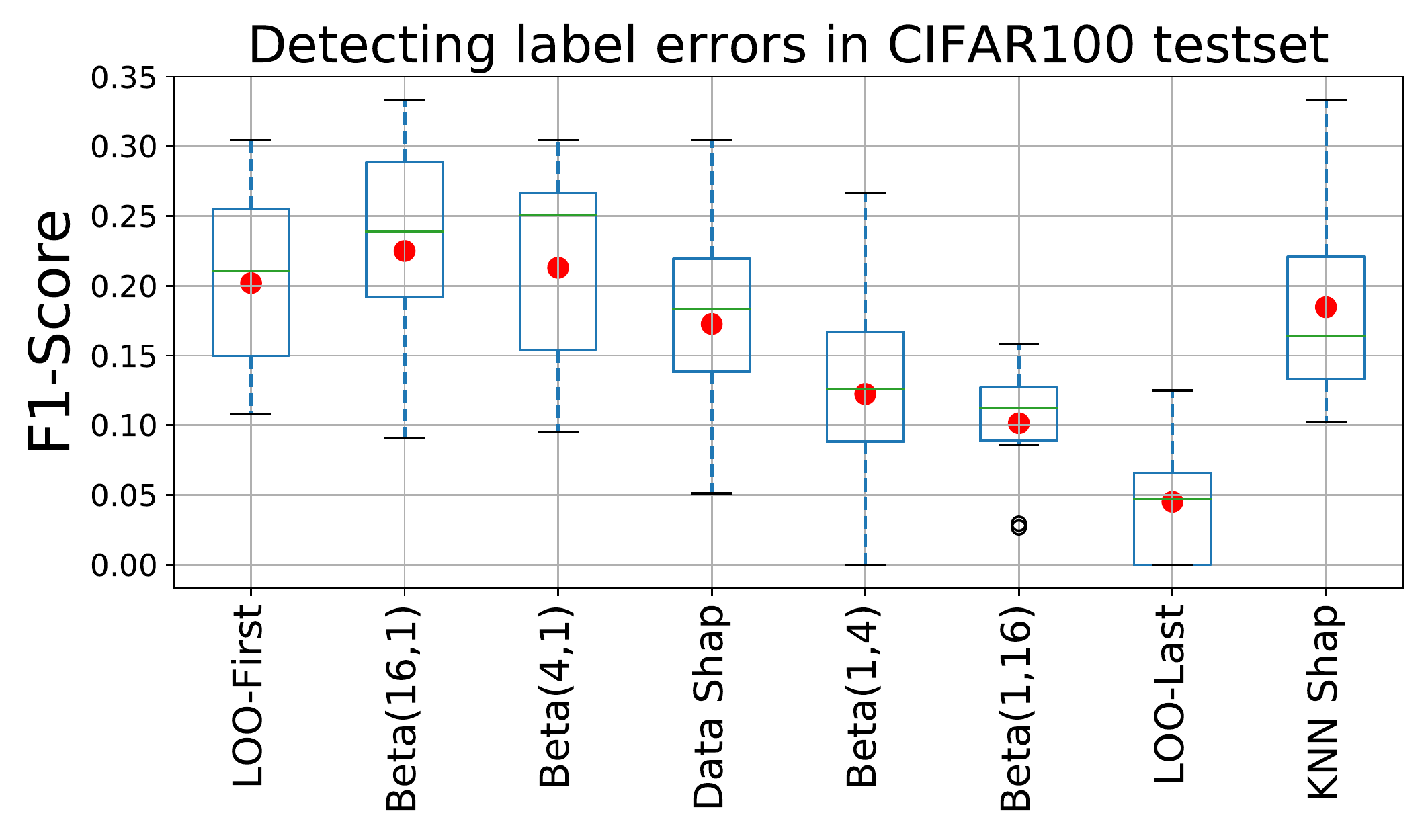}
    \caption{A boxplot of the F1-score of the data valuation methods on the CIFAR100 test dataset. The red dot indicates the mean of the F1-score. Beta Shapley that focuses on small cardinality detects mislabeled data points better than other baseline methods.}
    \label{fig:cifar100_boxplot}
\end{figure}

\begin{figure}[t]
    \centering
    \includegraphics[width=\columnwidth]{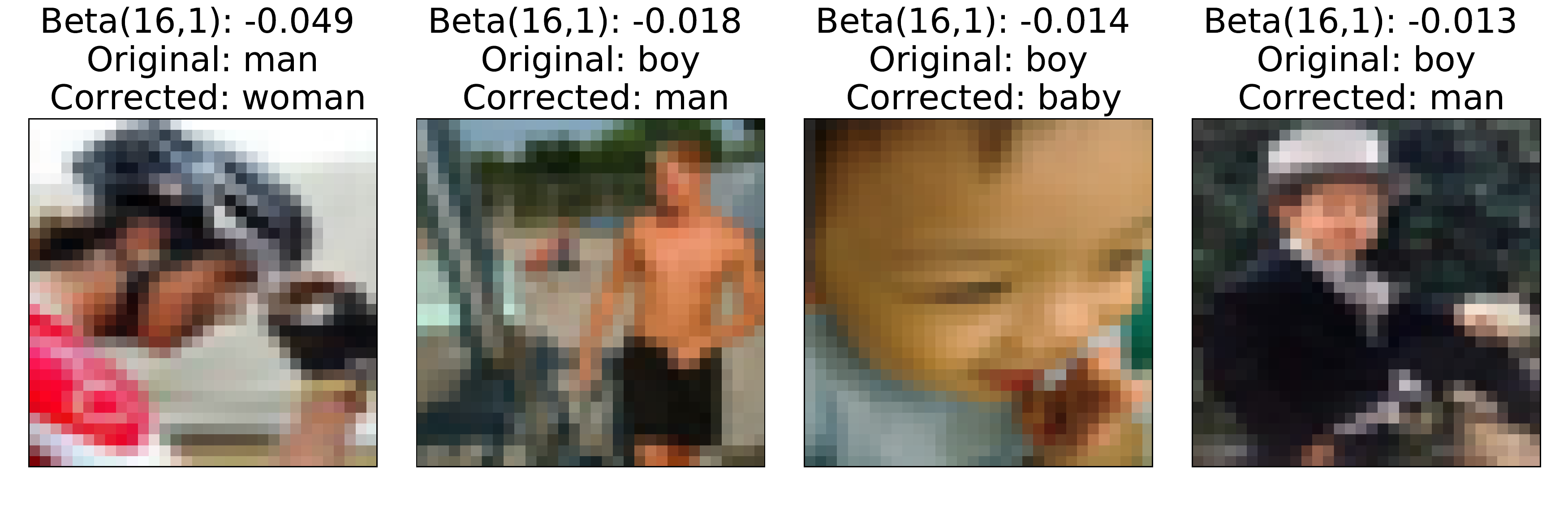}
    \includegraphics[width=\columnwidth]{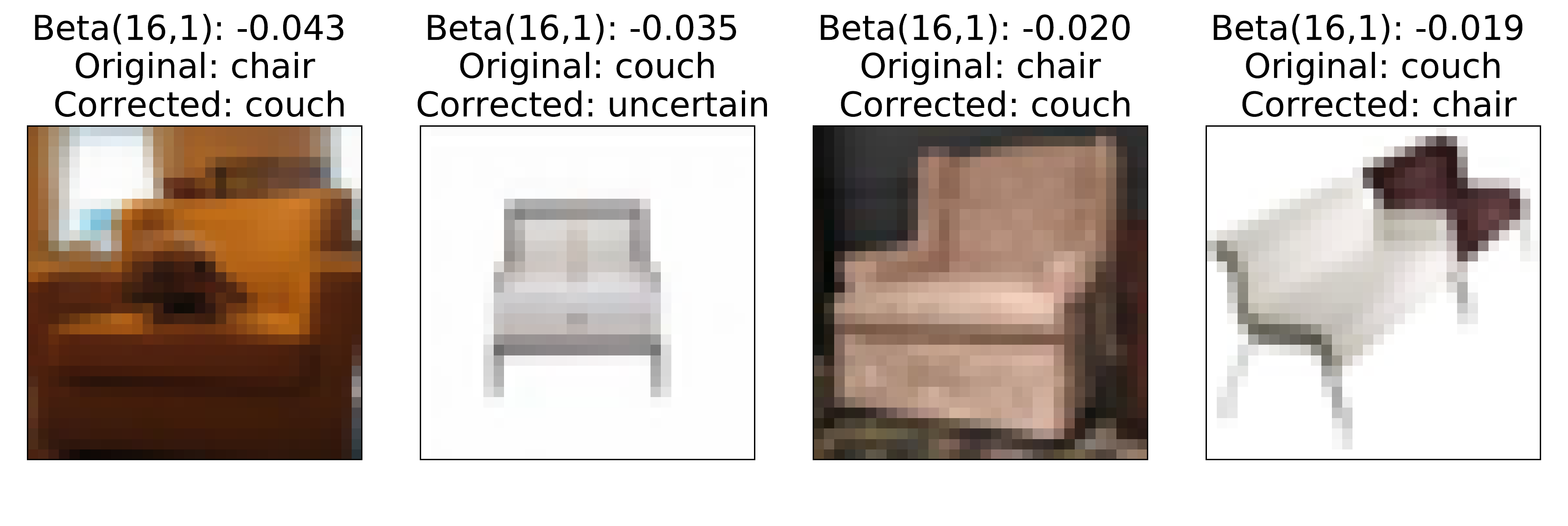}
    \caption{Examples of mislabeled images in the CIFAR100 test dataset. The corrected label suggested by \citet{northcutt2021pervasive} is provided for comparison. \texttt{Beta(16,1)} values for the mislabeled samples are negative, meaning that this type of labeling error can harm the model.}
    \label{fig:cifar_examples}
\end{figure}

\paragraph{Real-world label noise} 
We next apply data valuation methods to detect real-world label errors in the CIFAR100 test dataset. \citet{northcutt2021pervasive} estimated 5.85\% of the images in the CIFAR100 test dataset were indeed mislabeled. We choose the ten most confusing class pairs in the CIFAR100 test dataset. For each pair of classes, we detect mislabeled points in a binary classification setting. 
Figure~\ref{fig:cifar100_boxplot} shows a boxplot of the F1-score of the different data valuation methods. \texttt{Beta(16,1)} achieves $0.225$ and outperforms other methods.
For the three pairs of classes with the largest number of mislabels---(willow tree, maple tree), (pine tree, oak tree), (oak tree, maple tree)---we also compared the detection performance of \texttt{Beta(16,1)} with the state-of-the-art uncertainty-based method proposed in \citet{northcutt2021confident}. The uncertainty-based method was developed in the same paper that identified the CIFAR100 misannotations, so it is a strong benchmark. \texttt{Beta(16,1)} and the uncertainty-based methods achieve 0.307 and 0.273 F1-score, respectively, showing \texttt{Beta(16,1)} is effective in identifying real-world label errors. 
Figure~\ref{fig:cifar_examples} shows representative examples of mislabeled images and Beta Shapley rightfully assigned negative values for them.
 
\begin{table*}[t]
    \centering
    \caption{Accuracy comparison of models trained with subsamples. We compare the eight data valuation methods on the fifteen classification datasets. The \texttt{Random} denotes learning a model with subsamples drawn uniformly at random. The average and standard error of classification accuracy are denoted by `average$\pm$standard error'. All the results are based on 50 repetitions. Boldface numbers denote the best method.}
    \resizebox{\textwidth}{!}{
    \begin{tabular}{lccccccccccccccccc}
        \toprule
        Dataset & \texttt{LOO-First} & \texttt{Beta(16,1)}& \texttt{Beta(4,1)}& \texttt{Data Shapley} & \texttt{Beta(1,4)}& \texttt{LOO-Last}& \texttt{KNN Shapley} & \texttt{Random}\\
        \midrule
        Gaussian & $0.763\pm0.003$ & $\mathbf{0.765\pm0.002}$ & $0.760\pm0.003$ & $0.732\pm0.005$ & $0.598\pm0.008$ & $0.569\pm0.015$ & $0.749\pm0.005$ & $0.727\pm0.007$ \\
        Covertype & $0.645\pm0.005$ & $0.661\pm0.005$ & $\mathbf{0.670\pm0.005}$ & $0.661\pm0.004$ & $0.607\pm0.007$ & $0.567\pm0.008$ & $0.635\pm0.006$ & $0.636\pm0.006$ \\
        CIFAR10 & $0.625\pm0.004$ & $\mathbf{0.628\pm0.003}$ & $0.624\pm0.003$ & $0.617\pm0.003$ & $0.575\pm0.004$ & $0.553\pm0.006$ & $0.580\pm0.004$ & $0.582\pm0.005$ \\
        FMNIST & $0.823\pm0.004$ & $\mathbf{0.842\pm0.003}$ & $0.840\pm0.004$ & $0.830\pm0.003$ & $0.726\pm0.008$ & $0.614\pm0.012$ & $0.801\pm0.006$ & $0.752\pm0.007$ \\
        MNIST & $0.762\pm0.004$ & $\mathbf{0.773\pm0.004}$ & $0.770\pm0.003$ & $0.753\pm0.004$ & $0.673\pm0.006$ & $0.607\pm0.009$ & $0.725\pm0.005$ & $0.702\pm0.006$ \\
        Fraud & $0.883\pm0.003$ & $0.881\pm0.003$ & $0.883\pm0.002$ & $0.873\pm0.006$ & $0.567\pm0.019$ & $0.637\pm0.037$ & $\mathbf{0.886\pm0.003}$ & $0.866\pm0.004$ \\
        Apsfail & $0.866\pm0.003$ & $0.877\pm0.003$ & $\mathbf{0.878\pm0.003}$ & $0.870\pm0.004$ & $0.671\pm0.023$ & $0.477\pm0.034$ & $0.864\pm0.003$ & $0.858\pm0.003$ \\
        Click & $0.566\pm0.004$ & $\mathbf{0.567\pm0.003}$ & $0.566\pm0.003$ & $0.561\pm0.004$ & $0.535\pm0.004$ & $0.520\pm0.004$ & $0.551\pm0.004$ & $0.538\pm0.005$ \\
        Phoneme & $0.741\pm0.002$ & $\mathbf{0.744\pm0.003}$ & $0.743\pm0.002$ & $0.738\pm0.003$ & $0.581\pm0.009$ & $0.567\pm0.017$ & $0.727\pm0.004$ & $0.712\pm0.005$ \\
        Wind & $0.801\pm0.003$ & $0.804\pm0.002$ & $0.809\pm0.003$ & $0.796\pm0.004$ & $0.569\pm0.013$ & $0.549\pm0.028$ & $\mathbf{0.811\pm0.003}$ & $0.800\pm0.003$ \\
        Pol & $0.750\pm0.004$ & $0.731\pm0.004$ & $0.748\pm0.004$ & $0.746\pm0.005$ & $0.543\pm0.013$ & $0.532\pm0.018$ & $\mathbf{0.762\pm0.006}$ & $0.734\pm0.005$ \\
        Creditcard & $0.625\pm0.003$ & $0.632\pm0.003$ & $\mathbf{0.637\pm0.003}$ & $0.632\pm0.004$ & $0.571\pm0.006$ & $0.528\pm0.006$ & $0.595\pm0.004$ & $0.584\pm0.007$ \\
        CPU & $0.848\pm0.004$ & $0.870\pm0.003$ & $\mathbf{0.872\pm0.004}$ & $0.862\pm0.004$ & $0.628\pm0.015$ & $0.545\pm0.029$ & $0.862\pm0.004$ & $0.858\pm0.004$ \\
        Vehicle & $0.754\pm0.005$ & $0.770\pm0.003$ & $\mathbf{0.772\pm0.004}$ & $0.761\pm0.005$ & $0.675\pm0.009$ & $0.592\pm0.010$ & $0.729\pm0.005$ & $0.728\pm0.007$ \\
        2Dplanes & $0.802\pm0.003$ & $\mathbf{0.806\pm0.002}$ & $0.803\pm0.003$ & $0.796\pm0.003$ & $0.631\pm0.009$ & $0.615\pm0.018$ & $0.777\pm0.005$ & $0.755\pm0.006$ \\
        \midrule
        Average& 0.750& $\mathbf{0.757}$& $\mathbf{0.757}$& 0.749& 0.612& 0.564& 0.735& 0.722\\
        \bottomrule
    \end{tabular}}
    \label{tab:subsampling}
\end{table*}

\subsection{Learning with subsamples}
We now examine how the data value-based importance weight can be applied to subsample data points. We train a model with $25\%$ of the given dataset by using the importance weight $\max ( \psi_{\mathrm{semi}}(z_i; U, \mathcal{D}, w_{\alpha,\beta} ^{(n)}),0 )$ for the $i$-th sample. With this importance weight, data points with higher values are more likely to be selected and data points with negative values are not used. We train a classifier to minimize the weighted loss then evaluate the accuracy on the held-out test dataset.
As Table~\ref{tab:subsampling} shows, \texttt{Beta(4,1)} shows the best overall performance, but \texttt{Beta(16,1)} also shows very similar performance. \texttt{Beta(1,4)} and \texttt{LOO-Last} perform worse than uniform sampling, suggesting that the marginal contributions at large cardinality are not useful to capture the importance of data.

\begin{figure}[t]
    \centering
    \includegraphics[width=0.47\columnwidth]{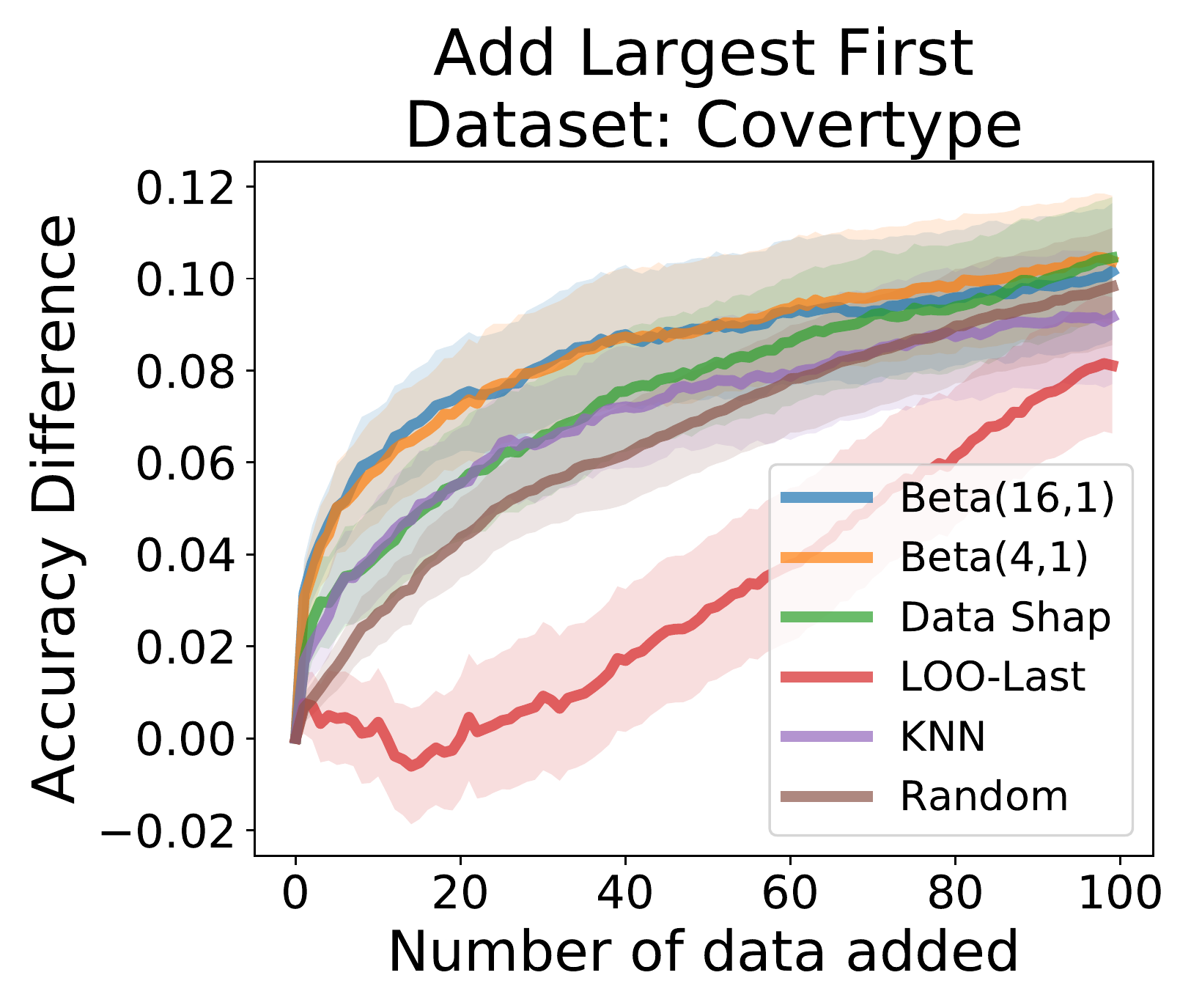}
    \includegraphics[width=0.47\columnwidth]{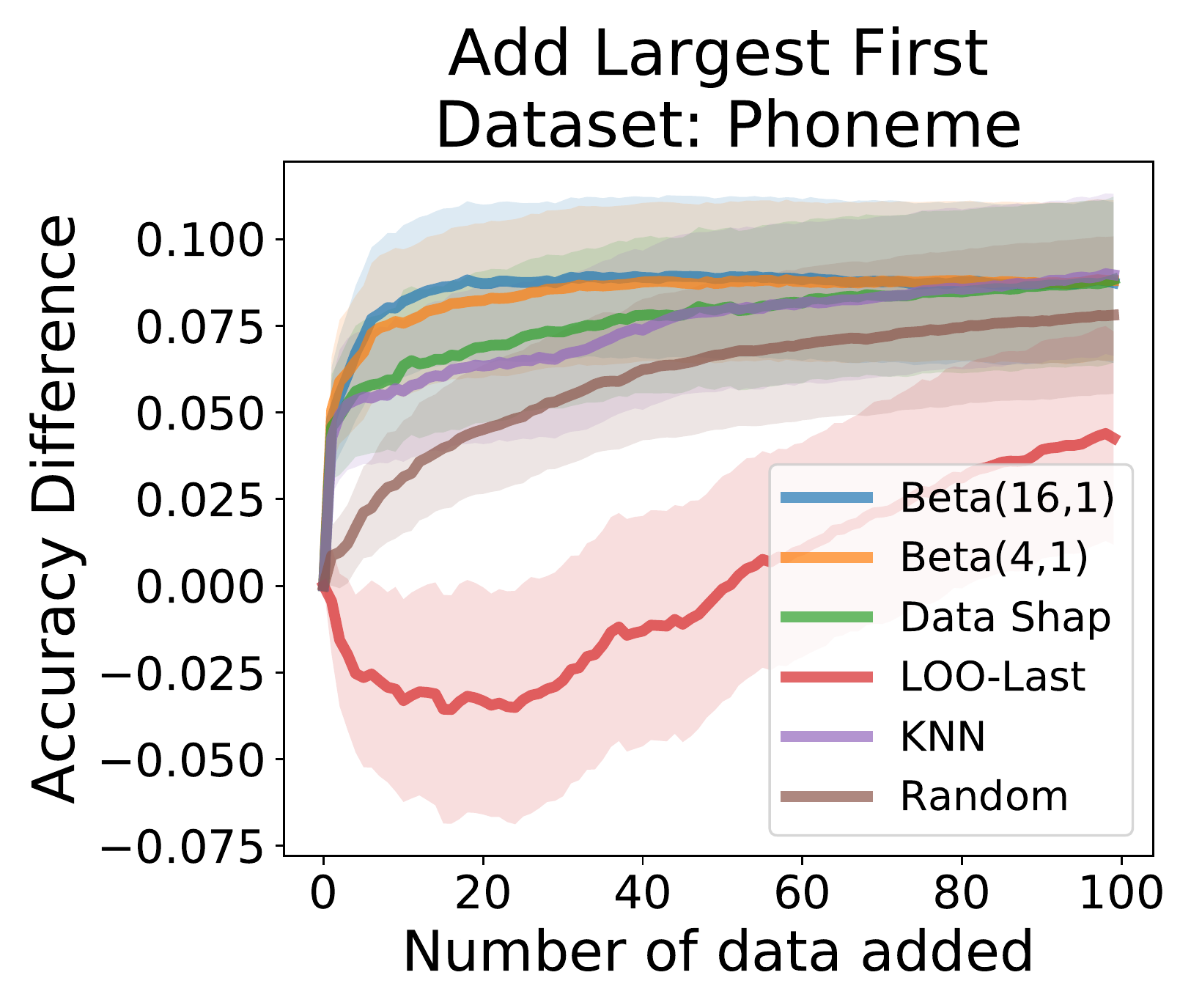}\\
    \includegraphics[width=0.47\columnwidth]{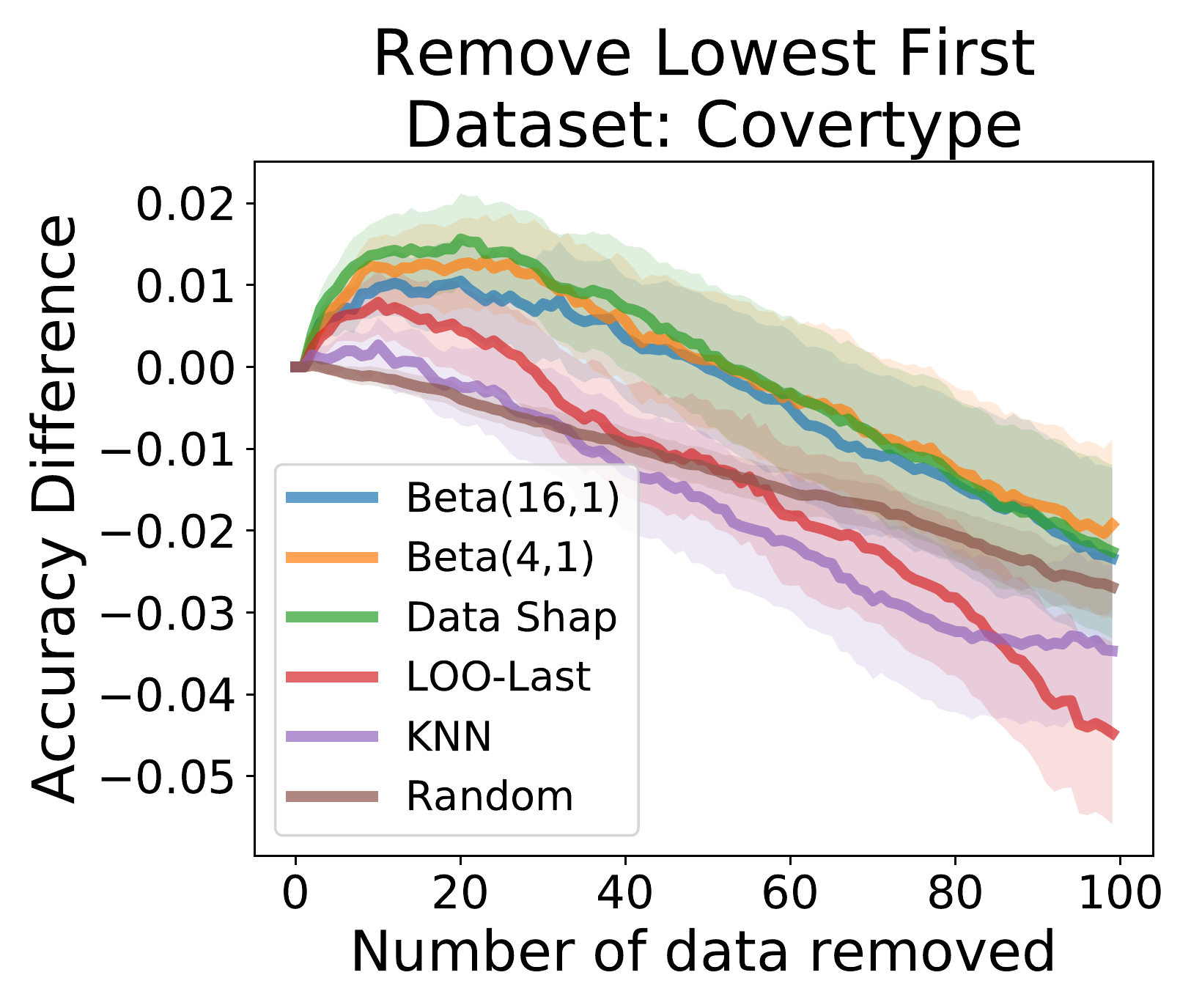}
    \includegraphics[width=0.47\columnwidth]{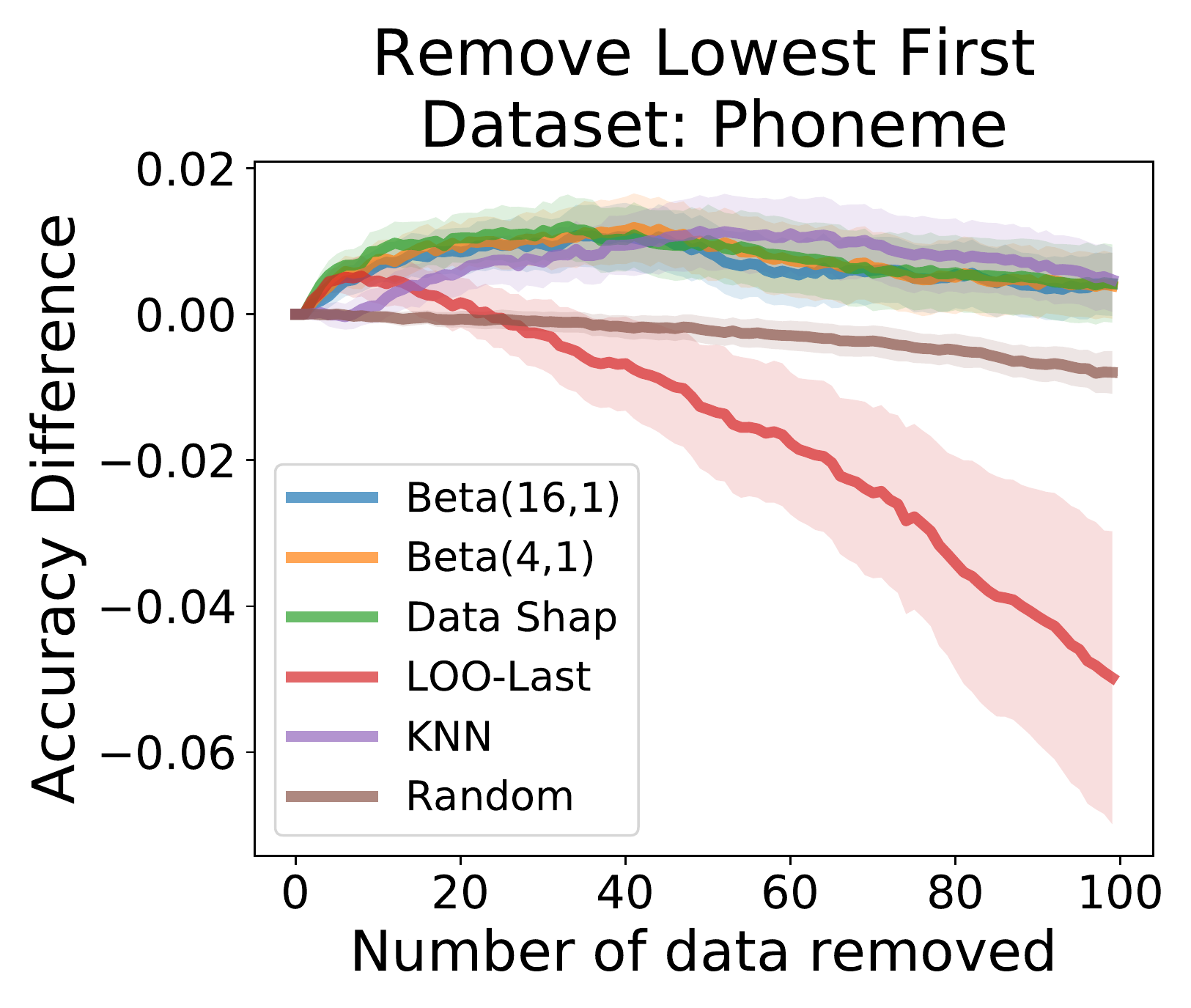}
    \caption{Accuracy difference as a function of the number of data points (top) added or (bottom) removed. We add (\textit{resp.} remove) data points whose value is large (\textit{resp.} small) first. We denote a 95\% confidence band based on 50 repetitions. We provide additional results on different datasets in Appendix. }
    \label{fig:point_addition_removal_experiment}
\end{figure}

\begin{figure}[t]
    \centering
    \hspace{-0.3in}
    \includegraphics[width=\columnwidth]{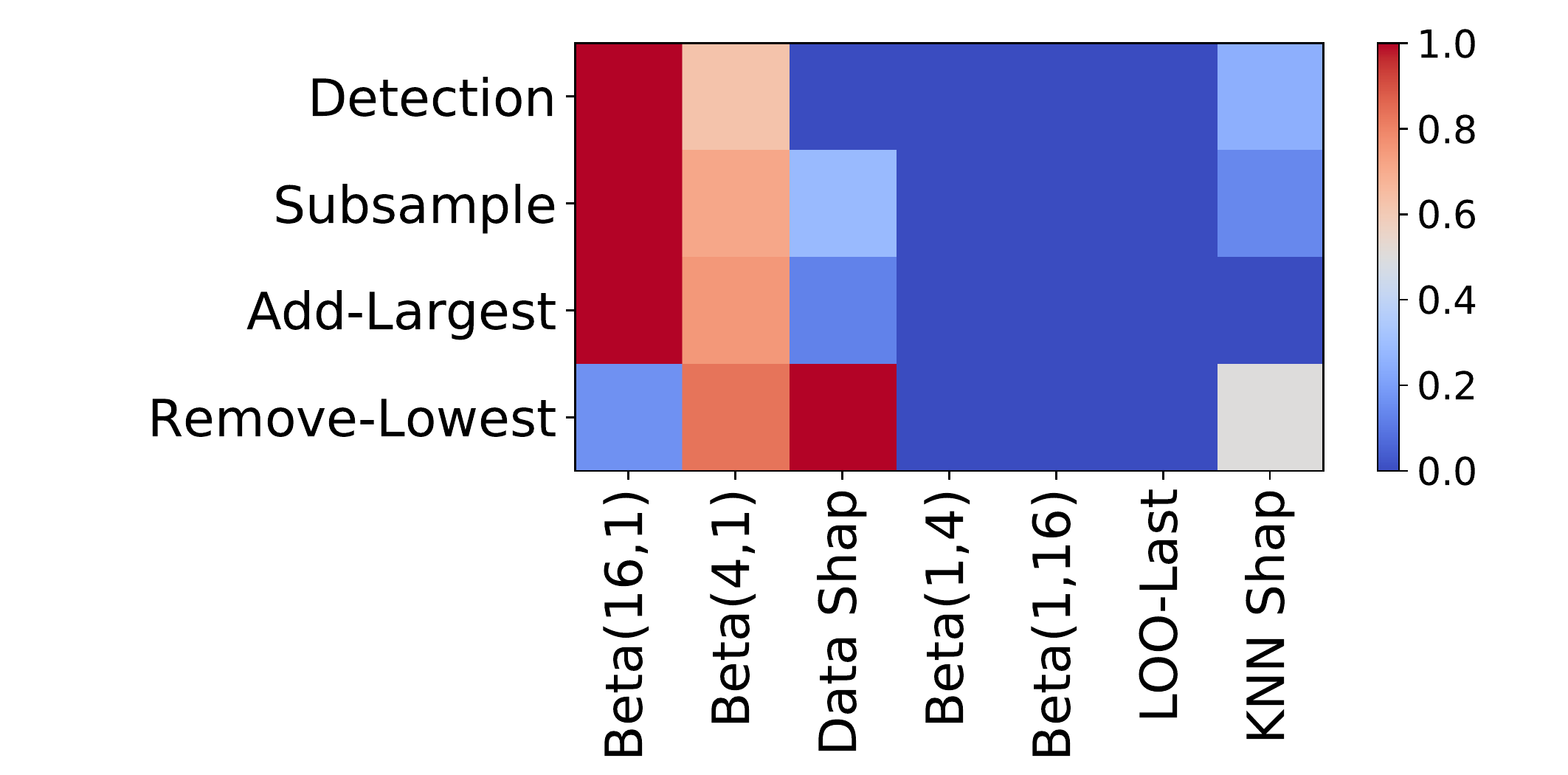}
    \caption{A summary of performance comparison on the fifteen datasets. Each element of the heatmap represents a linearly scaled frequency for each task to be between 0 and 1. Better and worse methods are depicted in red and blue respectively.} 
    \label{fig:heatmap_summary_count}
\end{figure}

\subsection{Point addition and removal experiments}
We now conduct point addition and removal experiments which are used to evaluate previous data valuation methods \citet{ghorbani2019}. For point addition experiments, we add data points from largest to lowest values because adding helpful data points first is expected to increase performance (similar to active learning setup). For the removal experiments, we remove data points from lowest to largest values because it is desirable to remove harmful or noisy data points first to increase model performance. At each step of addition or removal, we retrain a model with the current dataset and evaluate the accuracy changes on the held-out test dataset.
For point addition, \texttt{Beta(16,1)} shows the most rapid gain from identifying valuable points by putting more weights on small cardinality marginals ( Figure~\ref{fig:point_addition_removal_experiment}).
For removal, data Shapley performs slightly better than other methods. This is because we remove data points from the entire dataset, so the uniform weight can capture the effect of large cardinality parts better than other methods.

Finally, we summarize all of our experiments in a heatmap (Figure~\ref{fig:heatmap_summary_count}). For each ML task and data valuation method, we count the number of datasets where the method is the best performer and linearly transform it to be between 0 and 1. \texttt{Beta(16,1)}, which focuses on small cardinalities, is consistently the best method on the detection, subsampling, and point addition tasks. Data Shapley, which is \texttt{Beta(1,1)} and puts equal weights on all cardinalities, is the top performer in the point removal task. 

\section{Concluding remarks}
This work develops Beta Shapley to unify and extend popular data valuation methods like LOO and Data Shapley. Beta Shapley has desirable statistical and computational properties. We find that marginal contributions based on small cardinality are likely to have larger signal-to-noise, which is why \texttt{Beta(16,1)} works well in many settings. Our extensive experiments show that Beta Shapley that weighs small cardinalities more (e.g. \texttt{Beta(16,1)}) outperforms Data Shapley, LOO and other state-of-the-art methods. 

There are many interesting future works in this area. Our sampling-based algorithm provides an efficient implementation of data valuation, but the development of scalable algorithms that can be applied to a large-scale dataset is critical for the practical use of data values in practice. As an orthogonal direction, Beta Shapley opens up a new question about how to define and obtain the optimal weight representing data values. We believe it can depend on several factors including ML task or data distribution. 

\section*{Acknowledgment}
This research is supported by funding from Stanford AI Lab, Chan-Zuckerberg Biohub and the NSF CAREER \#1942926.

\bibliographystyle{apalike}
\bibliography{ref}

\onecolumn
\aistatstitle{Appendix of Beta Shapley: a Unified and Noise-reduced Data Valuation Framework for Machine Learning}

\vspace{-0.6in}

\appendix

In Appendix, we provide implementation details in Section~\ref{app:imple_details}, additional explanations in Section~\ref{app:missing_details}, and proofs in Section~\ref{app:proofs}. In addition, we provide additional numerical results and demonstrate robustness of our results against different datasets and a model using a support vector machine in Section~\ref{app:add_exp}. Our implementation codes are available at \url{https://github.com/ykwon0407/beta_shapley}.

\section{Implementation details}
\label{app:imple_details}

\paragraph{The proposed algorithm}
We propose to use the sampling-based Monte Carlo (MC) method to approximate Beta Shapley value: at each iteration, we first draw a number $k$ from the discrete uniform distribution from $[n]$, and randomly draw a subset $S$ is from a class of set $\mathcal{D}_{k} ^{\backslash z^*}$ uniformly at random. We then compute $\tilde{w}_{\alpha,\beta}^{(n)}(k) (h(S\cup \{z^*\})-h(S))$ and update the MC estimates. As for the utility computation, we can use a held-out validation set of samples from $P_{Z}$.

\paragraph{Accuracy of the proposed algorithms}
The propose algorithm is based on the MC method, and thus it guarantees to converge to the true value if we repeat the sampling procedure. In our experiments, we stop the sampling procedure when the new increment is small enough compared to the current MC estimate. To do this, we evaluate the Gelman-Rubin statistic for data values, which is well known for one of the most popular convergence diagnostic methods \citep[Equation (4)]{vats2021revisiting}. We set the number of Markov chains as 10 and terminate the sampling procedure if the Gelman-Rubin statistic for all data values is less than $1.0005$ to ensure accurate approximation, which is much less than a typical terminating threshold $1.1$ \citep{gelman1995bayesian}. We provide a pseudo algorithm in Algorithm \ref{alg:Beta_shapley}.

\begin{algorithm}[h]
\caption{Efficient computation algorithm for Beta$(\alpha,\beta)$-Shapley}
\begin{algorithmic}
\Require A set to be valued $\mathcal{D}=\{z_1, \dots, z_n\}$. A utility function $U$. A terminating threshold $\rho$ (in our experiment $\rho=1.0005$).
\Procedure{}{}
\State Initialize $\hat{\rho}=2\rho$, $B=1$, $\nu(j) = 0$ for all $j\in[n]$.
\State Compute $\tilde{w}_{\alpha,\beta}^{(n)} (j) =\binom{n-1}{j-1} w_{\alpha,\beta} ^{(n)}(j)$ for all $j \in [n]$.
\While{$\hat{\rho} \geq \rho$}
\For{$j \in [n]$}
\State Sample $k$ a uniform distribution from $[n]$.
\State Sample $S \in \mathcal{D}_{k} ^{\backslash z_{j}}$ uniformly at random.
\State Update $\nu(j) \leftarrow \frac{B-1}{B}\nu(j) +  \frac{1}{B}\tilde{w}_{\alpha,\beta}^{(n)} (k)(U(S\cup\{z_{j}\})-U(S))$
\EndFor
\State Update the Gelman-Rubin statistic $\hat{\rho}$. 
\State $B \leftarrow B + 1$.
\EndWhile
\EndProcedure
\end{algorithmic}
\label{alg:Beta_shapley}
\end{algorithm}

\begin{table}[h]
    \centering
    \caption{A summary of datasets used in numerical experiments.}
    \begin{tabular}{lcccccccccccc}
    \toprule
    Dataset & Sample size  & Input dimension & Source    \\ 
    \midrule
    \texttt{Gaussian} & 50000  & 5  & Synthetic dataset \\
    \texttt{Covertype} & 581012 & 54 & \citet{blackard1998comparison}  \\
    \texttt{CIFAR10} & 60000  & 32 & \citet{krizhevsky2009learning} \\
    \texttt{Fashion-MNIST} & 60000  & 32 & \citet{xiao2017fashion} \\
    \texttt{MNIST} & 60000  & 32  & \citet{lecun2010mnist} \\
    \texttt{Fraud} & 284807 & 31 & \citet{dal2015calibrating} \\
    \texttt{Creditcard} & 30000 & 24  & \citet{yeh2009comparisons} \\
    \texttt{Vehicle} & 98528 & 101  &  \citet{duarte2004vehicle}  \\
    \texttt{Apsfail} & 76000 & 171 & \url{https://www.openml.org/d/41138} \\
    \texttt{Click} & 1997410 & 12 &  \url{https://www.openml.org/d/1218} \\
    \texttt{Phoneme} & 5404 & 6 & \url{https://www.openml.org/d/1489}  \\
    \texttt{Wind} & 6574 & 15 &  \url{https://www.openml.org/d/847}  \\
    \texttt{Pol} & 15000 & 49 & \url{https://www.openml.org/d/722} \\
    \texttt{CPU} & 8192 & 22 & \url{https://www.openml.org/d/761} \\
    \texttt{2DPlanes} & 40768 & 11 & \url{https://www.openml.org/d/727} \\
    \bottomrule
    \end{tabular}
    \label{tab:summary_real_datasets}
\end{table}

\paragraph{Datasets used in Figure~\ref{fig:snr} of the manuscript}
We use the two synthetic datasets, regression and classification settings. As for the regression dataset, we generate input data from a 10-dimensional multivariate Gaussian with zero mean and the identity covariance matrix, \textit{i.e.}, $x_i \sim \mathcal{N}(0, I_{10})$. The output is generated as $y_i = x_i ^T \beta_0 + \varepsilon_i$, where $\beta_0 \sim \mathcal{N}(0, I_{10})$ and $\varepsilon_i \sim \mathcal{N}(0, 1)$. As for the classification dataset, we generate input data $x_i \sim \mathcal{N}(0, I_3)$. For outputs, we draw from a Bernoulli distribution $y_i = \mathbf{Bern}(\pi_i)$ for all $i \in [n]$. Here $\pi_i := \exp(x_i ^T \beta)/(1+\exp(x_i ^T \beta))$ for $\beta=(5,0,0)$. 

\paragraph{Datasets used in Figure~\ref{fig:clean_noisy_marginal_contributions} and Section~\ref{s:experiment} of the manuscript}
We use the one synthetic dataset and the fourteen real datasets.
For the synthetic dataset, \texttt{Gaussian}, we generate data as follows. Given a sample size $n$, we generate input data from a 5-dimensional multivariate Gaussian with zero mean and the identity covariance matrix, \textit{i.e.}, $x_i \sim \mathcal{N}(0, I_5)$. For outputs, we draw from a Bernoulli distribution $y_i = \mathbf{Bern}(\pi_i)$ for all $i \in [n]$. Here $\pi_i := \exp(x_i ^T \beta)/(1+\exp(x_i ^T \beta))$ for $\beta=(2,1,0,0,0)$. 
For the real datasets, we collect datasets from multiple sources including \texttt{OpenML}\footnote{https://www.openml.org/}. A comprehensive list of datasets and details on sample size and data source are provided in Table \ref{tab:summary_real_datasets}. We preprocess datasets to ease the training.

If the original dataset is the multi-class classification dataset (e.g. \texttt{Covertype}), we binarize the label by considering $\mathds{1}(y=1)$. For \texttt{OpenML} and \texttt{Covertype} datasets, we consider oversampling a minor class to balance positive and negative labels. For the image datasets \texttt{Fashion-MNIST}, \texttt{MNIST} and \texttt{CIFAR10}, we follow the common procedure in prior works \citep{ghorbani2020distributional, kwon2021efficient}: we extract the penultimate layer outputs from the pre-trained ResNet18 \citep{he2016deep}. The pre-training is done with the ImageNet dataset \citep{russakovsky2015imagenet} and the weight is publicly available from \texttt{Pytorch} \citep{NEURIPS2019_9015}. Using the extracted outputs, we fit a principal component analysis model and select the first 32 principal components.

\subsection{Model}
Throughout the experiment, we use a logistic regression model or a support vector machine model using the Python module \texttt{scikit-learn} \citep{scikitlearn}. As for KNN Shapley \citep{jia2019}, we used the $k$-nearest neighborhood classifier with $k=10$.   

\subsection{Experiment settings}
As for Figure~\ref{fig:snr} of the manuscript, we consider 500, 500, and 2000 samples for the dataset to be valued $\mathcal{D}$, the validation dataset, and the held-out test dataset, respectively. The validation dataset is used to estimate utility, and all the results are based on this held-out dataset. Except for this experiment, we use 200 samples for $\mathcal{D}$ and 200 samples for the validation dataset. For the held-out test dataset, we randomly choose 1000 samples. As for the experiments in Figure~\ref{fig:clean_noisy_marginal_contributions} and Section~\ref{s:experiment}, we randomly flip a label for 10\% of samples for $\mathcal{D}$ and the validation datasets. Below we provide details on ML tasks.

\paragraph{Noisy label detection}
Suppose $z^{(1)}, \dots, z^{(n)}$ are data points such that they satisfy the ordering $\psi(z^{(1)}) \leq \dots \leq \psi(z^{(n)})$. We fit the K-Means clustering algorithm on $\{ \psi(z^{(1)}) , \dots , \psi(z^{(n)}) \}$, diving into the two non-intersect sets $\{ \psi(z^{(1)}) , \dots , \psi(z^{(B)}) \}$ and $\{ \psi(z^{(B+1)}) , \dots , \psi(z^{(n)}) \}$. Note that $B$ is not necessarily the number of noisy samples. We define a detection rule as follows: we select $z$ is noisy if the data value is less than or equal to the lower cluster mean. That is, if $\psi(z) \leq \frac{1}{B} \sum_{i=1} ^B  \psi(z^{(i)})$, then $z$ is a noisy data point. After that we compute a F1-score, a harmonic mean of precision and recall of the rule, where
\begin{align*}
    \mathrm{Recall} &= \frac{|\{z : z \text{ is flipped and selected by the rule} \}|}{|\{z : z \text{ is flipped} \}|}\\
    \mathrm{Precision} &= \frac{|\{z : z \text{ is flipped and selected by the rule} \}|}{|\{z : z \text{ is selected by the rule} \}|}.
\end{align*}
    
\paragraph{Learning with subsamples}
We consider a situation where we select 50 samples among 200 samples, which is 25\% of $\mathcal{D}$. For a data valuation $\nu$, let $\lambda_i(\nu) = \max(\nu(z_i), 0)$ be the importance weight for sample $z_i=(x_i,y_i)$. Then, we compare the test accuracy of a weighted risk minimizer $f_{\nu}$ defined as
\begin{align*}
    f_{\nu} := \mathrm{argmin}_{f} \sum_{j \in \mathcal{S}_{50}} \frac{1}{\lambda_i(\nu)} (y_i \log f(x_i) + (1-y_i)\log(1-f(x_i)),
\end{align*}
where $\mathcal{S}_{50}$ is a set of the 50 subsamples. Here, the inverse weight $\frac{1}{\lambda_i(\nu)}$ is used to consider an unbiased risk minimizer. Note that this inverse propensity is used in the Horvitz-Thompson empirical measure. After obtaining $f_\nu$, we compute unweighted test accuracy using the held-out test dataset.

\paragraph{Point addition and removal experiments}
In Figure~\ref{fig:heatmap_summary_count}, we use the relative area as the performance of point addition and removal experiments. Specifically, for the removal task, we compute the relative area as follows.
\begin{align*}
    \text{Relative area-removal}(\psi) := \sum_{k=1} ^{n/2} \left\{ U (\mathcal{D} \backslash \{ z^{(1)}, \dots,  z^{(k)} \}) - U(\mathcal{D}) \right\},
\end{align*}
where $z^{(1)}, \dots, z^{(n)}$ satisfy the ordering $\psi(z^{(1)}) \leq \dots \leq \psi(z^{(n)})$. Similarly, for the addition task, we consider
\begin{align*}
    \text{Relative area-addition}(\psi) := \sum_{k=1} ^{n/2} \left\{ U (\mathcal{S} \cup \{ z^{(n)}, \dots,  z^{(n-k+1)} \}) - U(\mathcal{S}) \right\},
\end{align*}
where $\mathcal{S}$ is the initial set. In our experiment, $\mathcal{S}$ is randomly selected from $\mathcal{D}$ and $|\mathcal{S}|=10$.

\section{Additional explanations}
\label{app:missing_details}
In this section, we provide further details regarding Theorem \ref{thm:u_stat_for_large_cardinality}, Theorem \ref{thm:prob_shap_asymptotic}, and the Equation \ref{eqn:general_weights} of the manuscript.

\subsection{A bound condition in Theorem \ref{thm:u_stat_for_large_cardinality}}
One drawback of Theorem \ref{thm:u_stat_for_large_cardinality} is that it is unknown whether $\lim_{j \to \infty} \zeta_{j}/(j\zeta_{1})$ is bounded. Although Theorem 1 in \citet{hoeffiding1948class} showed a lower bound is greater than 1, \textit{i.e.}, $1 \leq \zeta_{j}/(j\zeta_{1})$ for any $j$, the existence of an upper bound has not been shown in literature. In Figure \ref{fig:condition_for_thm}, we show that this condition is plausible in our numerical examples. The details on dataset is given in Appendix~\ref{app:imple_details}.

\begin{figure}[h]
    \centering
    \includegraphics[width=0.35\columnwidth]{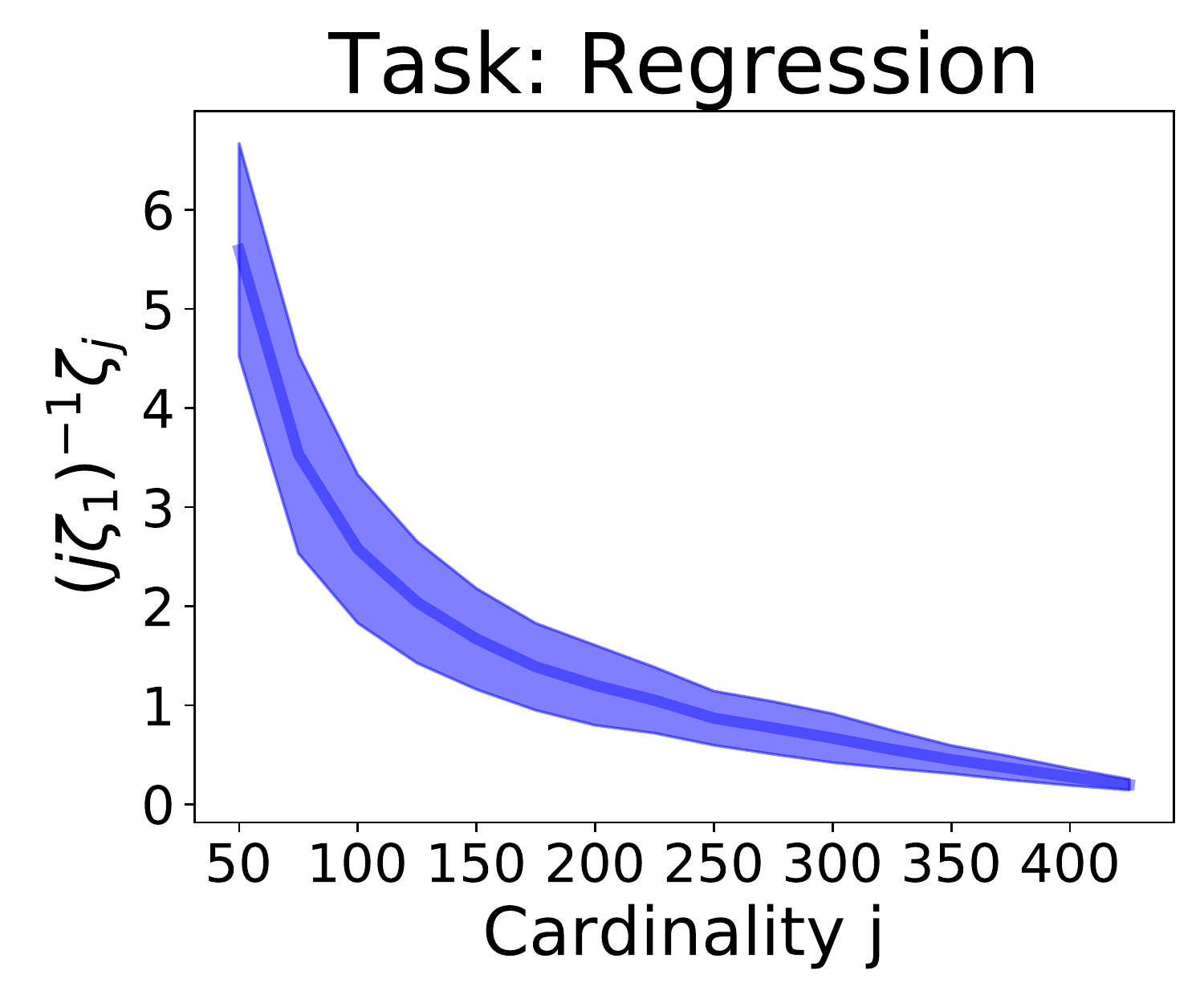}
    \includegraphics[width=0.35\columnwidth]{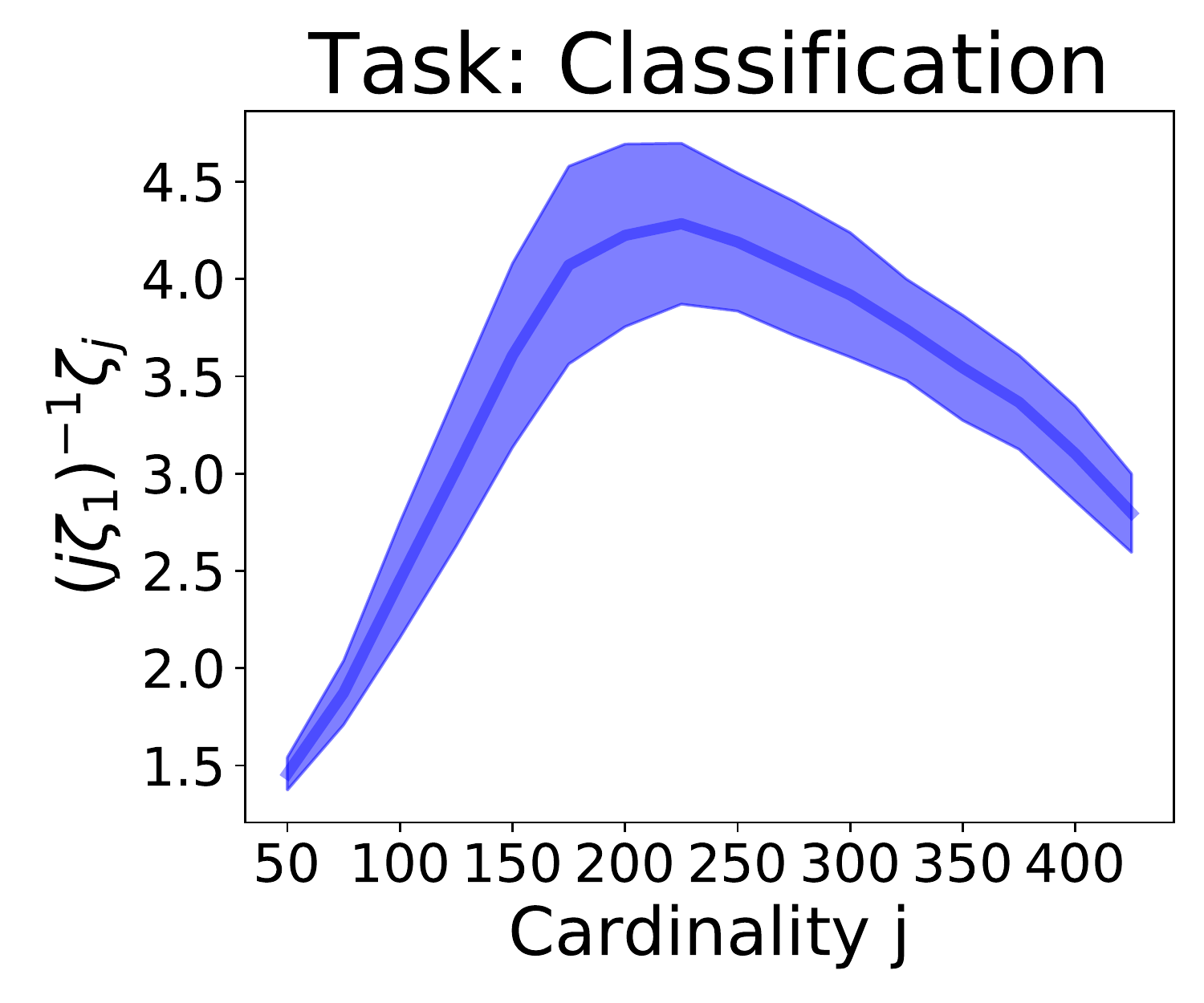}
    \caption{Illustration of $\zeta_{j}/(j\zeta_{1})$ as a function of cardinality $j$ in (left) linear regression and (right) logistic regression settings. The curve and band indicate mean and 95\% standard error of $\zeta_{j}/(j\zeta_{1})$ among different samples. Note that the quantity $\zeta_{j}/(j\zeta_{1})$ decreases as the cardinality increases and it empirically shows the validity of the condition in Theorem \ref{thm:u_stat_for_large_cardinality}.}
    \label{fig:condition_for_thm}
\end{figure}

\subsection{Details on Theorem \ref{thm:prob_shap_asymptotic}.}
\label{app:s:details_theorem}
For fixed positive constants $c_1$ and $c_2$, let $\Pi (c_1, c_2)$ be a set of measure $Q$ such that (i) the total measure of $Q$ is $c_1$ and (ii) it mutually absolutely continuous with $P_Z$ and $c_2 \leq dQ/dP_Z \leq 1$. For $Q \in \Pi (c_1, c_2)$, let $\mathbb{P}_n ^{Q}:= \frac{1}{n} \sum_{i=1} ^n A_i \frac{dP_Z}{dQ}(Z_i) \delta_{Z_i}$ be the Horvitz–Thompson empirical measure, where $A_i$ be a Bernoulli random variable with a probability $dQ/dP_Z (Z_i)$, and $\delta_z$ be the Dirac delta measure on $\mathcal{Z}$ \citep{sarndal2003model}.

\citet{ting2018optimal} showed $\sqrt{\frac{n}{c_1}}(h(\mathbb{P}_n ^{Q})-h(P_Z))$ converges in distribution to a Gaussian distribution with zero mean and $\nu^{Q}$ variance, where $\nu^{Q}:= \int \mathcal{I}(z) \mathcal{I}(z)^T (dP_Z/dQ (z)) dP_Z (z)$ and $\mathcal{I}$ is the Hadamard derivative of $h$ at $P_Z$, which is the influence function when it exists. 
Since the variance $\nu^{Q}$ is the function of $Q\in \Pi(c_1,c_2)$, the problem of finding the optimal subsampling weights can be formulated as finding $Q$ that minimizes the trace of variance, \textit{i.e.}, $\mathrm{argmin}_{Q \in \Pi(c_1,c_2)} \mathrm{Tr}(\nu^{Q})$. In the following theorem, we show that the Beta Shapley-based Horvitz-Thompson empirical measure produces the optimal estimator with the smallest variance.

\begin{theorem}[Formal version of Theorem \ref{thm:prob_shap_asymptotic}]
Suppose $h$ is Hadamard differentiable at $P_Z$ and the importance weight $\lambda_i$ for $i$-th sample $z_i$ is $\lambda_i \propto \norm{\psi_{\mathrm{semi}}(z_i; h, \mathcal{D}, w_{\alpha,\beta} ^{(n)})}_2$ and $n^{-1} \sum_{i=1} ^n \lambda_i= c_1$ for some $\beta \geq 1$. If there is $\mathbb{P}_n ^{Q_{\psi}} \in \Pi(c_1,c_2)$ such that $dQ_{\psi}/dP_Z (Z_i) =\lambda_i$, then the asymptotic variance of $h(\mathbb{P}_n ^{Q_{\psi}})$ is $\mathrm{min}_{Q \in \Pi(c_1,c_2)} \mathrm{Tr}(\nu^{Q})$.
\label{thm:prob_shap_asymptotic_full}
\end{theorem}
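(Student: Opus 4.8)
The plan is to reduce the claim to a deterministic optimization problem for the sampling measure and then show that the Beta Shapley weights reproduce its solution. By the result of \citet{ting2018optimal} quoted above, the asymptotic variance of $h(\mathbb{P}_n^{Q})$ is $\mathrm{Tr}(\nu^{Q})$, and writing $q := dQ/dP_Z$ and using $dP_Z/dQ = 1/q$ together with $\mathrm{Tr}(\mathcal{I}(z)\mathcal{I}(z)^T)=\norm{\mathcal{I}(z)}_2^2$, we have $\mathrm{Tr}(\nu^{Q}) = \int \norm{\mathcal{I}(z)}_2^2\, q(z)^{-1}\, dP_Z(z)$. First I would solve
\begin{align*}
\min_{q}\ \int \frac{\norm{\mathcal{I}(z)}_2^2}{q(z)}\, dP_Z(z)\quad \text{subject to}\quad \int q\, dP_Z = c_1,\ \ c_2 \le q \le 1.
\end{align*}
By the Cauchy--Schwarz inequality,
\begin{align*}
\left(\int \norm{\mathcal{I}(z)}_2\, dP_Z\right)^2 \le \left(\int \frac{\norm{\mathcal{I}(z)}_2^2}{q(z)}\, dP_Z\right)\left(\int q\, dP_Z\right),
\end{align*}
with equality precisely when $q(z) \propto \norm{\mathcal{I}(z)}_2$. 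A Lagrange/KKT argument then shows the unconstrained optimizer is $q^\star(z) = c_1 \norm{\mathcal{I}(z)}_2 \big/ \int \norm{\mathcal{I}}_2\, dP_Z$, and under the feasibility hypothesis of the theorem (the existence of $\mathbb{P}_n^{Q_\psi}\in\Pi(c_1,c_2)$) the induced weights lie in $[c_2,1]$, so the box constraints are inactive and $q^\star$ is in fact the constrained minimizer.

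The second and main step is to show that the Beta Shapley importance weights are asymptotically proportional to $\norm{\mathcal{I}(\cdot)}_2$, i.e. that they realize $q^\star$. Using Hadamard differentiability, I would expand the M-estimator on a random subset $S$ of size $j-1$ as $h(S) = h(P_Z) + (j-1)^{-1}\sum_{z\in S}\mathcal{I}(z) + r_{j-1}$, which gives
\begin{align*}
h(S\cup\{z^*\}) - h(S) = \frac{1}{j}\,\mathcal{I}(z^*) - \frac{1}{j(j-1)}\sum_{z\in S}\mathcal{I}(z) + \tilde r_j.
\end{align*}
Averaging over $S\in\mathcal{D}_j^{\backslash z^*}$ and using the defining property $\mathbb{E}[\mathcal{I}(Z)] = 0$, the middle term is of lower order, so $\Delta_j(z^*; h, \mathfrak{D}) = j^{-1}\mathcal{I}(z^*) + o(j^{-1})$. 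Substituting into the semivalue representation \eqref{eqn:def_prob_data_shapley_value} then yields
\begin{align*}
\psi_{\mathrm{semi}}(z^*; h, \mathcal{D}, w_{\alpha,\beta}^{(n)}) = C_n\, \mathcal{I}(z^*) + R(z^*),\qquad C_n := \frac{1}{n}\sum_{j=1}^n \frac{\tilde w_{\alpha,\beta}^{(n)}(j)}{j} > 0,
\end{align*}
the positivity of $C_n$ being immediate since every summand is positive. The identity $\frac{1}{j}\binom{n-1}{j-1} = \frac{1}{n}\binom{n}{j}$, combined with the integral form \eqref{eqn:general_weights} of the weights, lets one evaluate $C_n$ in closed form and confirm it is strictly positive for $\beta \ge 1$.

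To conclude, note that $C_n$ is a positive scalar independent of $z^*$, so it cancels under the normalization $\lambda_i \propto \norm{\psi_{\mathrm{semi}}(z_i;h,\mathcal{D},w_{\alpha,\beta}^{(n)})}_2$; provided $R(z^*) = o(C_n)$ uniformly, $\norm{\psi_{\mathrm{semi}}(z_i)}_2 = C_n\norm{\mathcal{I}(z_i)}_2(1+o(1))$ and the law of large numbers gives $\lambda_i \to c_1\norm{\mathcal{I}(z_i)}_2 \big/ \int \norm{\mathcal{I}}_2\, dP_Z = q^\star(z_i)$. Hence $dQ_\psi/dP_Z(Z_i)=\lambda_i$ matches the minimizer $q^\star$ of the first step while respecting $n^{-1}\sum_i\lambda_i=c_1$, and invoking the asymptotic normality of \citet{ting2018optimal} shows the asymptotic variance of $h(\mathbb{P}_n^{Q_\psi})$ equals $\min_{Q\in\Pi(c_1,c_2)}\mathrm{Tr}(\nu^{Q})$. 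I expect the main obstacle to be precisely the uniform remainder control $R(z^*) = o(C_n)$: one must make the expansion $\Delta_j(z^*;h,\mathfrak{D}) = j^{-1}\mathcal{I}(z^*)+o(j^{-1})$ uniform enough in $j$ that, after weighting by $\tilde w_{\alpha,\beta}^{(n)}(j)$ and summing, the accumulated error is genuinely negligible relative to $C_n\norm{\mathcal{I}(z^*)}_2$. The small-cardinality terms, where the influence-function linearization is least accurate, are the delicate part, and this is exactly where the hypothesis $\beta \ge 1$ (which governs the weight placed on small $j$ and keeps $C_n$ from degenerating) is used.
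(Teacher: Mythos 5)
Your overall strategy coincides with the paper's: normalize the semivalue so that it converges to the influence function $\mathcal{I}(z^*;h,P_Z)$, then invoke \citet{ting2018optimal} to conclude that importance weights proportional to $\norm{\mathcal{I}(\cdot)}_2$ minimize $\mathrm{Tr}(\nu^{Q})$. Your constant $C_n=\frac{1}{n}\sum_{j}\tilde w_{\alpha,\beta}^{(n)}(j)/j$ is exactly the paper's $\gamma_{\alpha,\beta}^{(n)}/n$, and your expansion $\Delta_j(z^*;h,\mathfrak{D})=j^{-1}\mathcal{I}(z^*)+o(j^{-1})$ is the same statement as the paper's $\psi_j(z^*,h)\to\mathcal{I}(z^*;h,P_Z)$; the paper then passes to the limit via the Silverman--Toeplitz theorem, which is precisely your ``weighted average of a convergent sequence'' step. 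The one place you go beyond the paper is the explicit Cauchy--Schwarz/KKT derivation of the optimal density $q^\star\propto\norm{\mathcal{I}}_2$; the paper simply cites Theorem 2 of \citet{ting2018optimal} for this, so that part of your write-up is a correct elaboration rather than a divergence.

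The gap is the step you yourself flag as ``the main obstacle,'' and the criterion you propose for closing it is not the right one. Confirming that $C_n>0$ is trivial and beside the point: what must be shown is that, after normalizing by $\gamma_{\alpha,\beta}^{(n)}$, the weight placed on each fixed (small) cardinality $j$ vanishes, i.e. $\bigl(\gamma_{\alpha,\beta}^{(n)}\bigr)^{-1} w_{\alpha,\beta}^{(n)}(j)\binom{n-1}{j-1}j^{-1}\to 0$ as $n\to\infty$. This is the Toeplitz regularity condition, and it is exactly where $\beta\geq 1$ enters --- not through positivity or non-degeneracy of $C_n$ (which tends to $0$ for every $\beta>0$), but through a rate comparison. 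The paper computes $\gamma_{\alpha,\beta}^{(n)}=\frac{1}{\mathrm{Beta}(\alpha,\beta)}\sum_{k=0}^{n-1}\mathrm{Beta}(\alpha+k,\beta)$, which is $O(1)$ for $\beta>1$ and $O(\log n)$ for $\beta=1$, while each fixed-$j$ term is $O(n^{1-\beta})$ by Gamma-ratio asymptotics; the ratio therefore vanishes precisely when $\beta\geq1$. For $\beta<1$ both quantities are of order $n^{1-\beta}$, a non-vanishing fraction of the mass sits at small $j$ where $j\Delta_j$ has not yet converged to $\mathcal{I}(z^*)$, and the argument fails. These asymptotic computations constitute essentially the entire technical content of the paper's proof; without them your proposal does not establish $R(z^*)=o\bigl(C_n\norm{\mathcal{I}(z^*)}_2\bigr)$ and cannot distinguish the regime where the theorem holds from the regime where this strategy breaks down.
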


Theorem \ref{thm:prob_shap_asymptotic_full} shows asymptotic convergence of $\mathrm{Var}(h(\mathbb{P}_n ^{Q_{\psi}}))$, unfortunately, its convergence rate is unknown. We believe different choices of $(\alpha, \beta)$ can affect the convergence rate, and thus it can be used to choose the optimal hyperparameter.

\subsection{Derivation of Equation \ref{eqn:general_weights}}
\begin{proof}[Proof of Equation \ref{eqn:general_weights}]
For $j \in [n]$ and any probability density function $\xi$ defined on $[0,1]$, we set 
\begin{align*}
    w^{(n)}(j, \xi) := n\int_{0} ^{1} t^{j-1} (1-t)^{n-j} \xi(t) dt.
\end{align*}
Then, we have
\begin{align*}
    \frac{1}{n} \sum_{j=1} ^{n } w^{(n)}(j, \xi) \binom{n-1}{j-1} &= \int_{0} ^{1} \sum_{j=1} ^{n}  \binom{n-1}{j-1} t^{j-1} (1-t)^{n-j} \xi(t) dt \\
    &= \int_{0} ^{1} \xi(t) dt = 1.
\end{align*}
The last equality is due to the definition of $\xi$. This concludes a proof.
\end{proof}

\section{Proofs}
\label{app:proofs}
\begin{proof}[Proof of Theorem \ref{thm:u_stat_for_large_cardinality}]
The first result, $(j^2 \zeta_1 / n)^{-1} \mathrm{Var}(\Delta_j (z^*, U, \mathfrak{D})) \to 1$ as $n$ increases, is from the central limit theorem of U-statistics when $j$ increases. Our result is from Theorem 3.1(i) of \citet{diciccio2020clt}. 
\end{proof}

\begin{proof}[Proof of Theorem \ref{thm:semivalue_representation}]
If there exists a weight function $w^{(n)}: [n] \to \mathbb{R}$ such that $\sum_{j=1} ^{n} \binom{n-1}{j-1} w^{(n)}(j)=n$ and the value function $\psi_{\mathrm{semi}}$ can be expressed as 
\begin{align*}
    \psi_{\mathrm{semi}}(z^*; U, \mathcal{D}, w^{(n)}) = \frac{1}{n} \sum_{j=1} ^{n} \binom{n-1}{j-1} w^{(n)}(j)  \Delta_j (z^*; U, \mathcal{D}).   
\end{align*}
Then by the form of the function $\psi_{\mathrm{semi}}(z^*; U, \mathcal{D}, w^{(n)})$, it satisfies the linearity, null player and symmetry axioms.  

Now, we show the inverse. By Theorem 3 of \citet{dubey1977probabilistic}, when a function satisfies the linearity and null player, there exists a weight function $\mu: \cup_{j=0} ^{\infty} \mathcal{Z}^j \to \mathbb{R}$ such that $\sum_{S \subseteq \mathcal{D}\backslash\{z^*\}} \mu^{(n)}(S)=1$ and the value function $\psi$ can be expressed as 
\begin{align*}
    \psi(z^*; U, \mathcal{D}, \mu^{(n)}) = \sum_{S \subseteq \mathcal{D}\backslash\{z^*\}} \mu^{(n)}(S)  (U(S\cup\{z^*\})-U(S)).   
\end{align*}
In addition, due to the symmetry axiom, if there are two subsets $S_1$ and $S_2$ such that $|S_1|=|S_2|$ and $S_1, S_2 \subseteq \mathcal{D}\backslash\{z^*\}$, then $\mu(S_1)=\mu(S_2)$.\footnote{For a function $v_S(A):= \mathds{1}(S\subsetneq A)$ and a permutation $\pi^*$ that sends $\pi^*(S_1)=S_2$ and fixes others, $\mu(S_1)=\psi(z^*; v_{S_1}, \mathcal{D}, \mu^{(n)})=\psi(z^*; v_{S_2}, \mathcal{D}, \mu^{(n)})=\mu(S_2)$.} Therefore, for $\nu:[n] \to \mathbb{R}$, we can further simplify $\psi$ as follows.
\begin{align*}
    \psi(z^*; U, \mathcal{D}, \mu^{(n)}) &= \sum_{j=1} ^{n}  \sum_{S \subseteq \mathcal{D}_j ^{\backslash\{z^*\}}} \mu^{(n)}(S) (U(S\cup\{z^*\})-U(S)) \\
    &= \sum_{j=1} ^{n} \nu(j)  \sum_{S \subseteq \mathcal{D}_j ^{\backslash\{z^*\}}} (U(S\cup\{z^*\})-U(S)),
\end{align*}
where $\sum_{j=1} ^{n} \nu(j) \binom{n-1}{j-1}=1$. Thus, considering $w^{(n)}(j) := n^{-1} \nu (j)$, it concludes a proof.
\end{proof}

\begin{proof}[Proof of Proposition \ref{prop:uniqueness_semivalue}]
The uniqueness of semivalues shown in Proposition \ref{prop:uniqueness_semivalue} is directly from Theorem 10 of \citet{MONDERER20022055}.
\end{proof}

\begin{proof}[Proof of Theorem \ref{thm:prob_shap_asymptotic}]
We provide a proof for Theorem \ref{thm:prob_shap_asymptotic_full}, which is a detailed version of Theorem \ref{thm:prob_shap_asymptotic}. Theorem \ref{thm:prob_shap_asymptotic_full} is specified in Appendix \ref{app:s:details_theorem}.
A key component of this proof is to show for some sequence of explicit constants $(\gamma_{\alpha, \beta} ^{(j)})$, $(\gamma_{\alpha, \beta} ^{(n)}/n)^{-1} \psi_{\mathrm{semi}}(z^*; U, \mathcal{D}, w_{\alpha,\beta} ^{(n)})$ converges to the influence function as $n$ increases by using the Silverman-Toeplitz theorem. 
    
To be more specific, 
\begin{align*}
    \psi_{\mathrm{semi}}(z^*; h, \mathcal{D}, w_{\alpha,\beta} ^{(n)}) &= \frac{1}{n} \sum_{j=1} ^{n} w_{\alpha,\beta} ^{(n)}(j) \sum_{ S \subseteq \mathcal{D}_{j} ^{\backslash z^*} } h(S\cup \{z^*\})-h(S) \\
    &= \frac{1}{n} \sum_{j=1} ^{n} w_{\alpha,\beta} ^{(n)}(j) \binom{n-1}{j-1} \frac{1}{j} \frac{1}{\binom{n-1}{j-1}} \sum_{ S \subseteq \mathcal{D}_{j} ^{\backslash z^*} } \frac{h(S\cup \{z^*\})-h(S)}{1/j} \\
    &= \frac{\gamma_{\alpha, \beta} ^{(n)} }{n} \sum_{j=1} ^{n} \frac{w_{\alpha,\beta} ^{(n)}(j) \binom{n-1}{j-1} \frac{1}{j}}{\gamma_{\alpha, \beta} ^{(n)}} \psi_j(z^*, h),
\end{align*}
where
\begin{align*}
    \gamma_{\alpha, \beta} ^{(n)} &:= \sum_{j=1} ^{n} w_{\alpha,\beta} ^{(n)}(j) \binom{n-1}{j-1} \frac{1}{j}\\
    \psi_j (z^*, h) &:= \frac{1}{\binom{n-1}{j-1}} \sum_{ S \subseteq \mathcal{D}_{j} ^{\backslash z^*} } \frac{h(S\cup \{z^*\})-h(S)}{1/j}.
\end{align*}
Note that $\psi_j (z^*, h) \to \mathcal{I}(z^*; h, P_Z)$ as $j \to \infty$ because $\mathcal{I}$ is a Hadamard derivative of $h$ at $P_Z$. We formally denote this by $\mathcal{I}(z^*; h, P_Z)$. That is, if 
\begin{align*}
    \lim_{n \to \infty} (\gamma_{\alpha, \beta} ^{(n)})^{-1} \left( w_{\alpha,\beta} ^{(n)}(j) \binom{n-1}{j-1} \frac{1}{j} \right) = 0    
\end{align*}
for all fixed $j \in[n]$, then by the Silverman-Toeplitz theorem, we have $(\gamma_{\alpha, \beta} ^{(n)}/n)^{-1} \psi_{\mathrm{semi}}(z^*; h, \mathcal{D}, w_{\alpha,\beta} ^{(n)}) \to \mathcal{I}(z^*; h, P_Z)$.

Note that
\begin{align*}
    w_{\alpha,\beta} ^{(n)}(j) \binom{n-1}{j-1} \frac{1}{j} &= \frac{\mathrm{Beta}(j+\beta-1, n-j+\alpha)}{\mathrm{Beta}(\alpha, \beta)} \binom{n}{j} \\
    &= \frac{1}{\mathrm{Beta}(\alpha, \beta)} \binom{n}{j} \int_{0} ^{1} t^{j+\beta-2} (1-t)^{n-j+\alpha-1} dt,
\end{align*}
and thus
\begin{align*}
    \gamma_{\alpha, \beta} ^{(n)} &= \sum_{j=1} ^{n} w_{\alpha,\beta} ^{(n)}(j) \binom{n-1}{j-1} \frac{1}{j} \\
    &= \frac{1}{\mathrm{Beta}(\alpha, \beta)} \int_{0} ^{1} \sum_{j=1} ^{n} \binom{n}{j} t^{j+\beta-2} (1-t)^{n-j+\alpha-1} dt \\
    &= \frac{1}{\mathrm{Beta}(\alpha, \beta)} \int_{0} ^{1} t^{\beta-2} (1-t)^{\alpha-1} \sum_{j=1} ^{n} \binom{n}{j} t^{j} (1-t)^{n-j} dt\\
    &= \frac{1}{\mathrm{Beta}(\alpha, \beta)} \int_{0} ^{1} t^{\beta-2} (1-t)^{\alpha-1} \left(1- (1-t)^{n} \right) dt \\
    &= \frac{1}{\mathrm{Beta}(\alpha, \beta)} \int_{0} ^{1} (1-t)^{\beta-2} t^{\alpha-1} \left(1- t^{n} \right) dt \\
    &= \frac{1}{\mathrm{Beta}(\alpha, \beta)} \int_{0} ^{1} (1-t)^{\beta-1} t^{\alpha-1} \frac{1- t^{n}}{1-t} dt \\
    &= \frac{1}{\mathrm{Beta}(\alpha, \beta)} \int_{0} ^{1} (1-t)^{\beta-1} \sum_{k=0} ^{n-1} t^{\alpha-1+k}  dt \\
    &= \frac{1}{\mathrm{Beta}(\alpha, \beta)} \sum_{k=0} ^{n-1} \mathrm{Beta}(\alpha+k, \beta).
\end{align*}

[Step 1] When $\beta > 1$, due to $\mathrm{Beta}(\alpha+k, \beta-1)+\mathrm{Beta}(\alpha+k-1, \beta)=\mathrm{Beta}(\alpha+k-1, \beta-1)$ for any $k \in [n]$, we have
\begin{align*}
    \gamma_{\alpha, \beta} ^{(n)} &= \frac{1}{\mathrm{Beta}(\alpha, \beta)} \left( \mathrm{Beta}(\alpha, \beta-1) - \mathrm{Beta}(\alpha+n, \beta-1) \right) \\
    &=\left( \frac{\mathrm{Beta}(\alpha, \beta-1)}{\mathrm{Beta}(\alpha, \beta)} - \frac{\Gamma(\beta-1)}{\mathrm{Beta}(\alpha, \beta)} \frac{\Gamma(n+\alpha)}{\Gamma(n+\beta+\alpha-1)} \right)\\
    &\approx \left( \frac{\mathrm{Beta}( \alpha, \beta-1)}{\mathrm{Beta}(\alpha, \beta)} - \frac{\Gamma(\beta-1)}{\mathrm{Beta}(\alpha, \beta)} n^{1-\beta} \right)=O(1).
\end{align*}
The last approximation is due to Equation (1) of \citet{tricomi1951asymptotic} when $n$ is large enough.

[Step 2] When $\beta \leq 1$, we have
\begin{align*}
    \gamma_{\alpha, \beta} ^{(n)} &= \sum_{k=0} ^{n-1} \frac{\mathrm{Beta}(\alpha+k, \beta)}{\mathrm{Beta}(\alpha, \beta)} \\
    &= \sum_{k=0} ^{n-1} \frac{\Gamma(\alpha+k)\Gamma( \beta)}{\Gamma(\alpha+\beta+k)}\frac{\Gamma(\alpha+\beta)}{\Gamma(\alpha)\Gamma(\beta)}\\
    &= \frac{\Gamma(\alpha+\beta)}{\Gamma(\alpha)} \sum_{k=0} ^{n-1} \frac{\Gamma(\alpha+k)}{\Gamma(\alpha+\beta+k)} \\ 
    &= \frac{\Gamma(\alpha+\beta)}{\Gamma(\alpha)} \sum_{k=0} ^{n-1} \frac{\Gamma(1+\alpha+k)}{\Gamma(\alpha+\beta+k)} \frac{1}{\alpha+k}\\
    &\approx \frac{\Gamma(\alpha+\beta)}{\Gamma(\alpha)} \sum_{k=0} ^{n-1} (\alpha+k)^{1-\beta} \frac{1}{\alpha+k}.
\end{align*}
The approximation is from Equation (7) of \citet{gautschi1959some} when $0<\beta \leq1$. Therefore, we have
\begin{align*}
    \gamma_{\alpha, \beta} ^{(n)} &\approx \begin{cases}
            \frac{\Gamma(\alpha+\beta)}{\Gamma(\alpha)} \log \frac{n+\alpha}{\alpha}=O(\log n), & \text{if } \beta=1\\
            \frac{\Gamma(\alpha+\beta)}{\Gamma(\alpha)}\frac{(n+\alpha)^{1-\beta} - \alpha^{1-\beta}}{1-\beta}=O(n^{1-\beta}),  & \beta<1
    \end{cases}.
\end{align*}

[Step 3] For a fixed $s$, we have
\begin{align*}
    w_{\alpha,\beta} ^{(n)}(j) \binom{n-1}{j-1} \frac{1}{j} &= n \frac{\mathrm{Beta}(j+\beta-1, n-j+\alpha)}{\mathrm{Beta}(\alpha, \beta)} \frac{(n-1)!}{(j-1)! (n-j)!} \frac{1}{j} \\
    &= n\frac{\Gamma(j+\beta-1)\Gamma(n-j+\alpha)}{\Gamma(n+\alpha+\beta-1)} \frac{\Gamma(n)}{j! \Gamma(n-j+1)} \frac{1}{\mathrm{Beta}(\alpha, \beta)}\\
    &= n\frac{\Gamma(n-j+\alpha) \Gamma(n)}{\Gamma(n+\alpha+\beta-1)\Gamma(n-j+1)} \frac{\Gamma(j+\beta-1)}{\Gamma(j+1)}\frac{1}{\mathrm{Beta}(\alpha, \beta)}\\
    &\approx n^{1-\beta} \frac{\Gamma(j+\beta-1)}{\Gamma(j+1)}\frac{1}{\mathrm{Beta}(\alpha, \beta)}=O(n^{1-\beta}).
\end{align*}

[Step 4] Now we consider $\lim_{n \to \infty} (\gamma_{\alpha, \beta} ^{(n)})^{-1} w_{\alpha,\beta} ^{(n)}(j) \binom{n-1}{j-1} \frac{1}{j}$. In case of $\beta >1$, from [Step 1] and [Step 3], we have $\gamma_{\alpha, \beta} ^{(n)} =O(1)$ and $w_{\alpha,\beta} ^{(n)}(j) \binom{n-1}{j-1} \frac{1}{j}=O(n^{1-\beta})$, thus $\lim_{n \to \infty} (\gamma_{\alpha, \beta} ^{(n)})^{-1} w_{\alpha,\beta} ^{(n)}(j) \binom{n-1}{j-1} \frac{1}{j}=0$. Similarly, in case of $\beta = 1$, from [Step 2] and [Step 3], we also have $\lim_{n \to \infty} (\gamma_{\alpha, \beta} ^{(n)})^{-1} w_{\alpha,\beta} ^{(n)}(j) \binom{n-1}{j-1} \frac{1}{j}=0$. Thus, by Silverman-Toeplitz theorem, if $\beta \geq 1$, then
\begin{align*}
    \frac{n}{\gamma_{\alpha, \beta} ^{(n)}} \psi_{\mathrm{semi}}(z^*; h, \mathcal{D}, w_{\alpha,\beta} ^{(n)})  = \mathcal{I}(z^*; h, P_Z)+o_p(1).
\end{align*} 
Moreover, by Theorem 2 of \citet{ting2018optimal}, learning with a Horvitz–Thompson empirical measure defined with the importance weight $\lambda_i$ such that $\lambda_i \propto \norm{ \frac{n}{\gamma_{\alpha, \beta} ^{(n)}}\psi_{\mathrm{semi}}(z^*; h, \mathcal{D}, w_{\alpha,\beta} ^{(n)})}_2 \propto \norm{ \psi_{\mathrm{semi}}(z^*; h, \mathcal{D}, w_{\alpha,\beta} ^{(n)})}_2$ gives an estimator with the minimum variance.
\end{proof}

\section{Additional Experiments}
\label{app:add_exp}

\subsection{Additional results using different datasets}
In Figure~\ref{fig:app_snr_real_datasets}, we shows the signal-to-noise ratio of $\Delta_j (z^*; U, \mathfrak{D})$ as a function of the cardinality $j$ for the four real datasets. As in other figures, we denote a 95\% confidence band based on 50 repetitions under the assumption that the results follow the identical Gaussian distribution. In Figure~\ref{fig:app_clean_noisy_marginal_contributions}, we illustrate the marginal contributions as a function of the cardinality using the eleven datasets. This figure shares the same setting with Figure~\ref{fig:clean_noisy_marginal_contributions} of the manuscript, but different datasets are used. In Figures~\ref{fig:app_point_addition_experiment} and~\ref{fig:app_point_removal_experiment}, we present additional point addition and removal experiment results using the thirteen datasets. Details on datasets and experiments settings are provided in Section~\ref{app:imple_details}.

\begin{figure*}[h]
    \centering
    \includegraphics[width=0.24\textwidth]{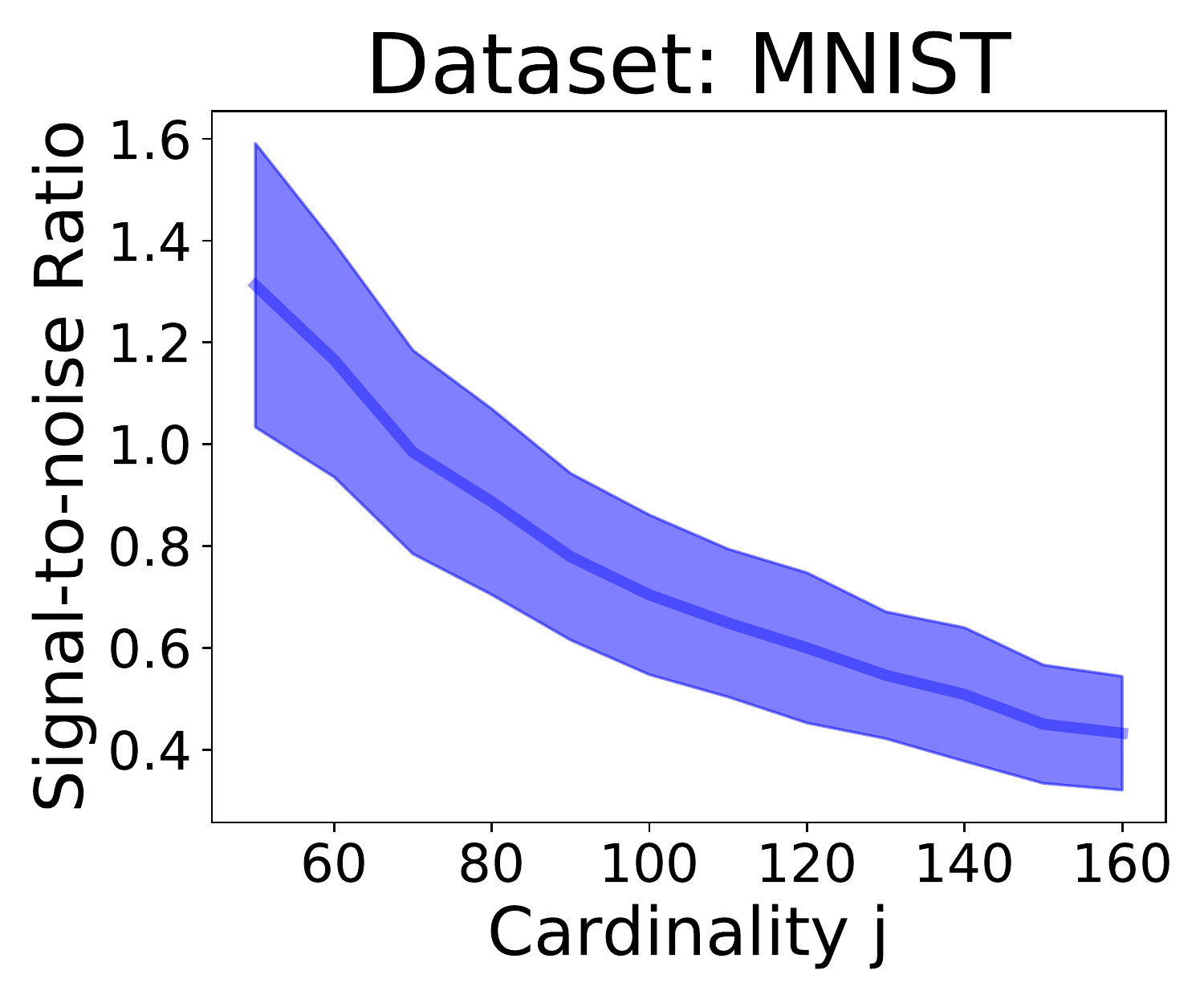}
    \includegraphics[width=0.24\textwidth]{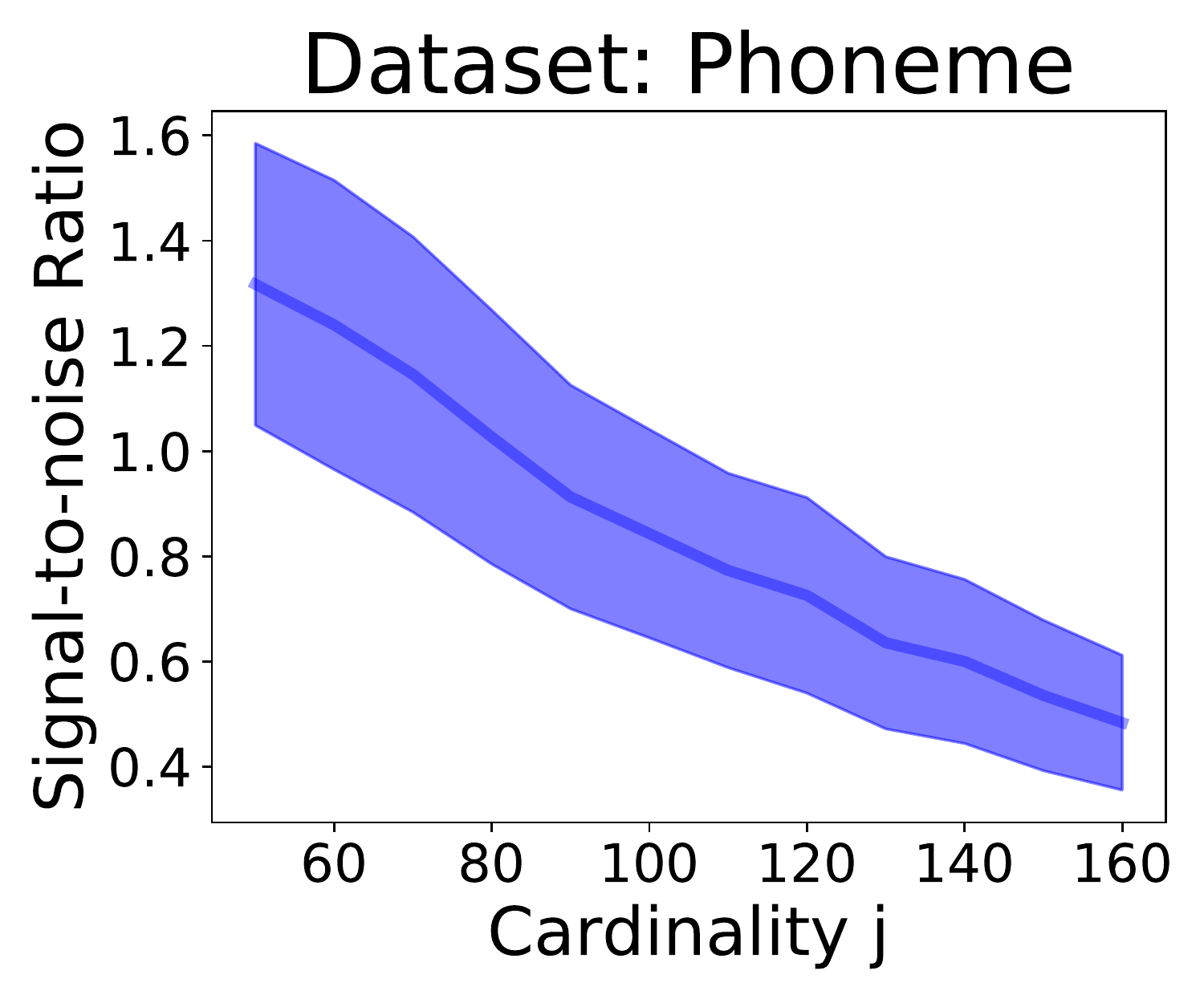}
    \includegraphics[width=0.24\textwidth]{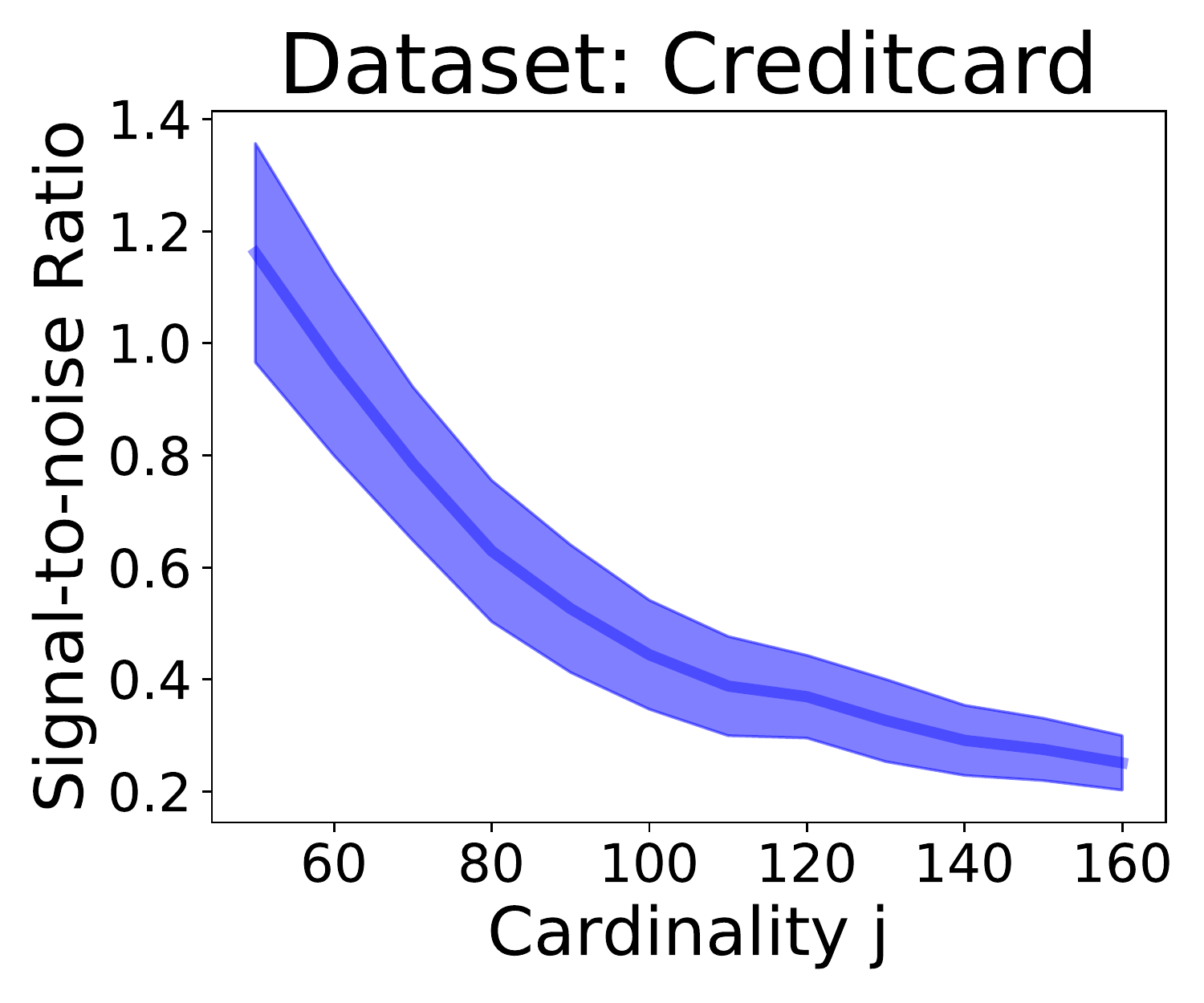}
    \includegraphics[width=0.24\textwidth]{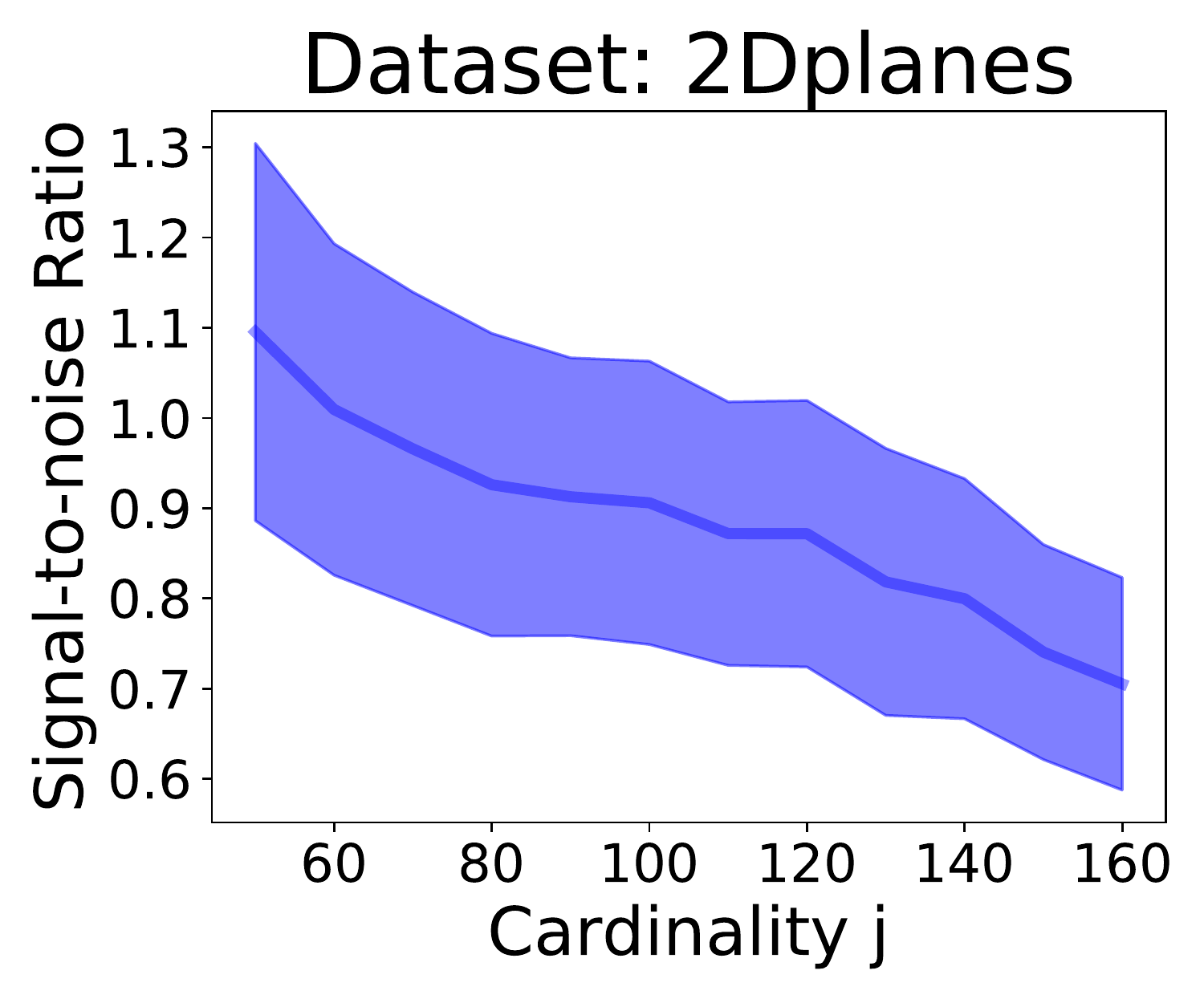}
    \caption{The signal-to-noise ratio of $\Delta_j (z^*; U, \mathfrak{D})$ as a function of the cardinality $j$ for four real datasets. Similar to Figure~\ref{fig:snr}, the signal-to-noise ratio decreases as the cardinality $j$ increases.}
    \label{fig:app_snr_real_datasets}
\end{figure*}

\begin{figure*}[h]
    \centering
    \includegraphics[width=0.245\textwidth]{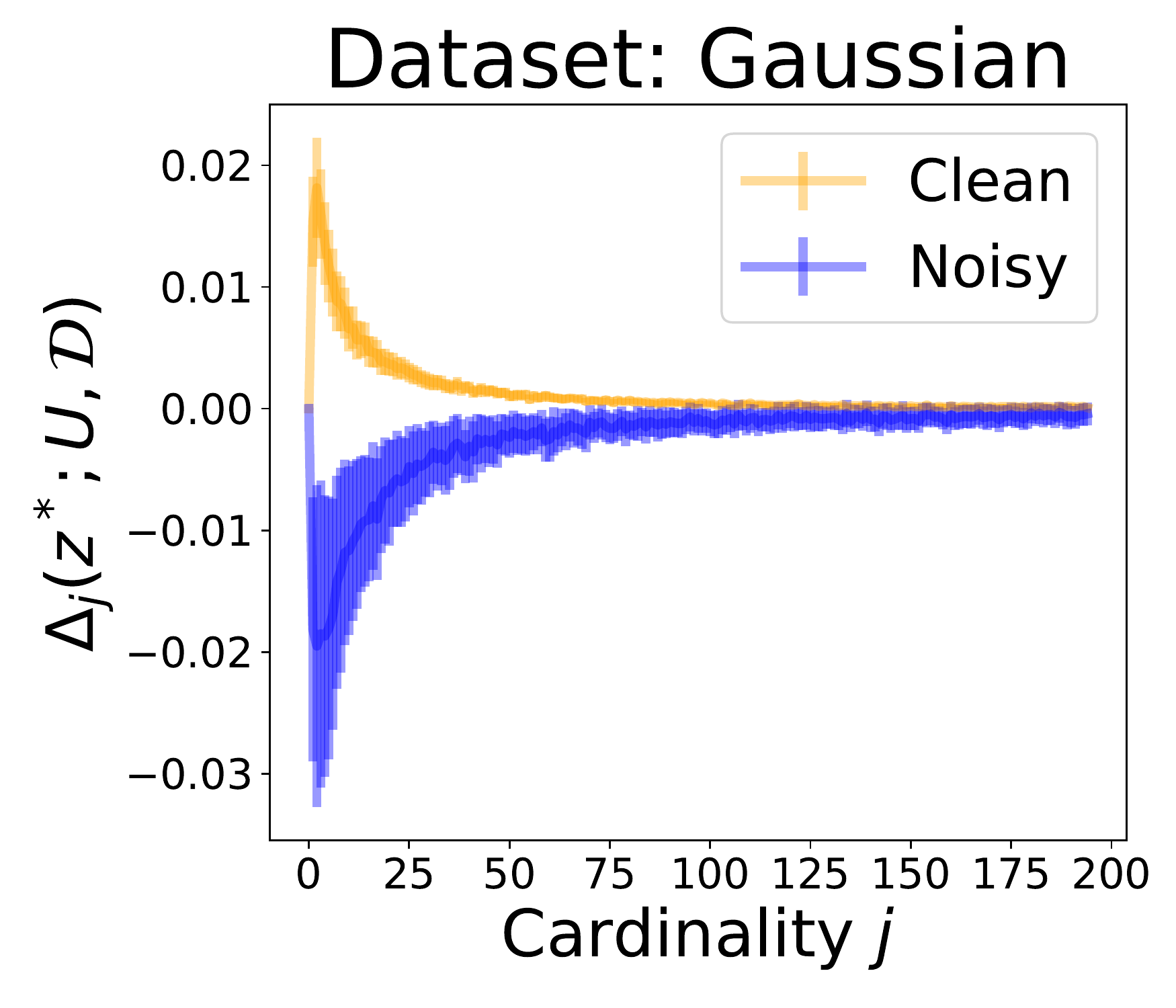}
    \includegraphics[width=0.245\textwidth]{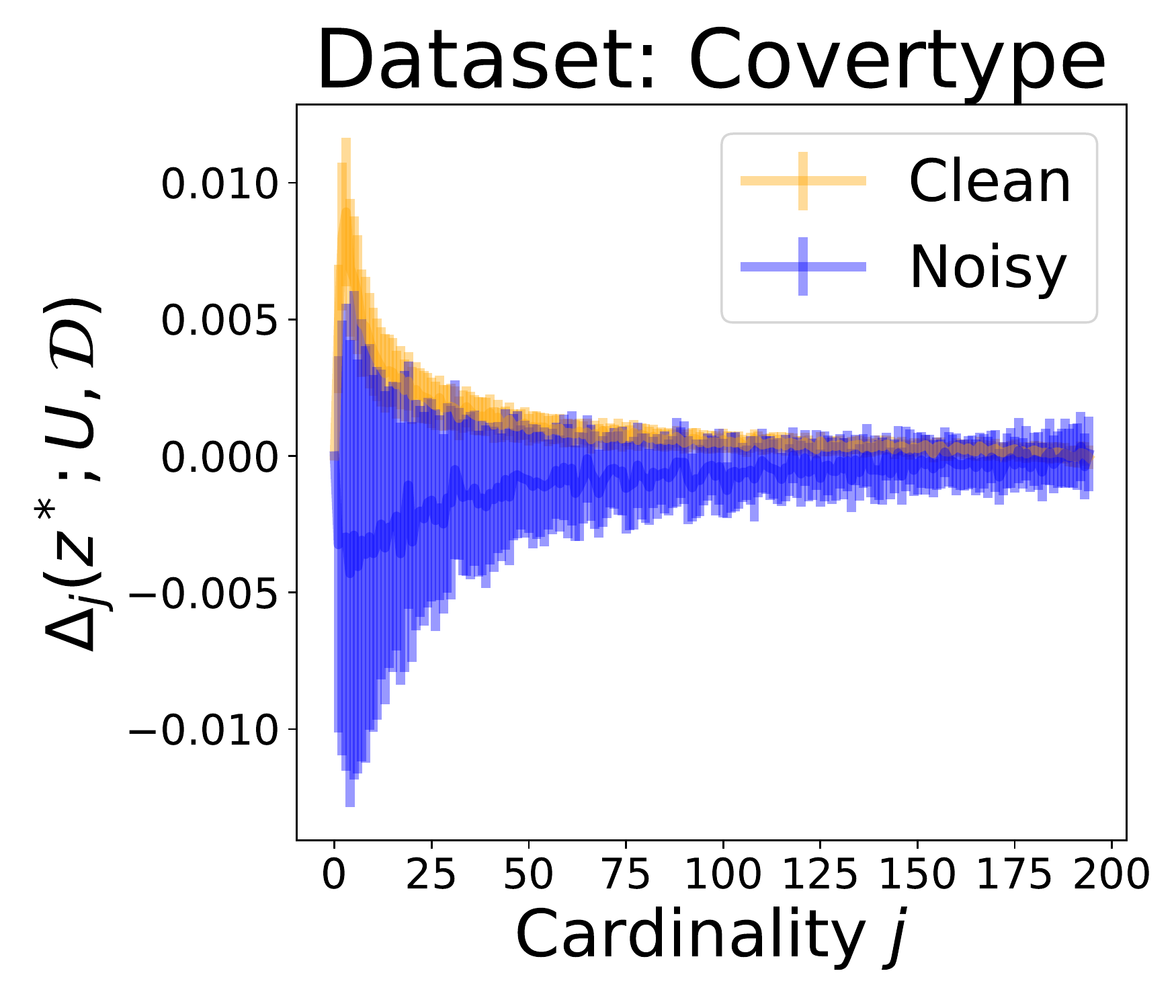}
    \includegraphics[width=0.245\textwidth]{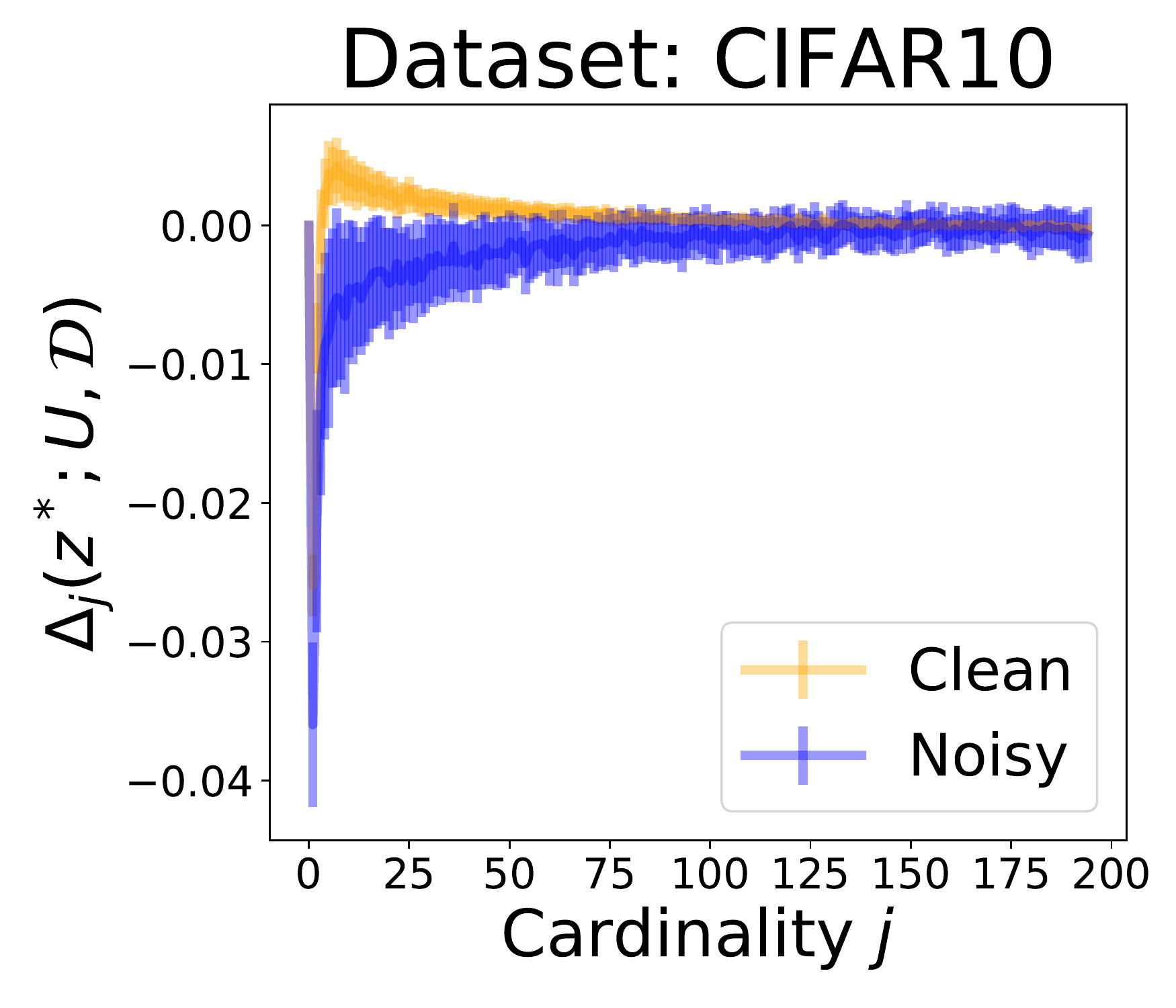}
    \includegraphics[width=0.245\textwidth]{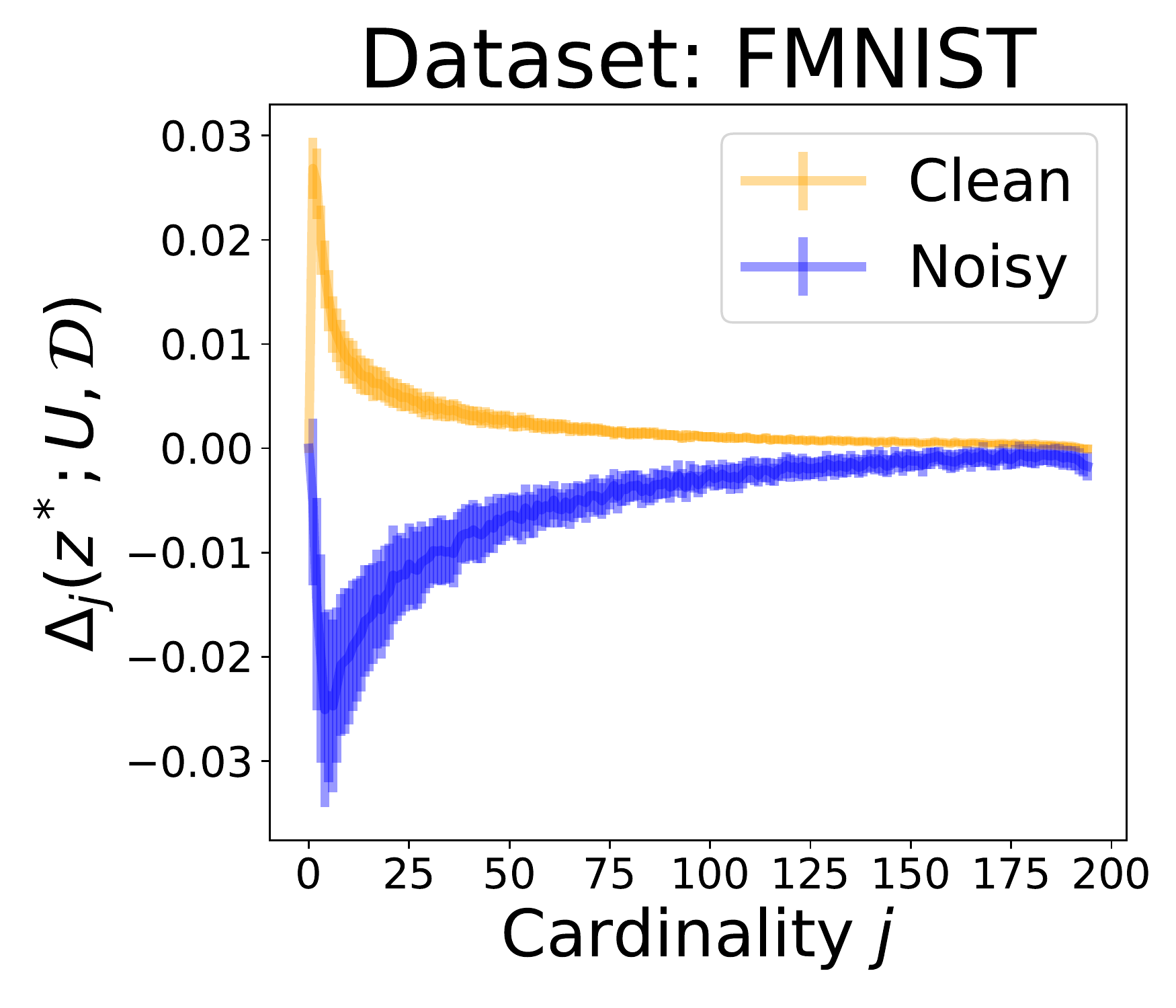}\\
    \includegraphics[width=0.245\textwidth]{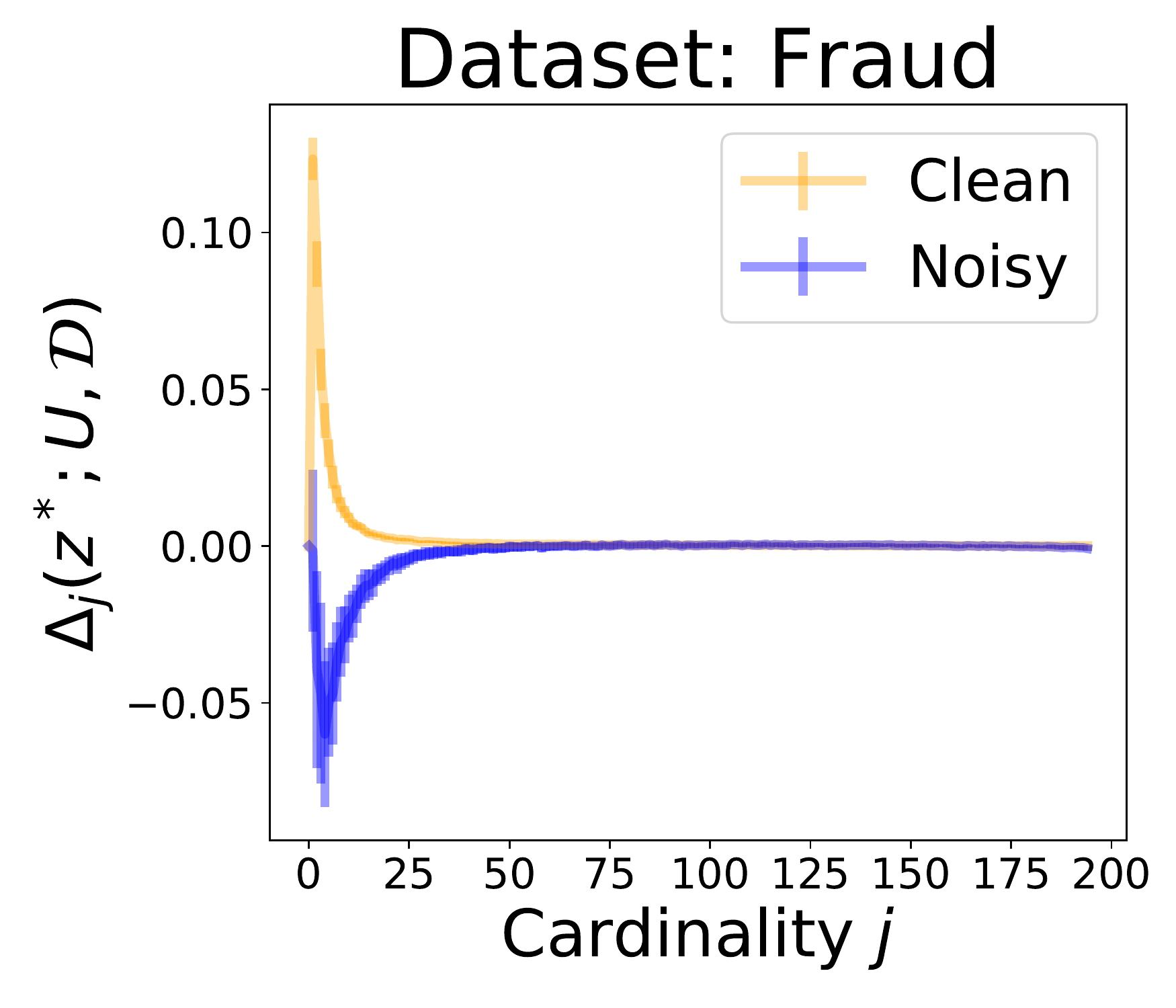}
    \includegraphics[width=0.245\textwidth]{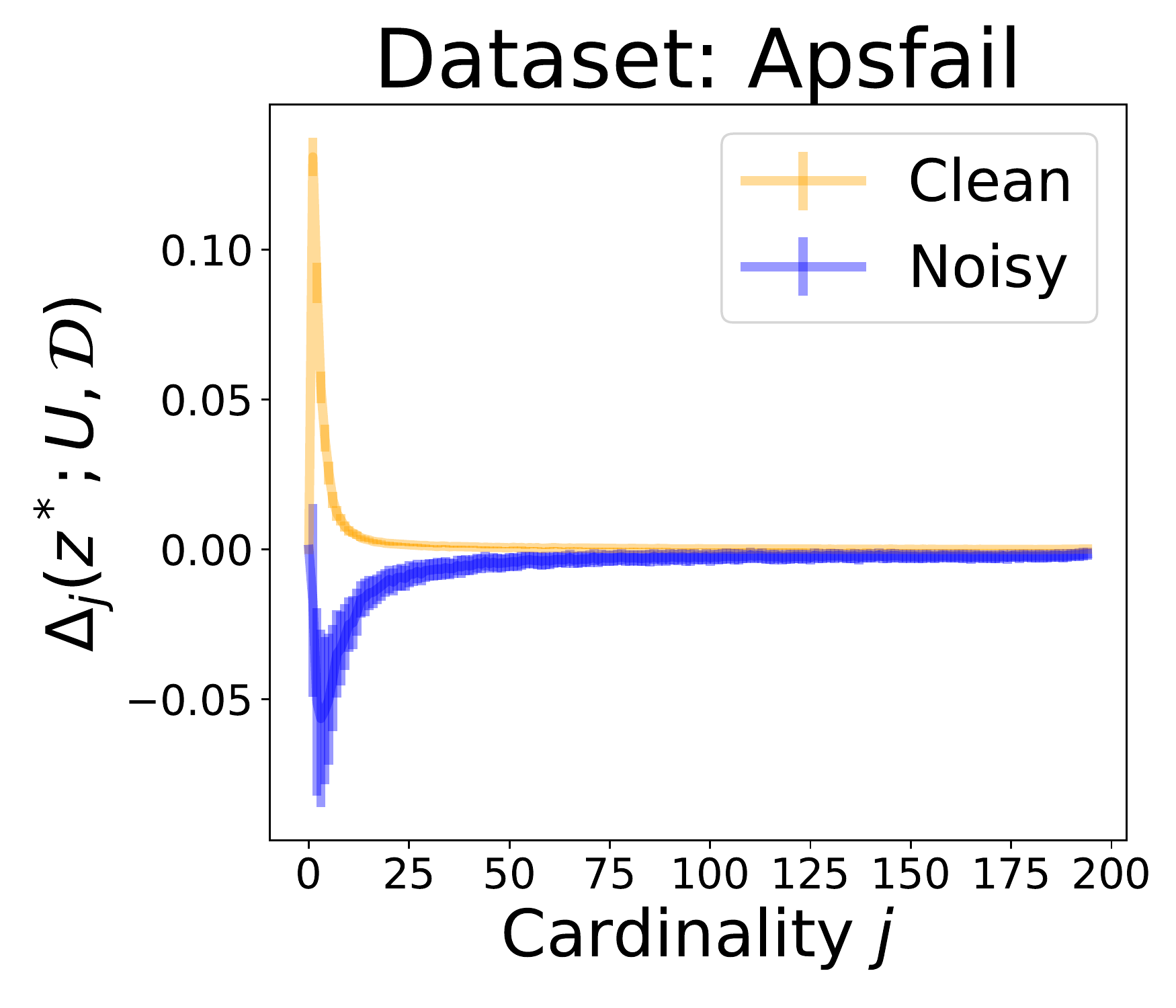}
    \includegraphics[width=0.245\textwidth]{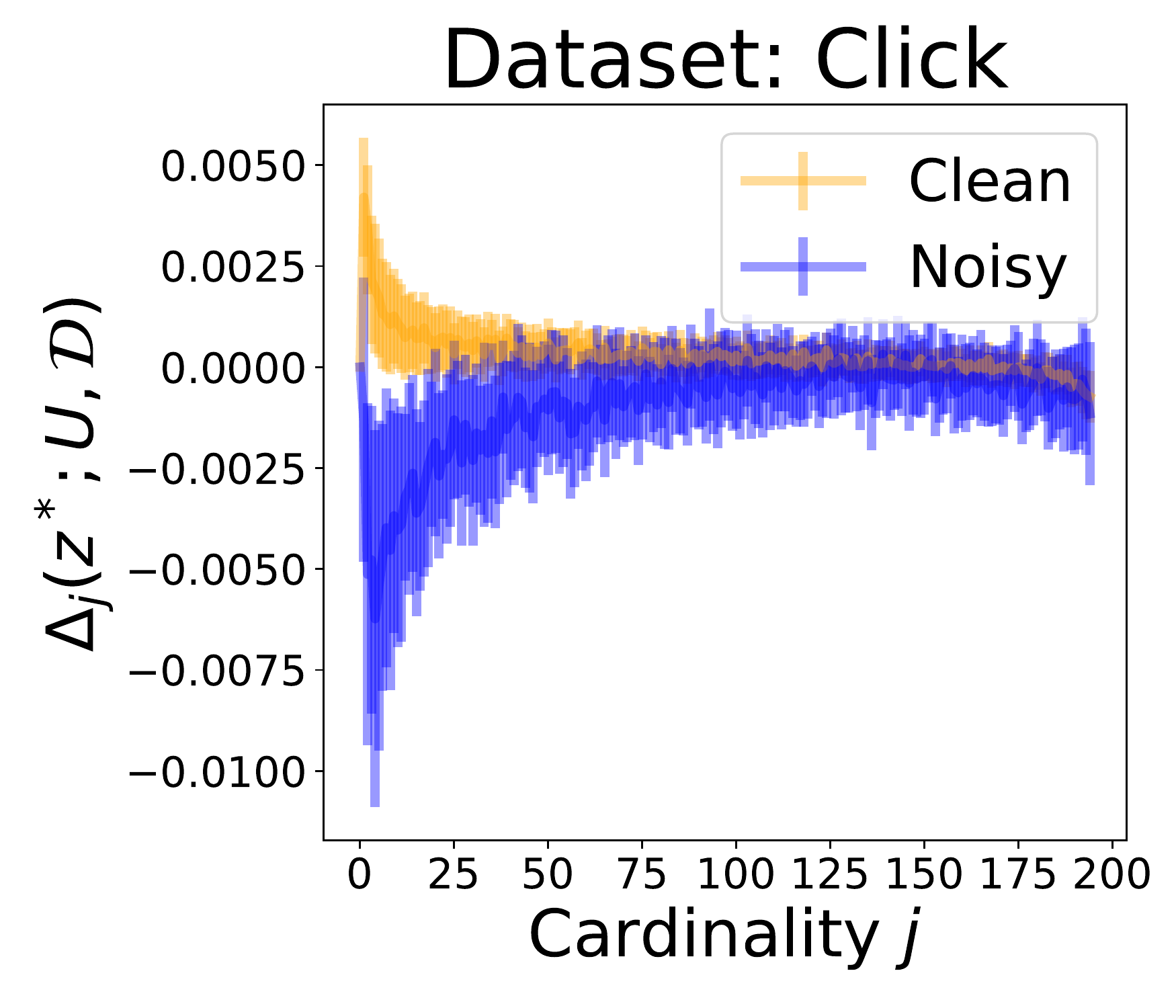}
    \includegraphics[width=0.245\textwidth]{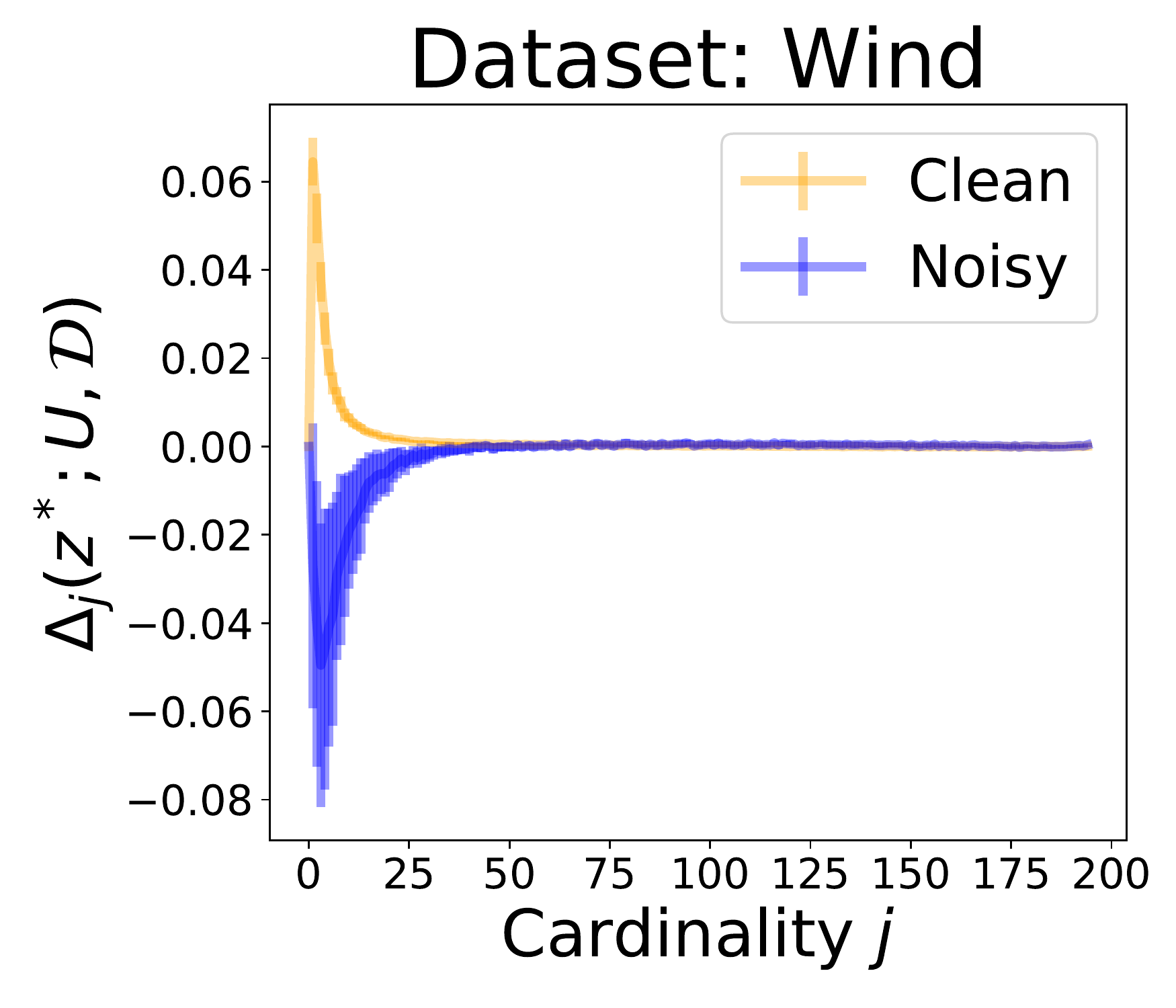}\\
    \includegraphics[width=0.245\textwidth]{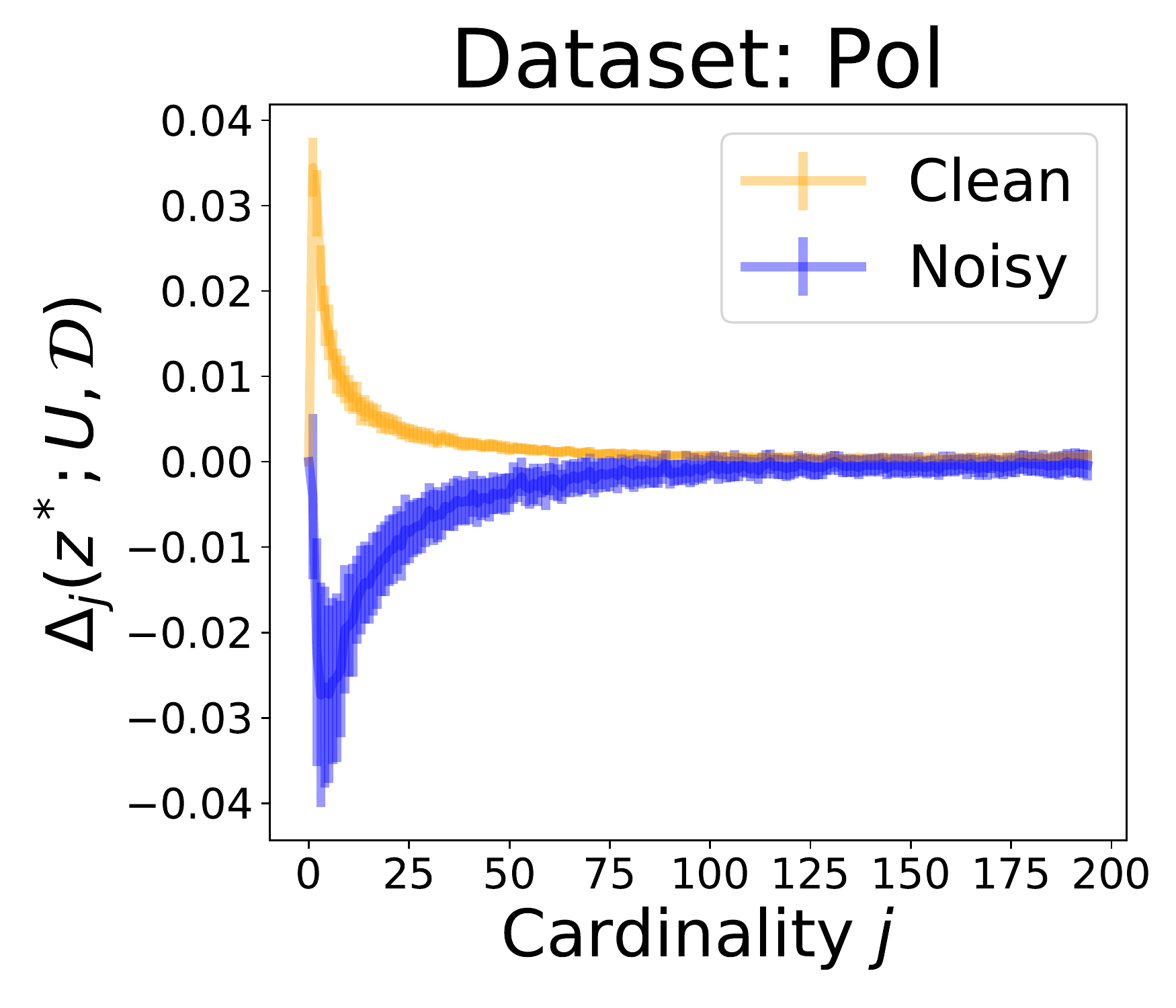}
    \includegraphics[width=0.245\textwidth]{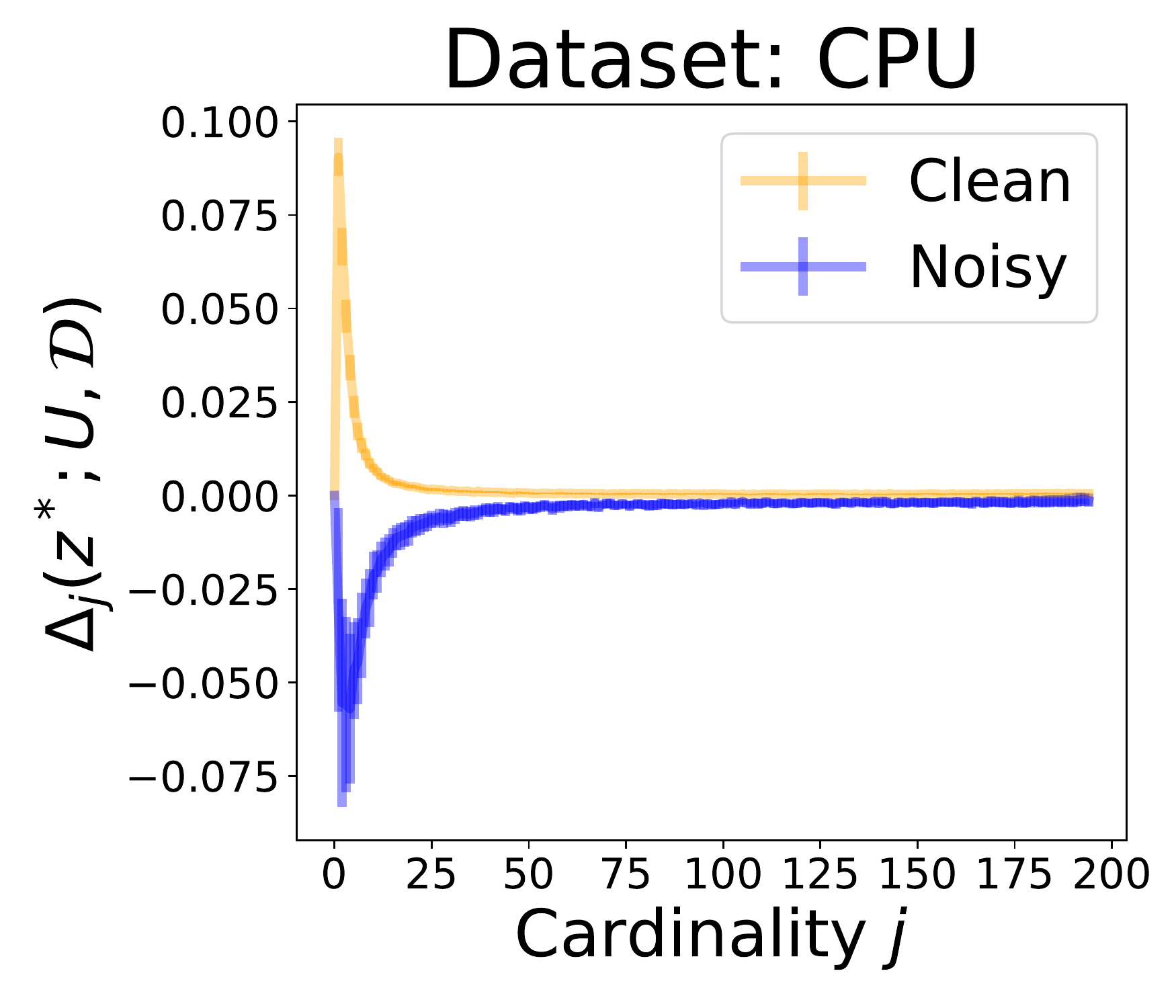}
    \includegraphics[width=0.245\textwidth]{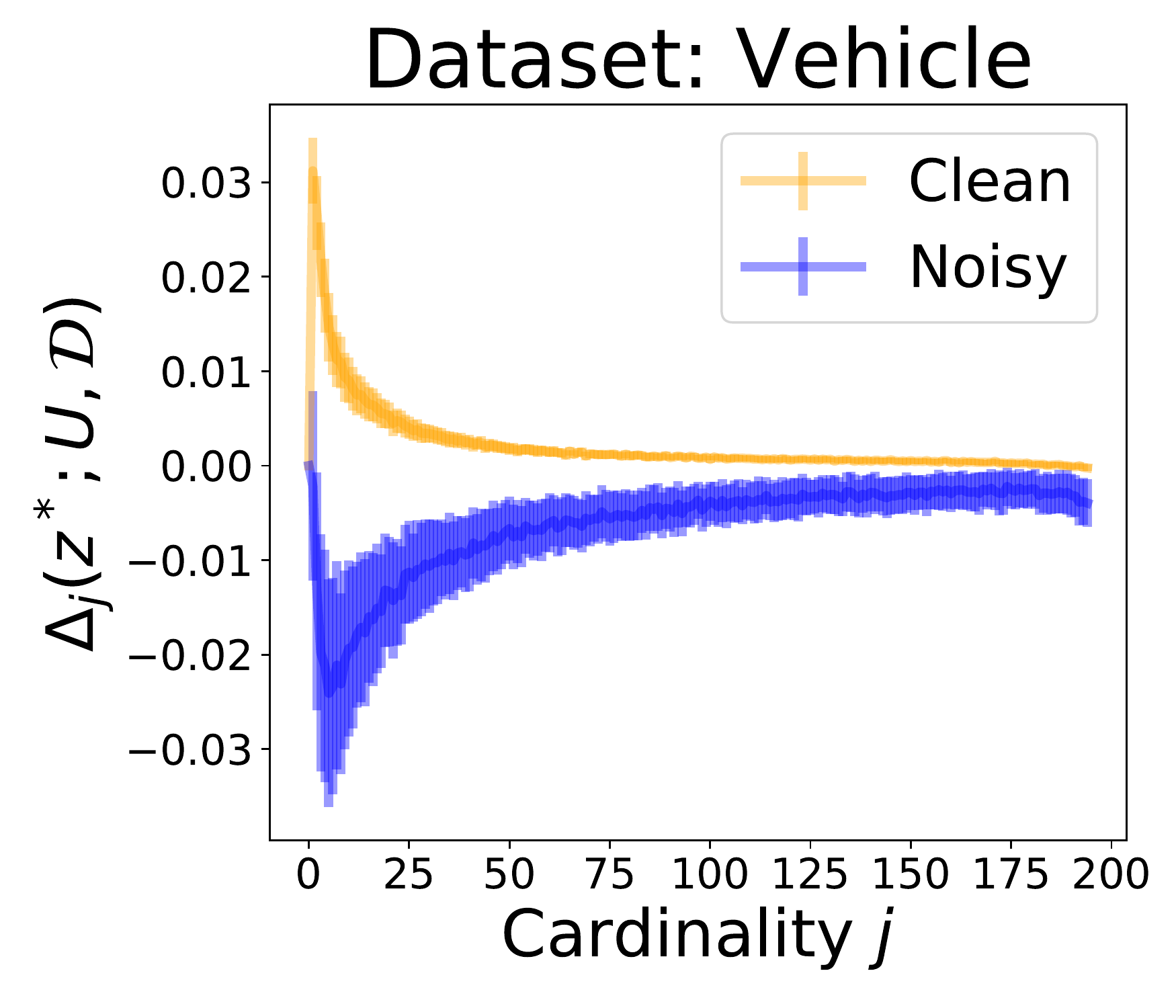}
    \caption{Illustrations of the marginal contributions $\Delta_j (z^*; U, \mathcal{D})$ as a function of the cardinality $j$ on the eleven datasets. Each color indicates a noisy (blue) and a clean (yellow) data point. When the cardinality $j$ is large, it is hard to tell if point is noisy or not as they become similar or even reversed.}
    \label{fig:app_clean_noisy_marginal_contributions}
\end{figure*}

\begin{figure*}[t]
    \vspace{-0.05in}
    \centering
    \includegraphics[width=0.245\textwidth]{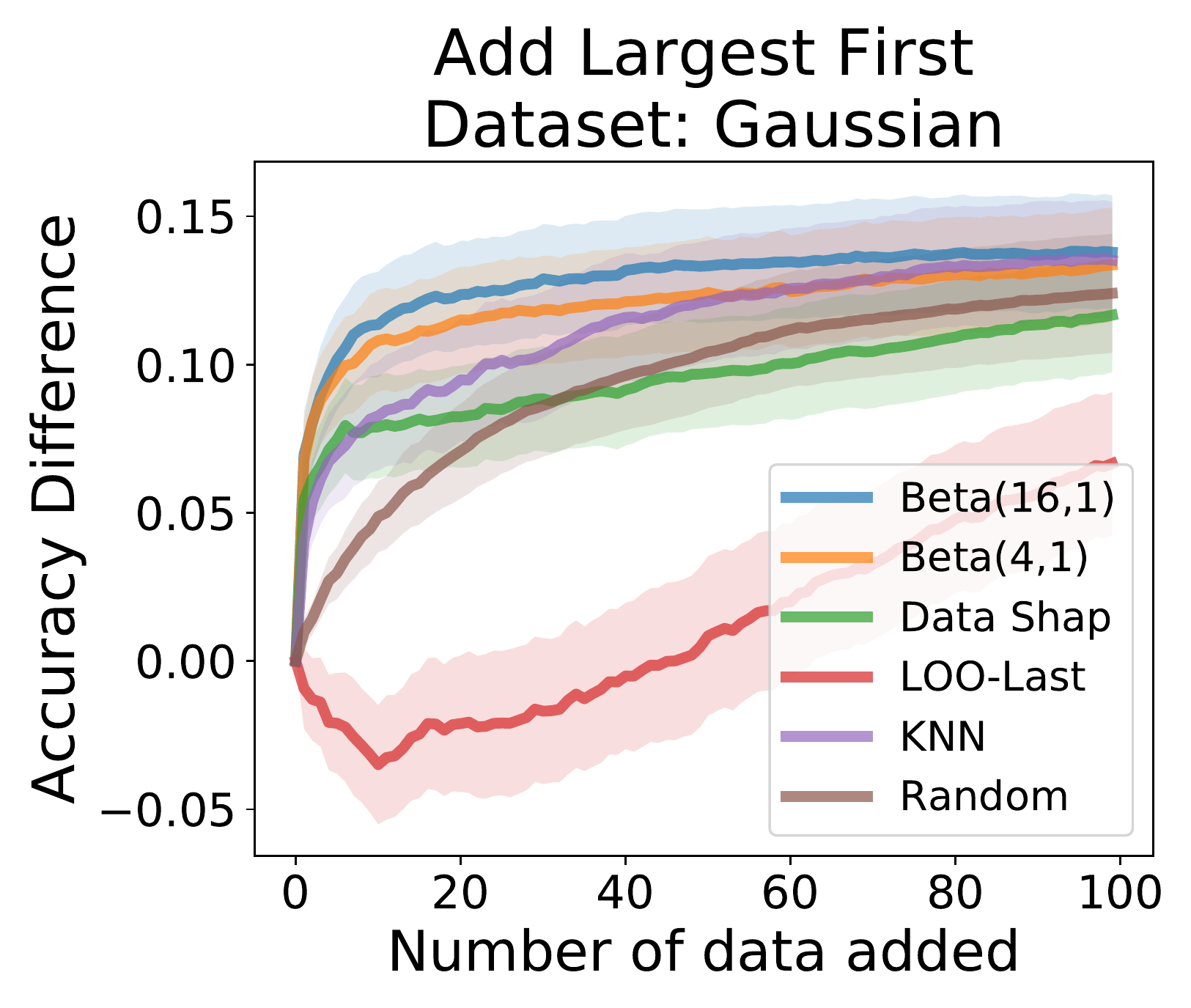}
    \includegraphics[width=0.245\textwidth]{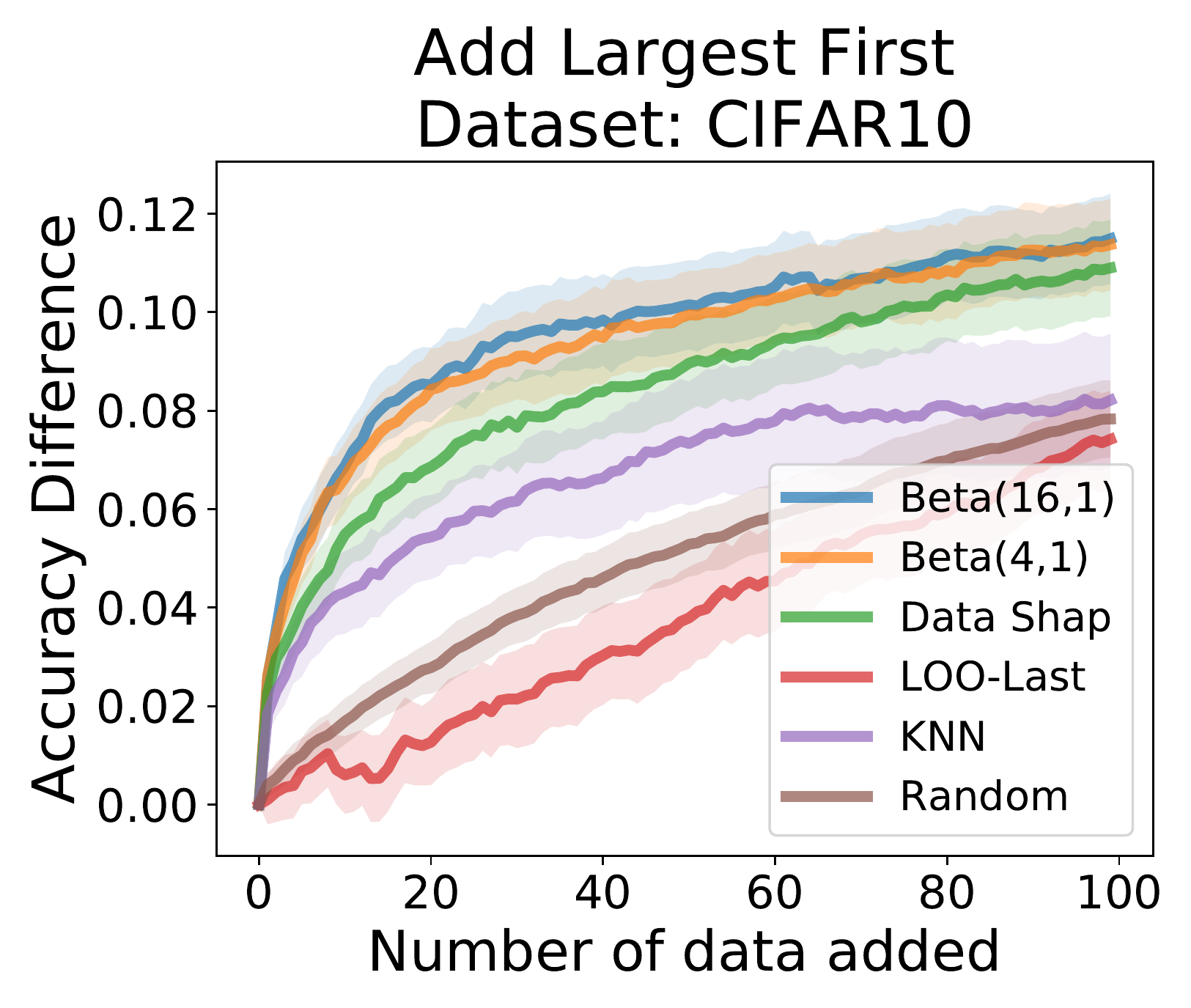}
    \includegraphics[width=0.245\textwidth]{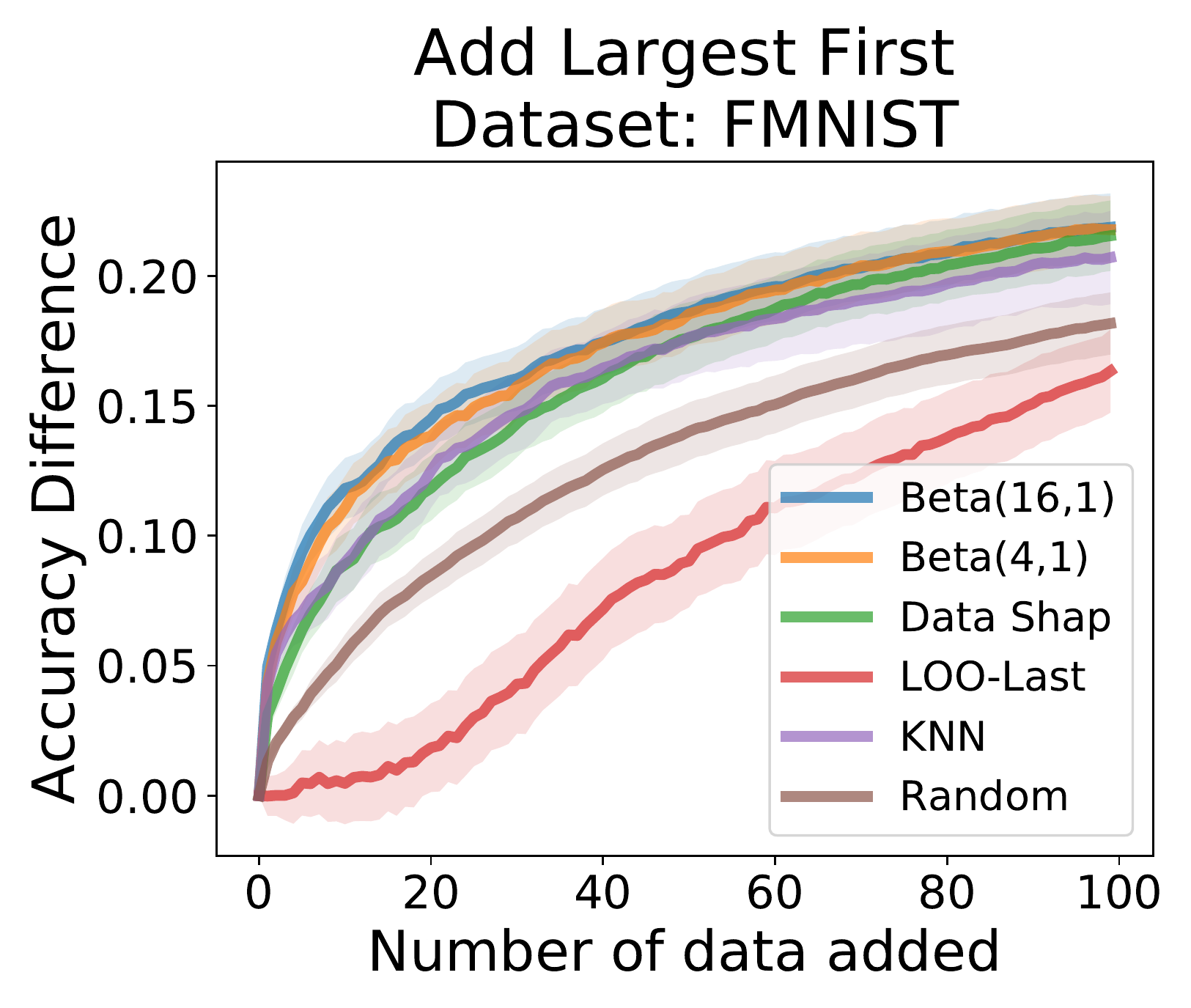}\\
    \includegraphics[width=0.245\textwidth]{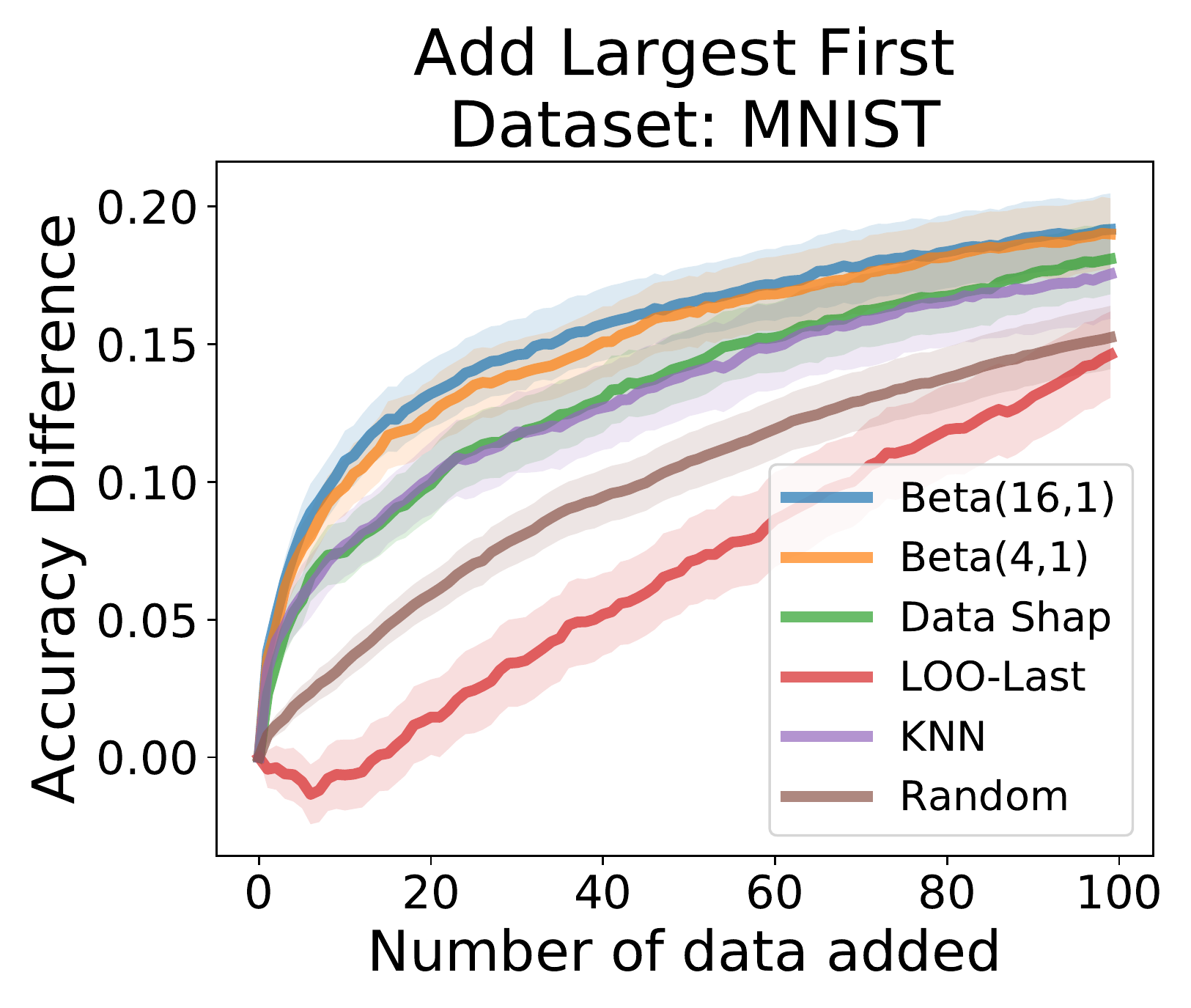}
    \includegraphics[width=0.245\textwidth]{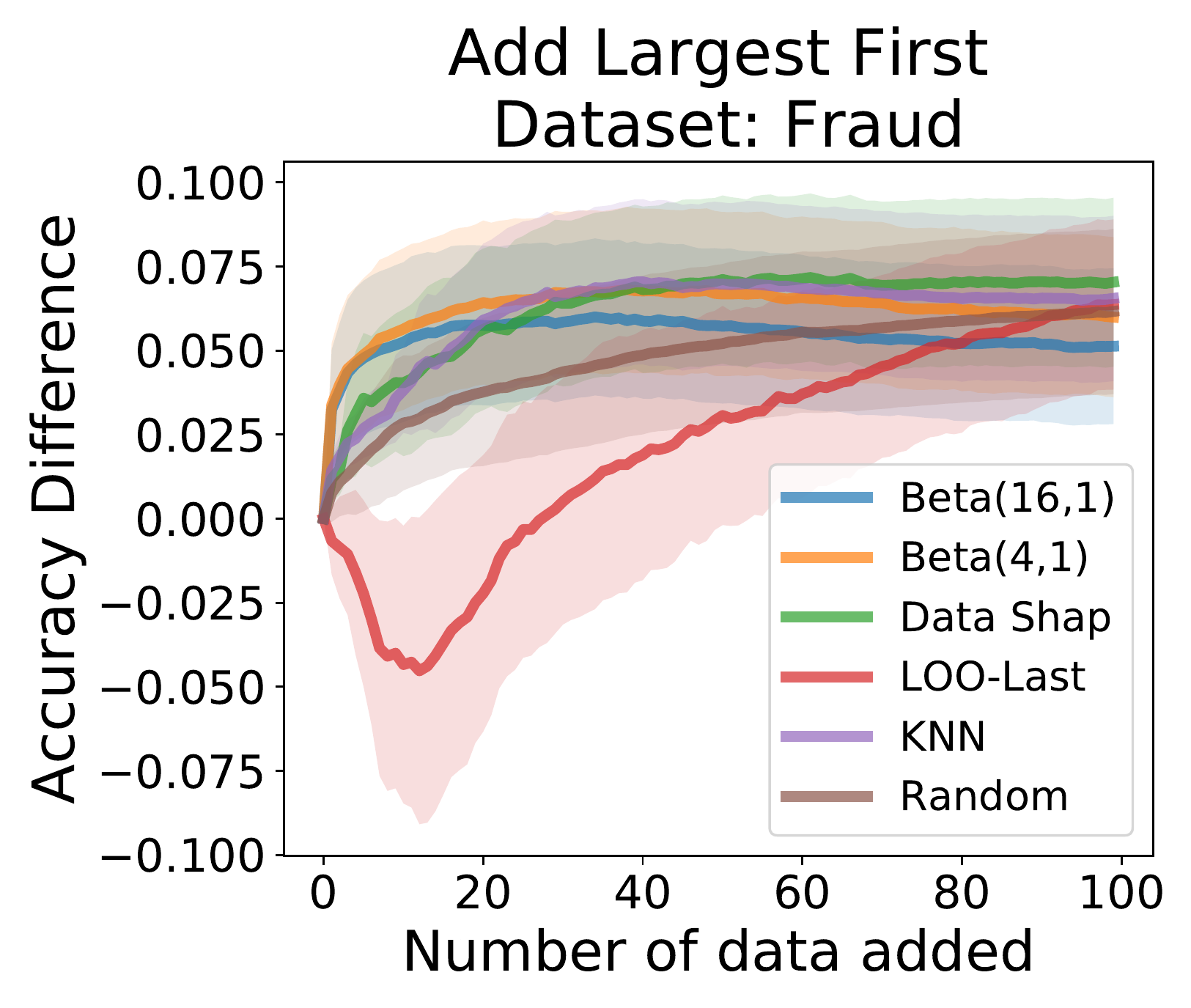}
    \includegraphics[width=0.245\textwidth]{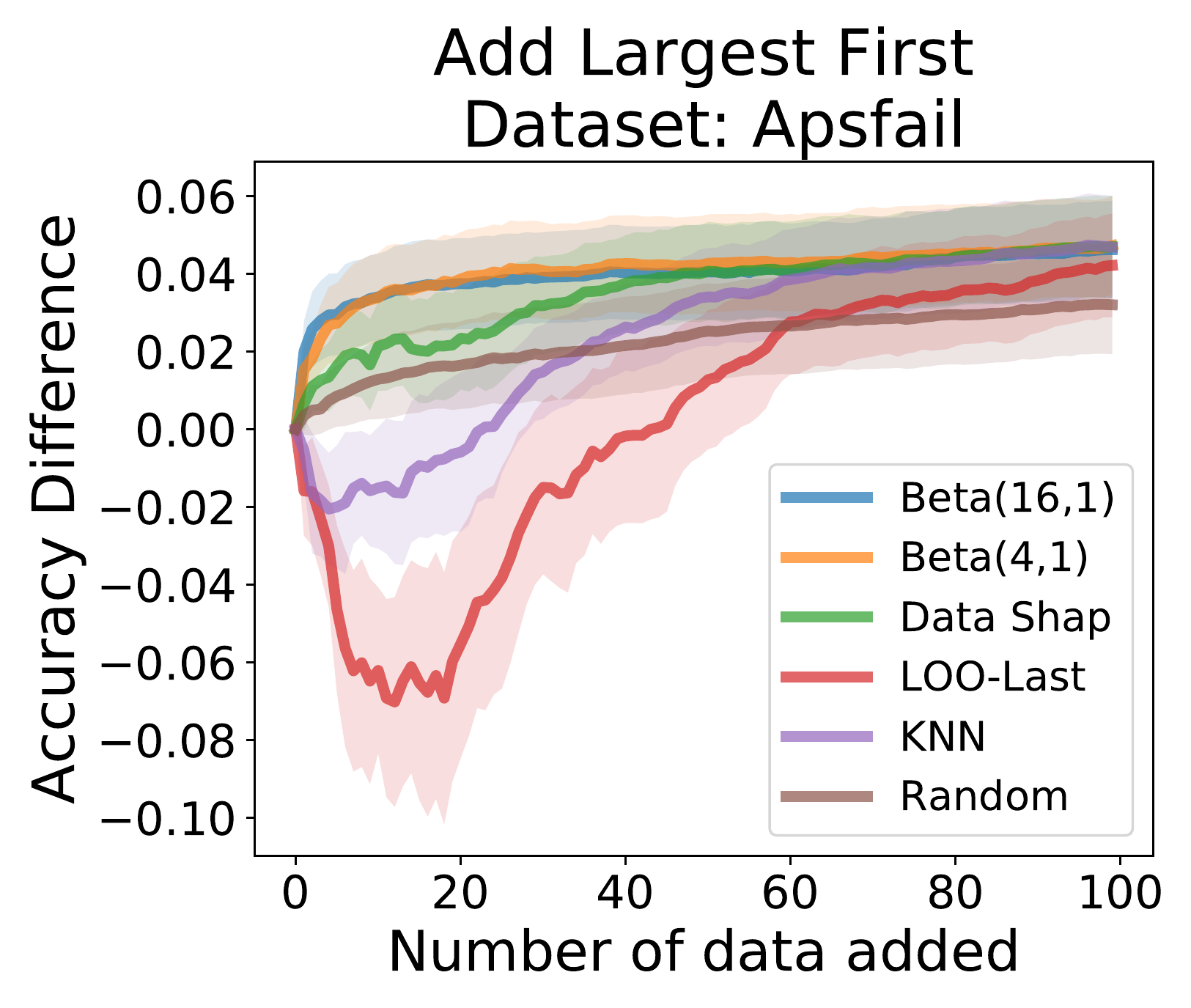}\\
    \includegraphics[width=0.245\textwidth]{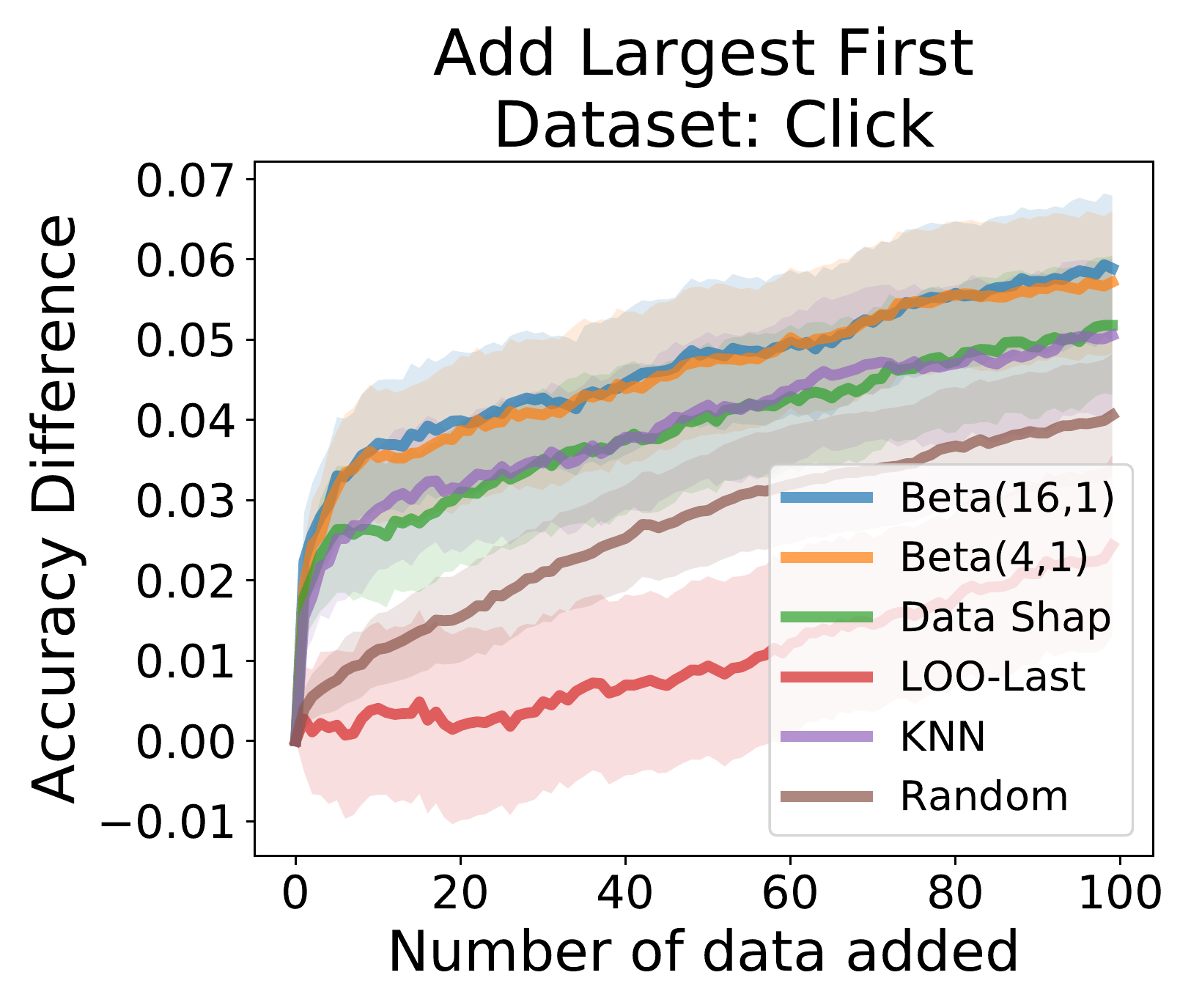}
    \includegraphics[width=0.245\textwidth]{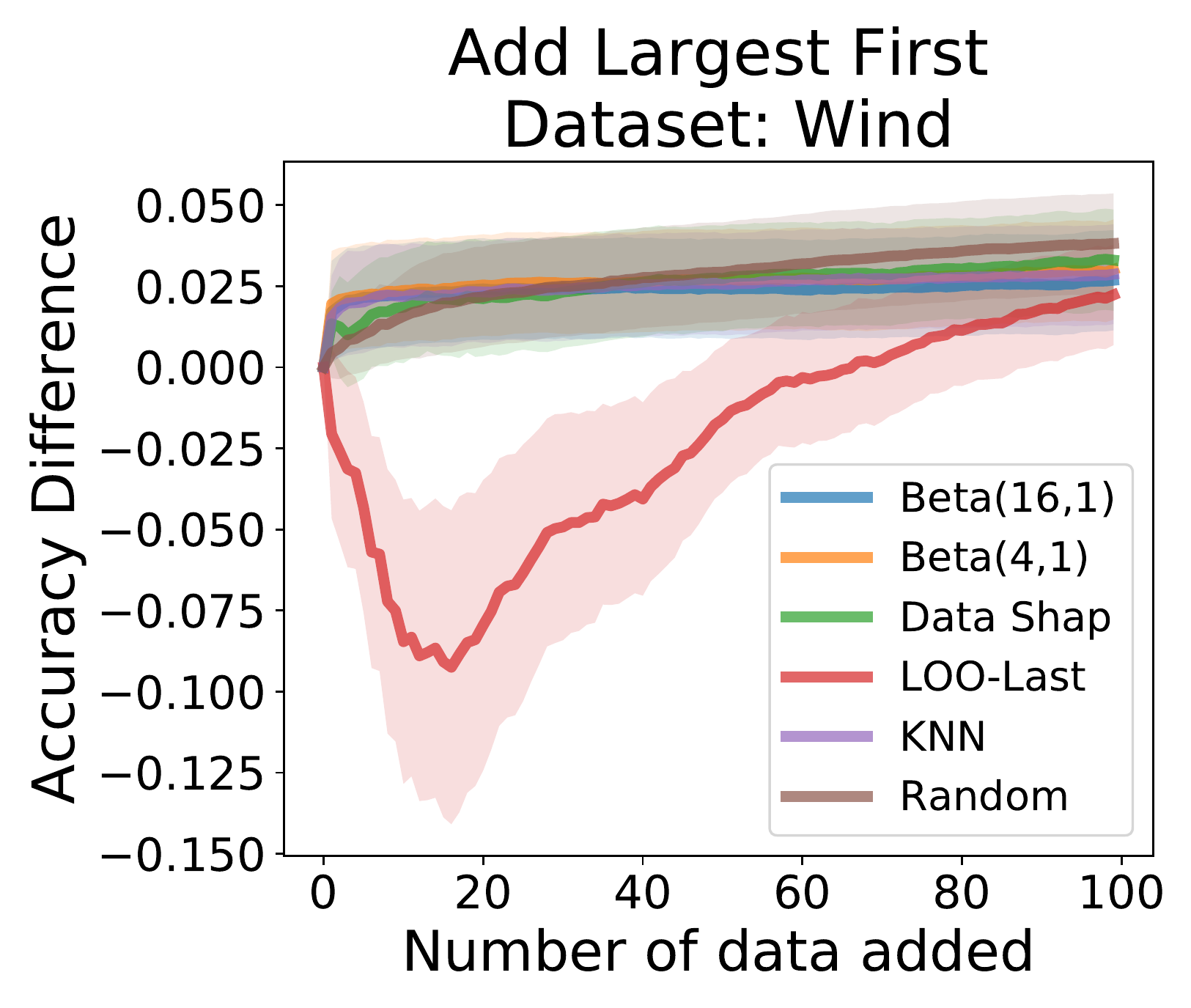}
    \includegraphics[width=0.245\textwidth]{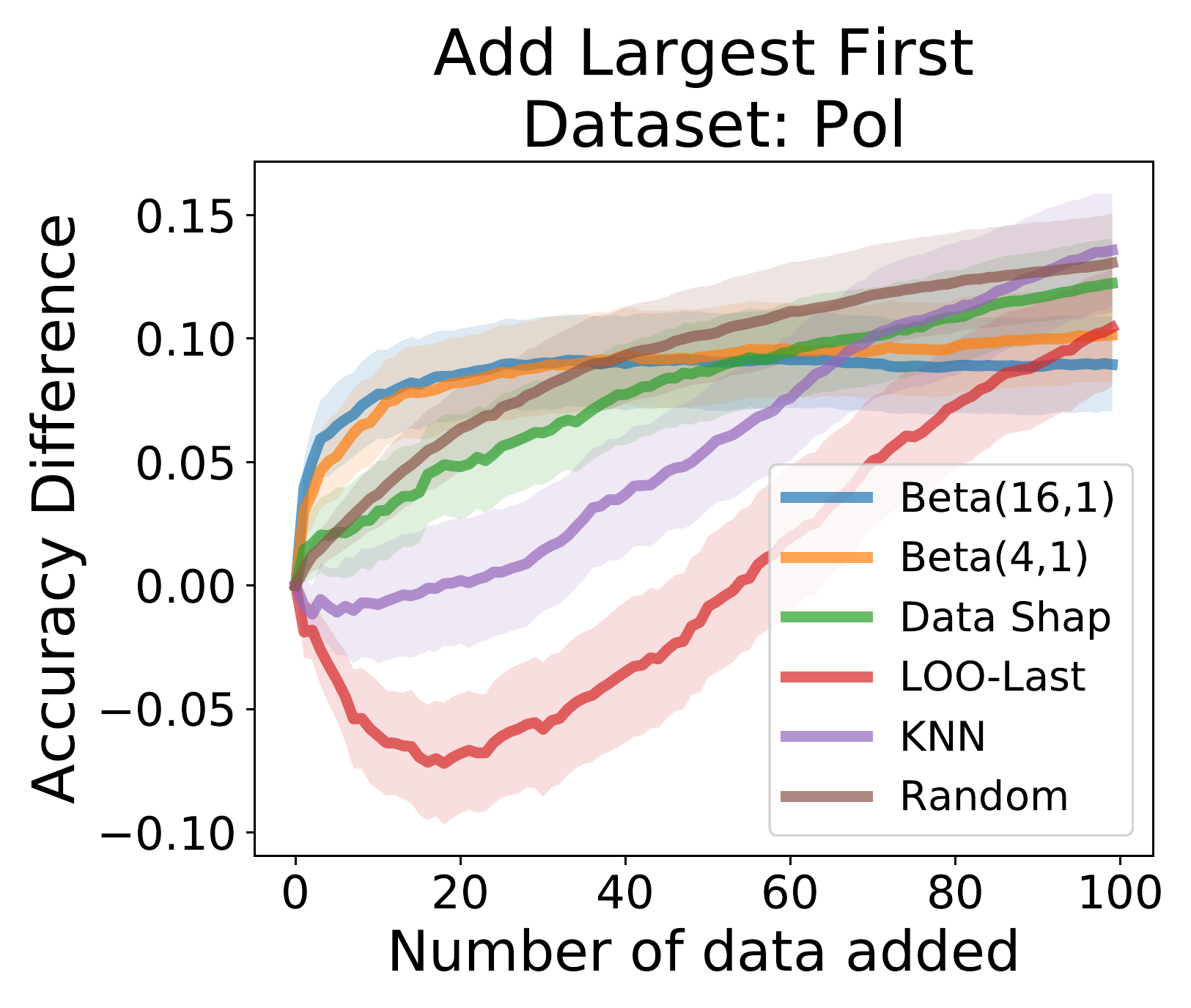}\\
    \includegraphics[width=0.245\textwidth]{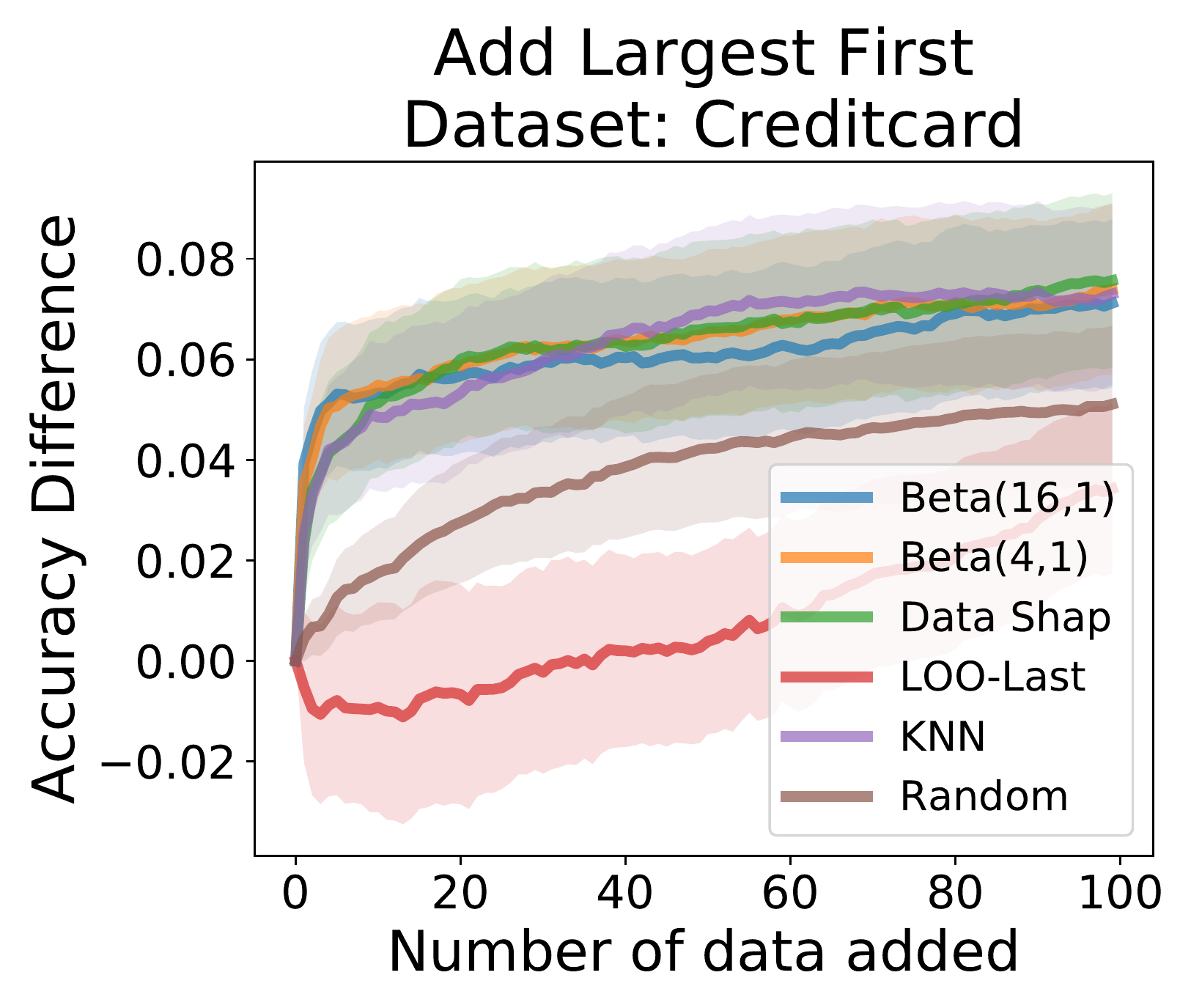}
    \includegraphics[width=0.245\textwidth]{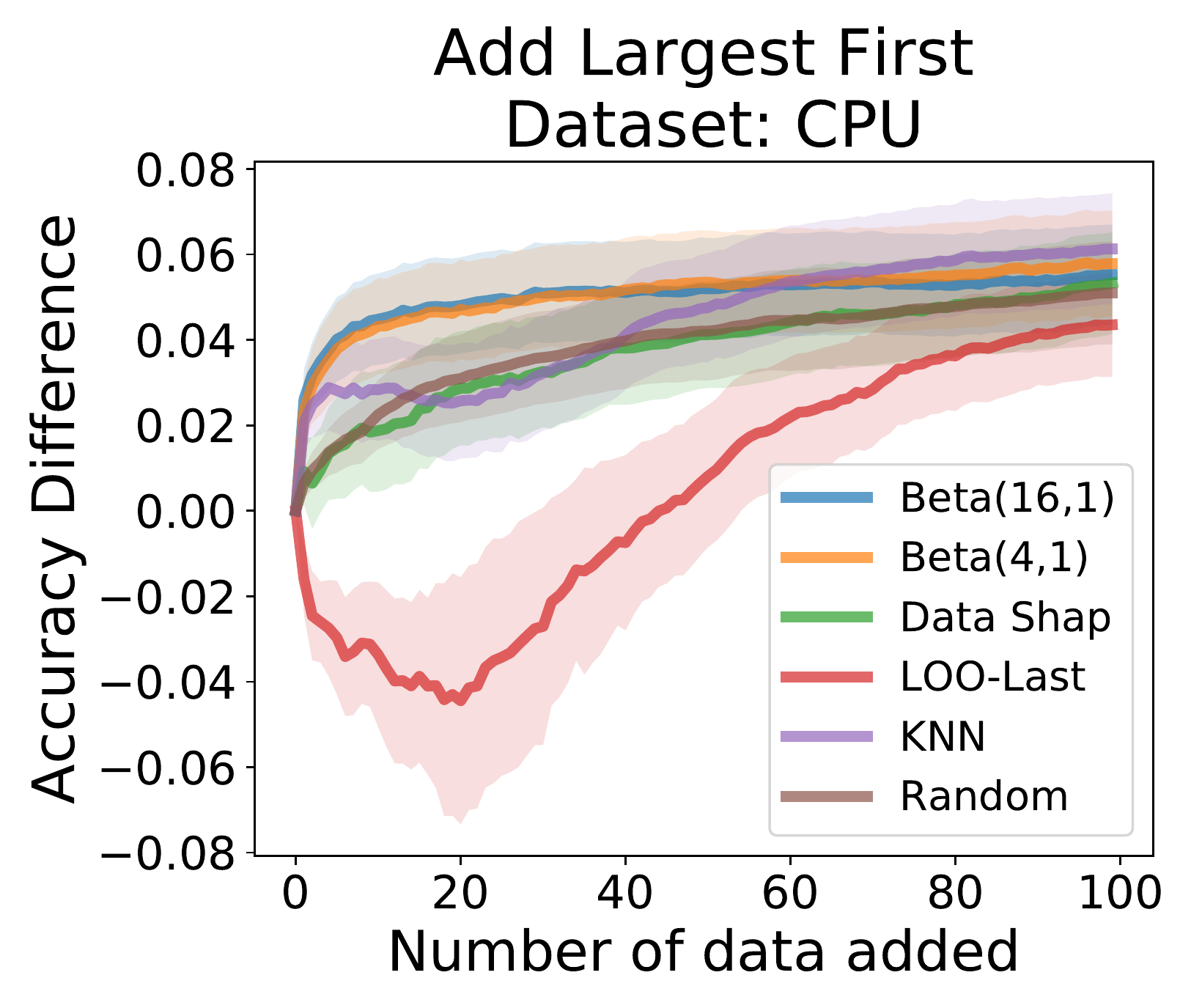}
    \includegraphics[width=0.245\textwidth]{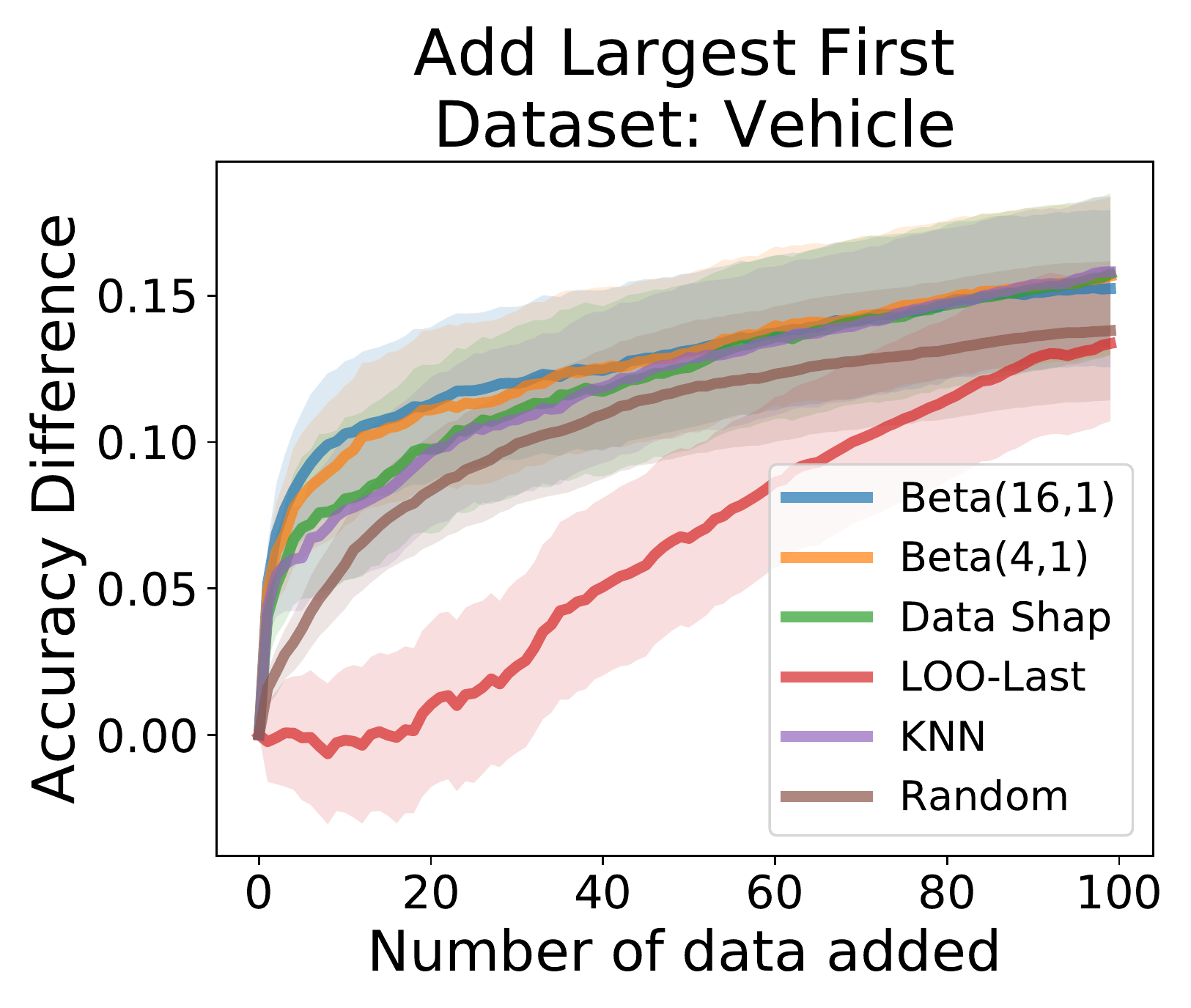}
    \includegraphics[width=0.245\textwidth]{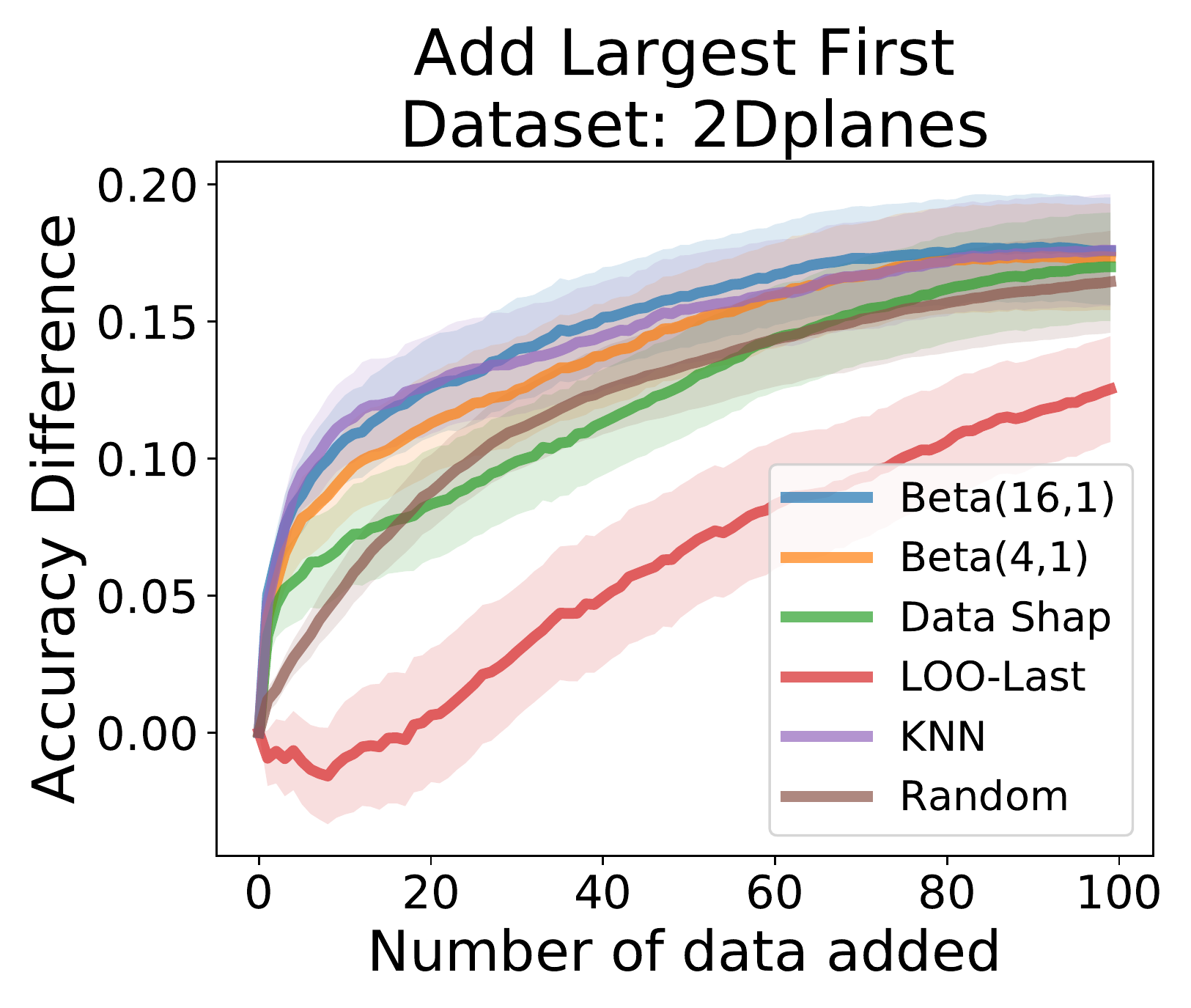}
    \caption{Accuracy change as a function of the number of data points added on the thirteen datasets. We add data points whose value is small first.}
    \label{fig:app_point_addition_experiment}
\end{figure*}

\begin{figure*}[t]
    \vspace{-0.05in}
    \centering
    \includegraphics[width=0.245\textwidth]{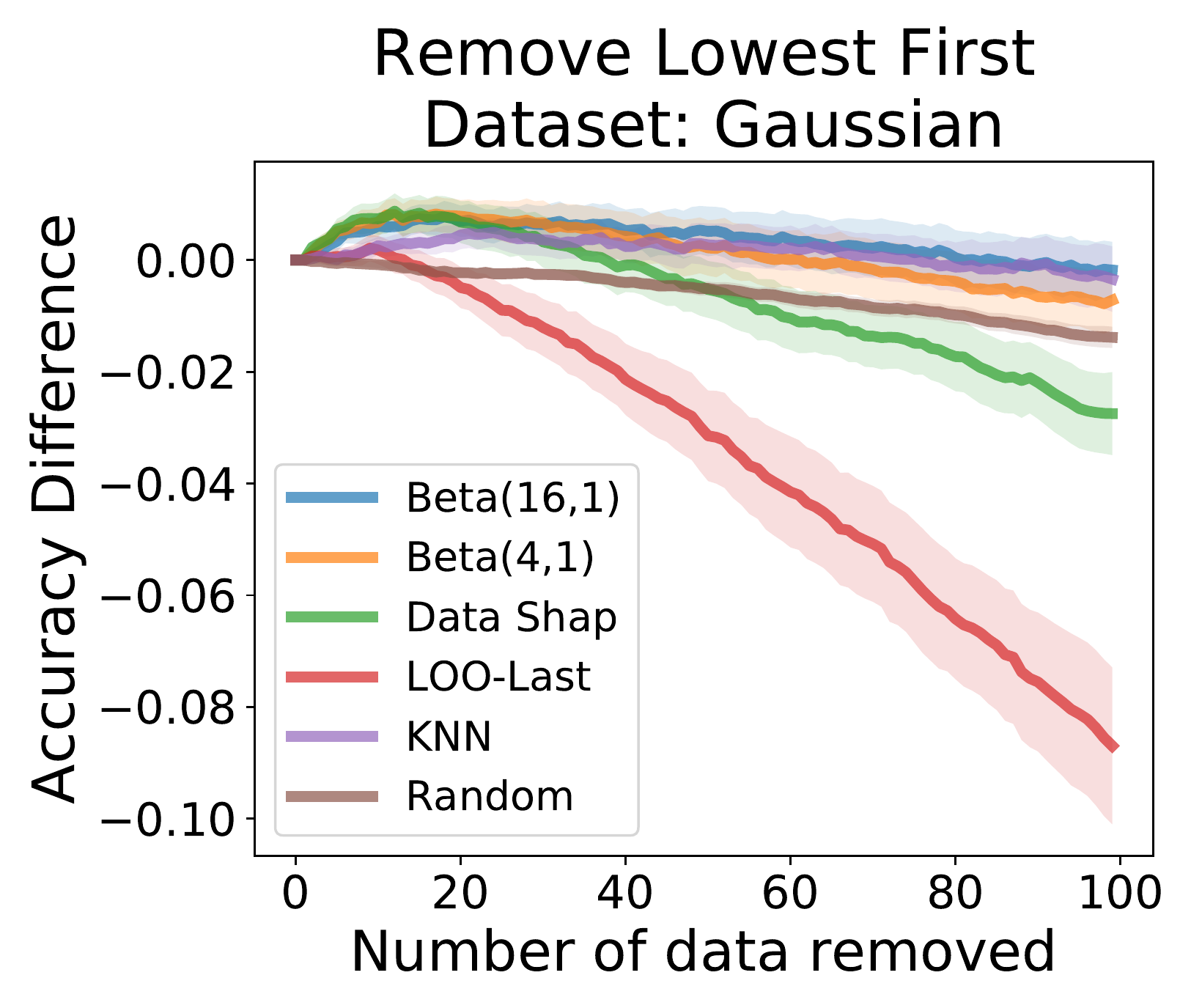}
    \includegraphics[width=0.245\textwidth]{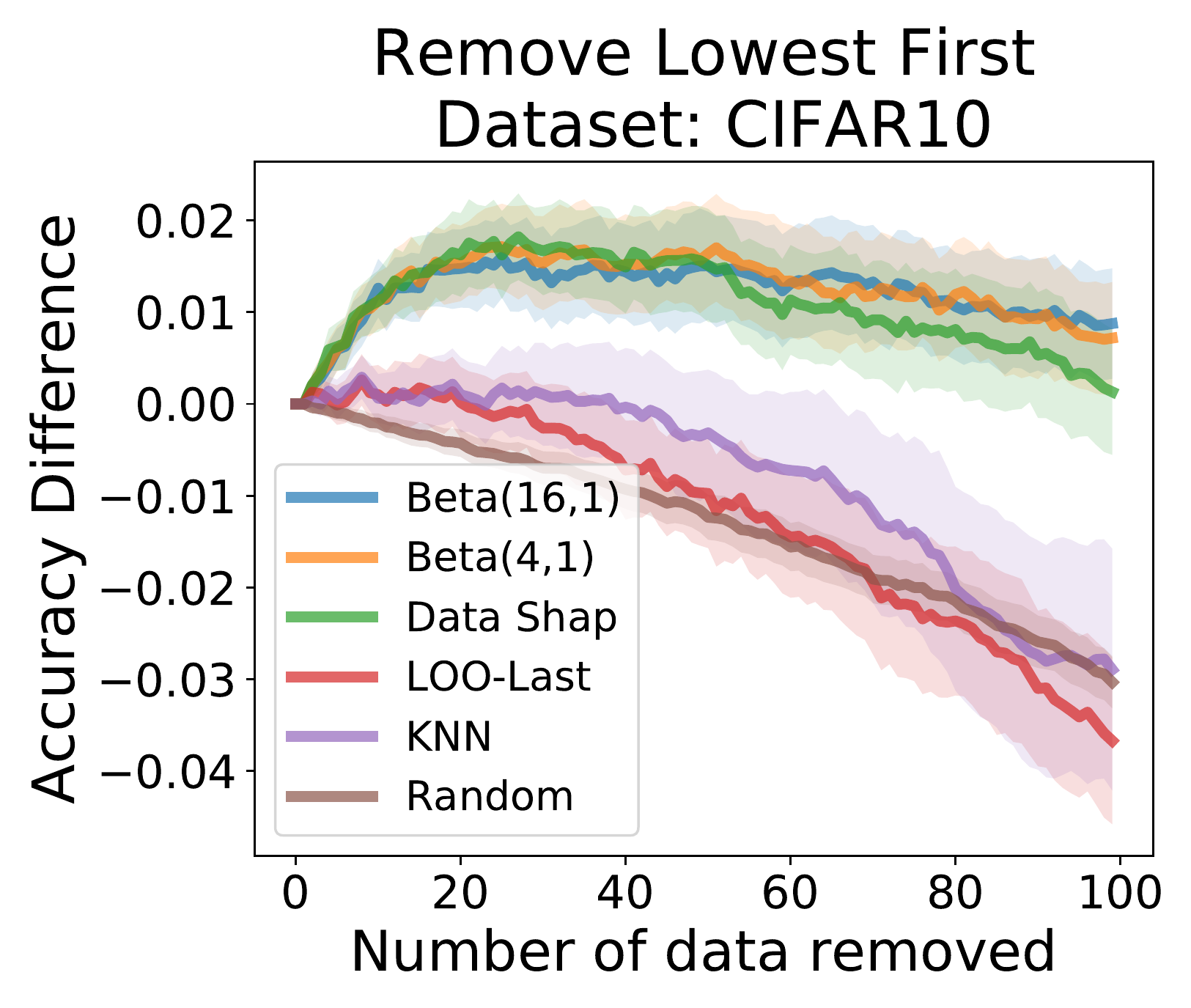}
    \includegraphics[width=0.245\textwidth]{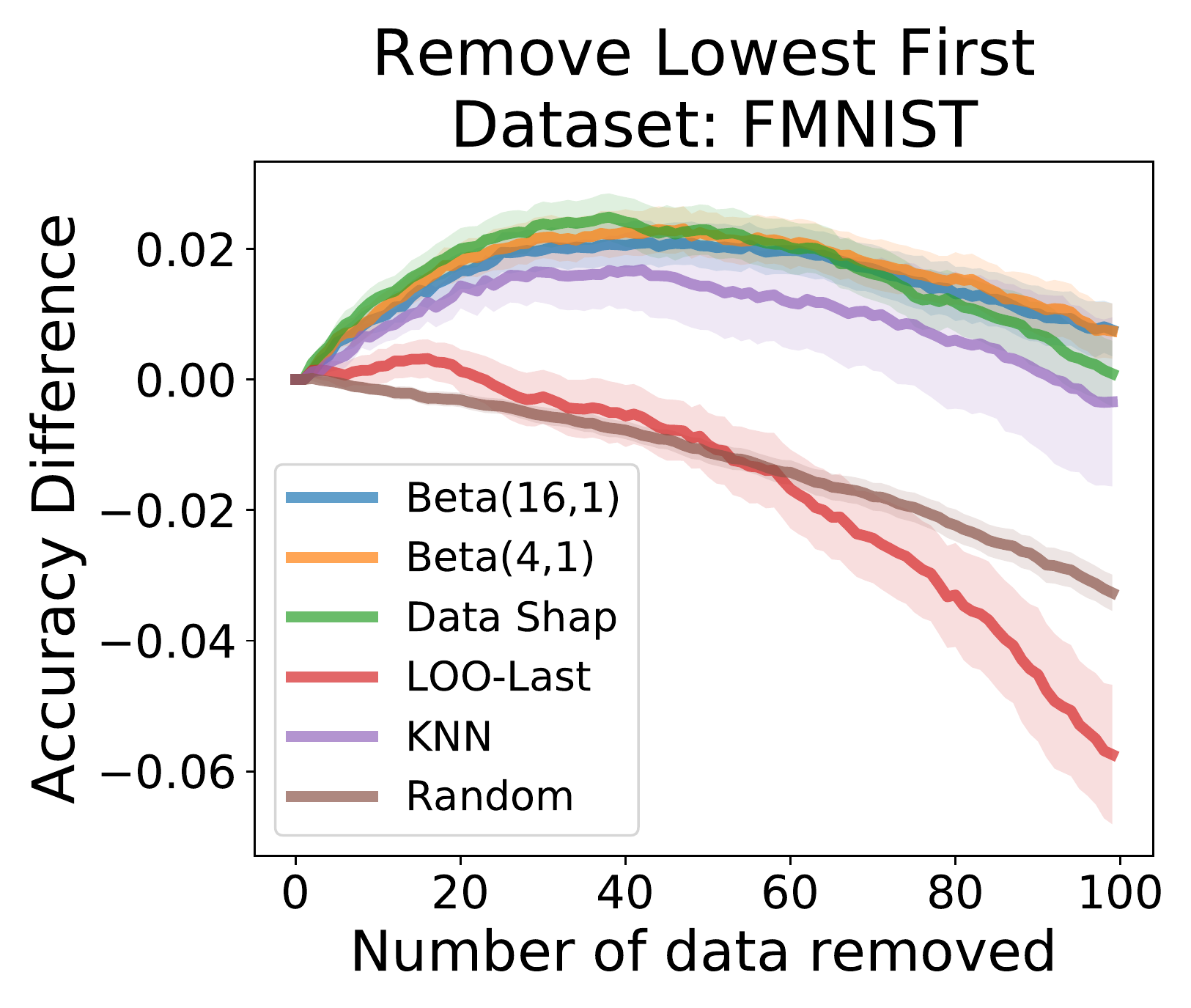}\\
    \includegraphics[width=0.245\textwidth]{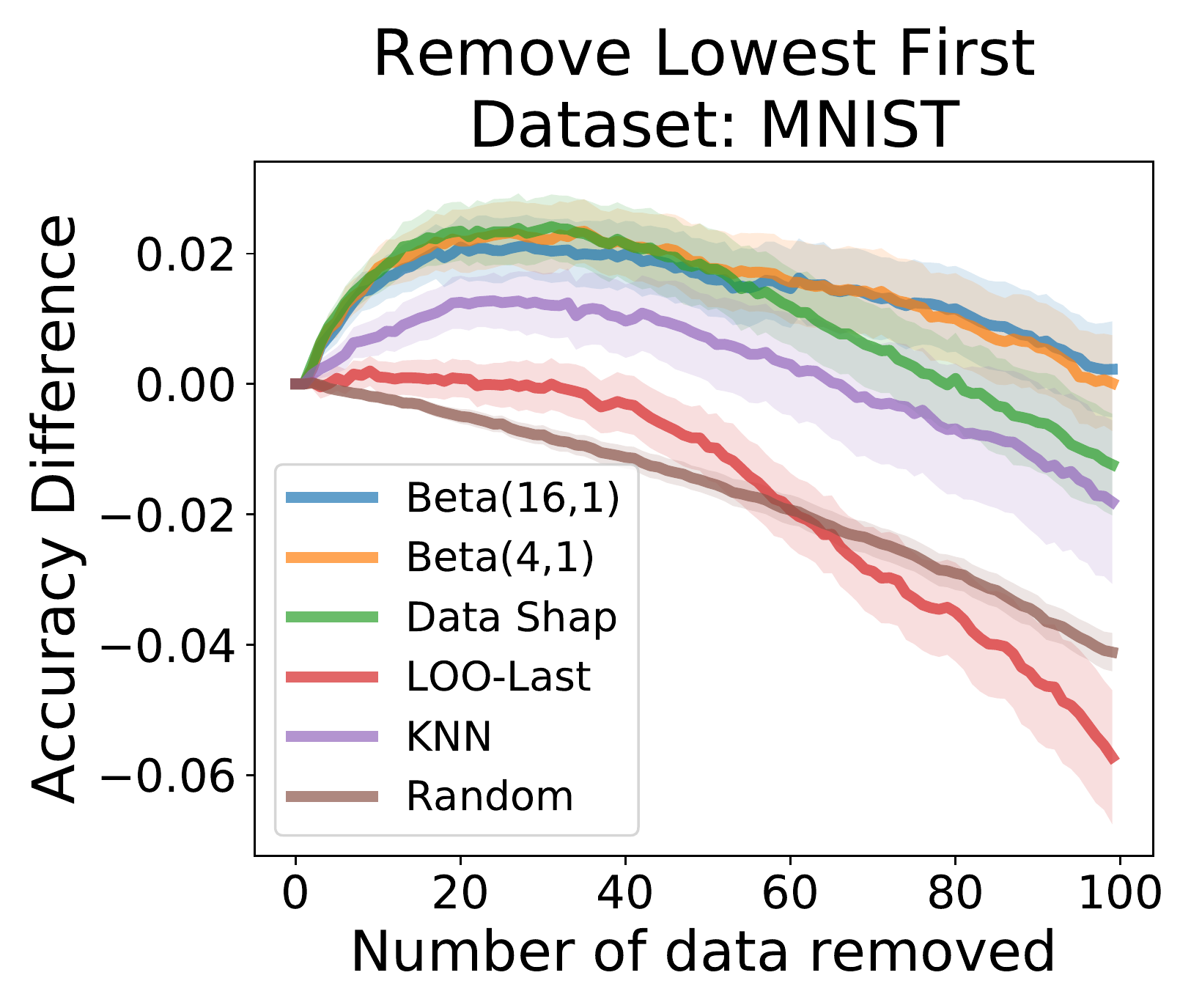}
    \includegraphics[width=0.245\textwidth]{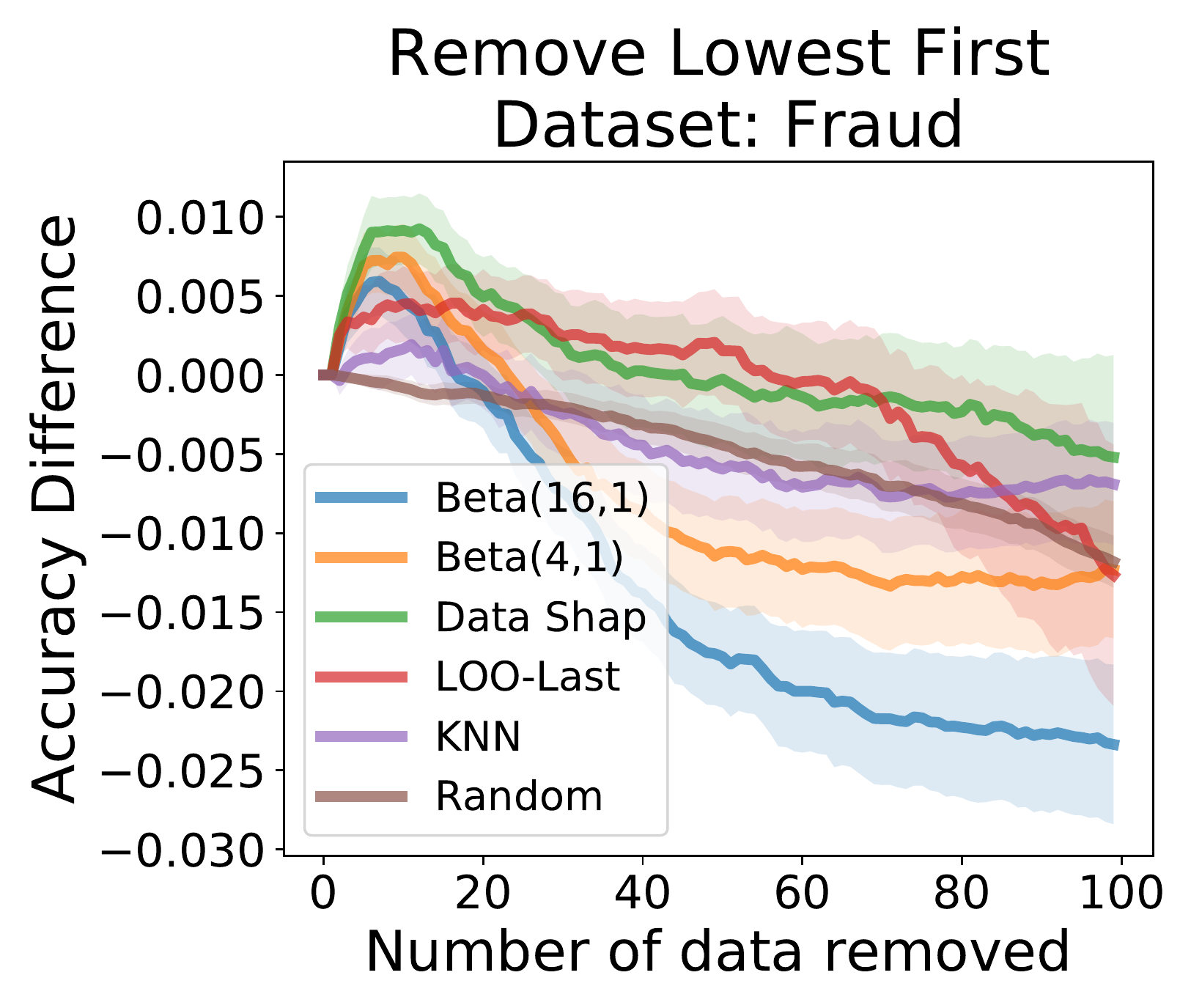}
    \includegraphics[width=0.245\textwidth]{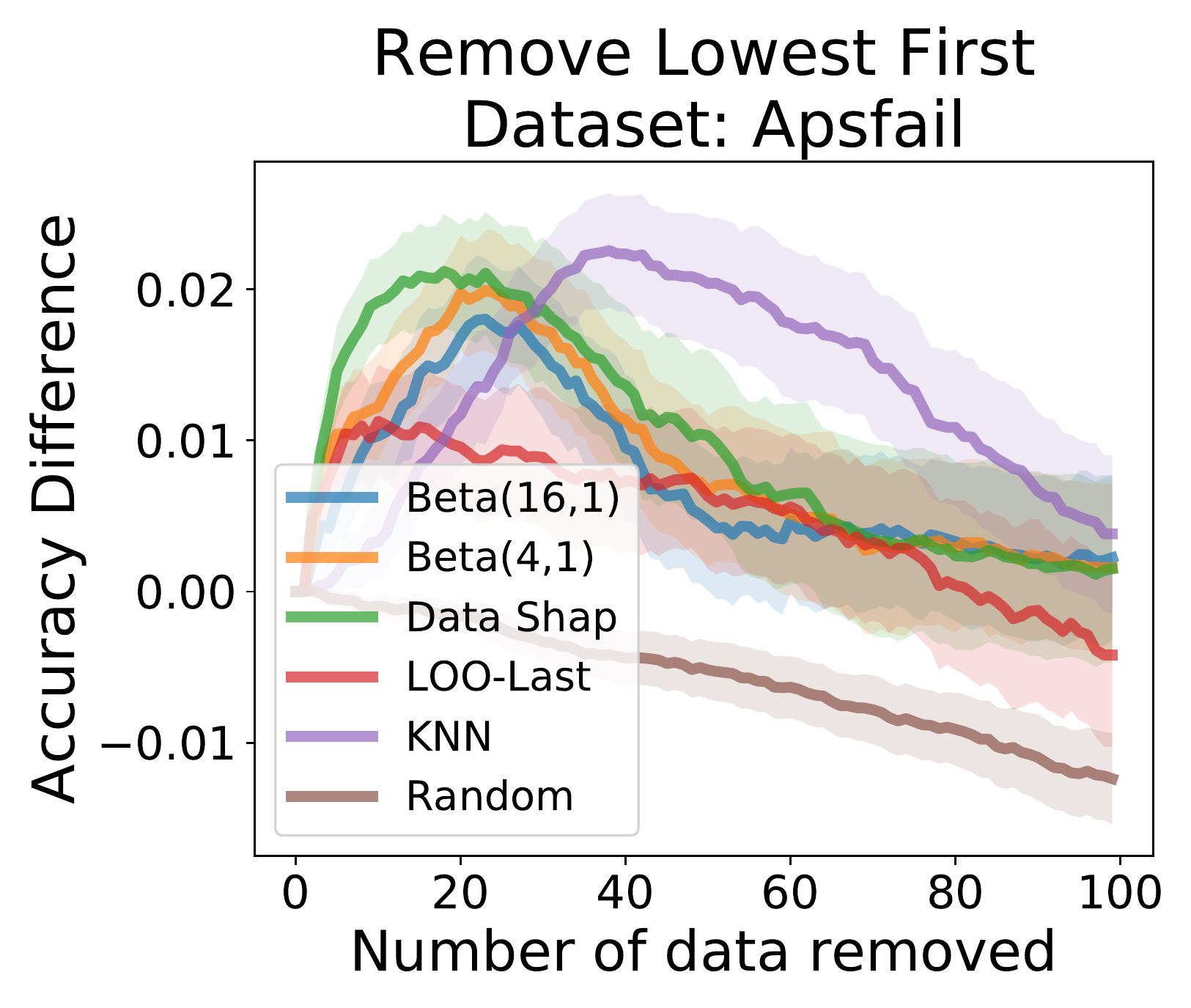}\\
    \includegraphics[width=0.245\textwidth]{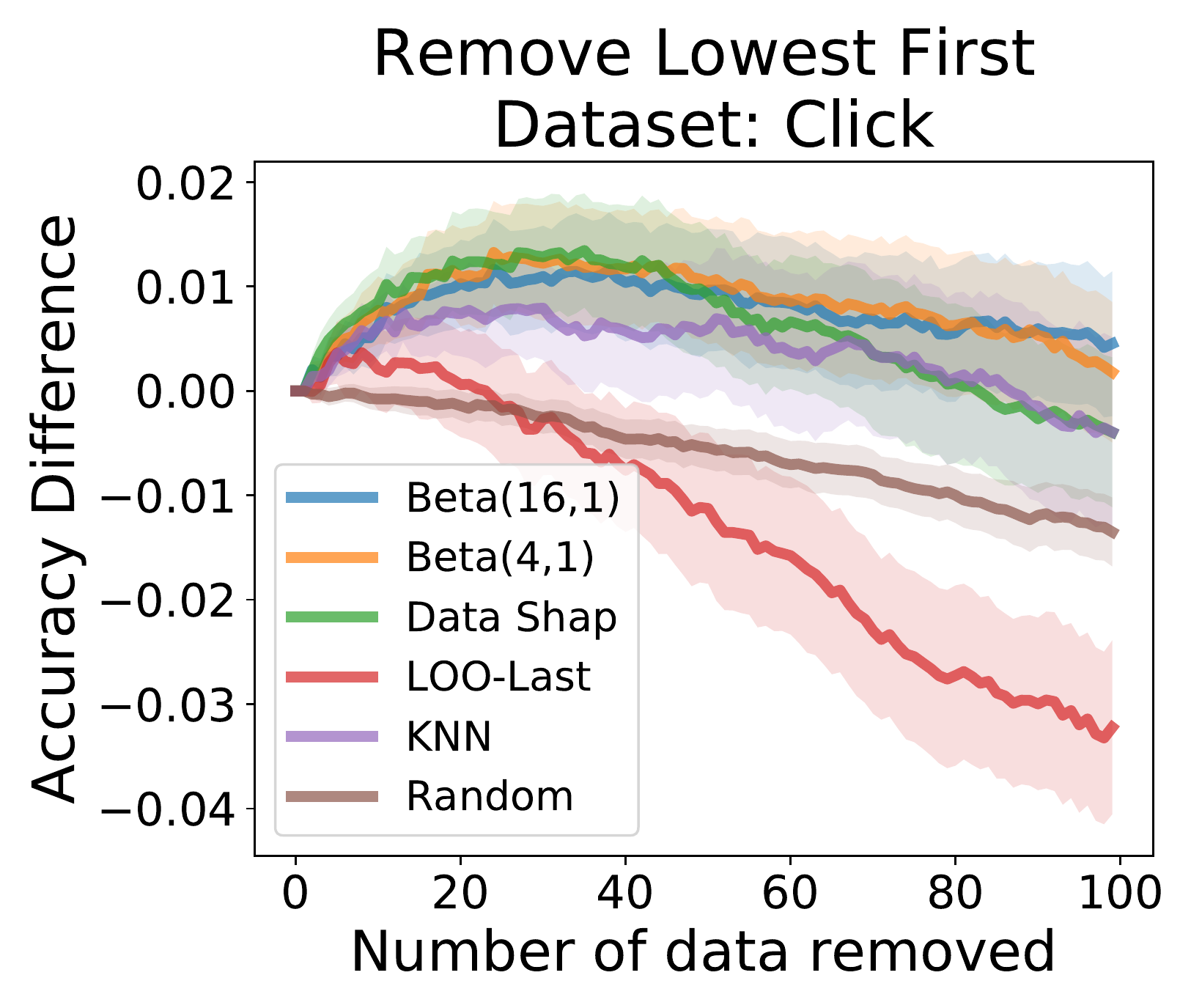}
    \includegraphics[width=0.245\textwidth]{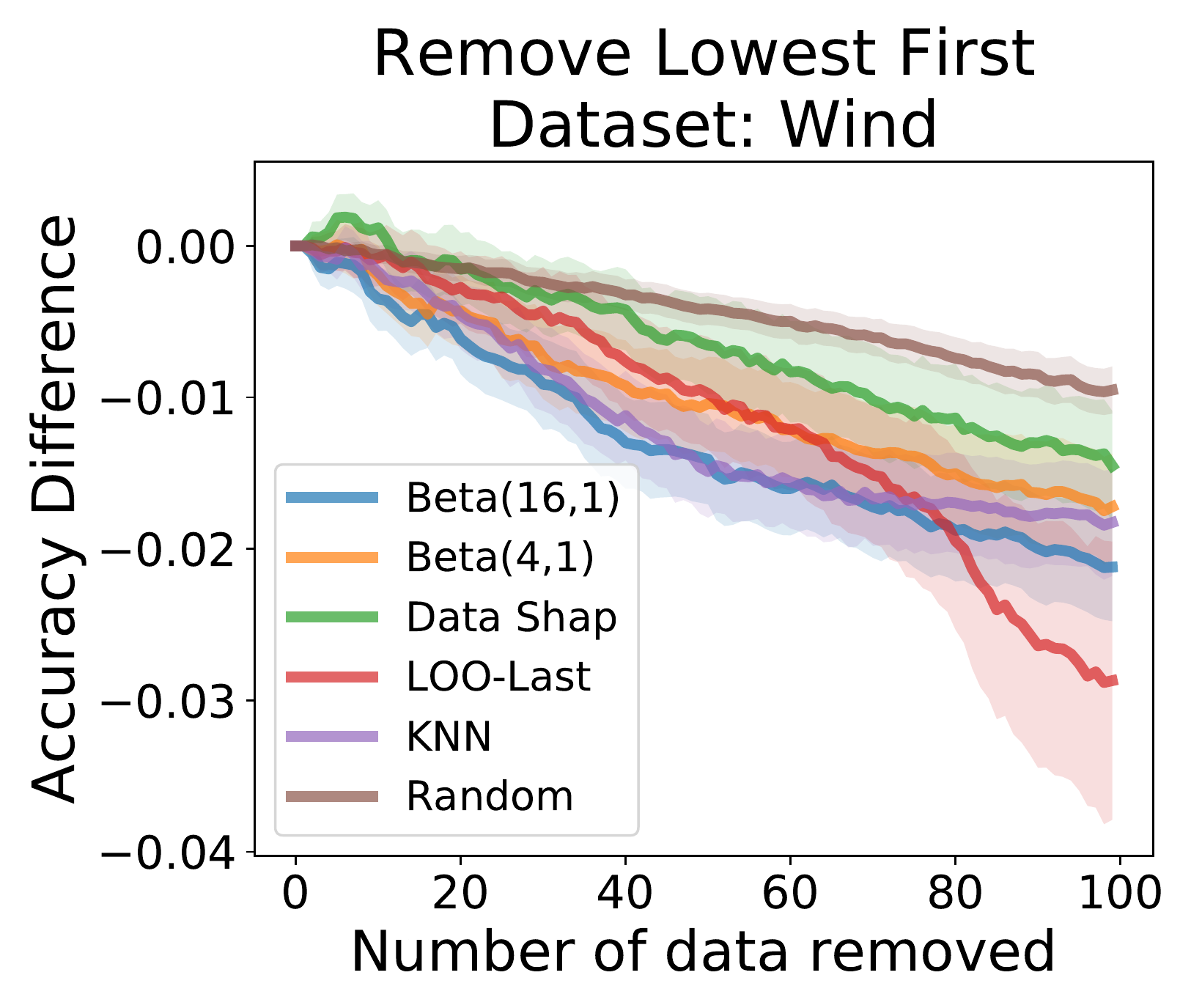}
    \includegraphics[width=0.245\textwidth]{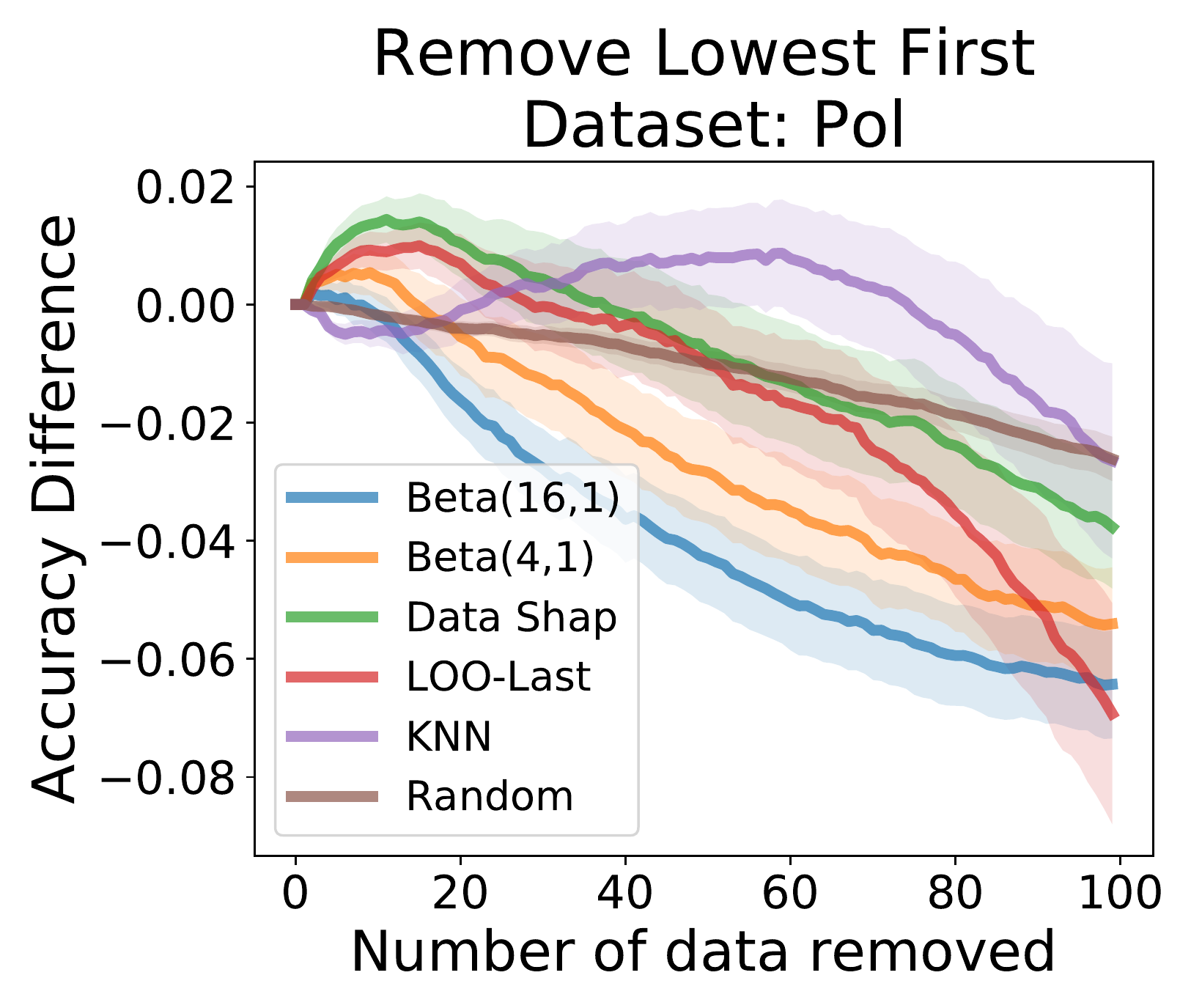}\\
    \includegraphics[width=0.245\textwidth]{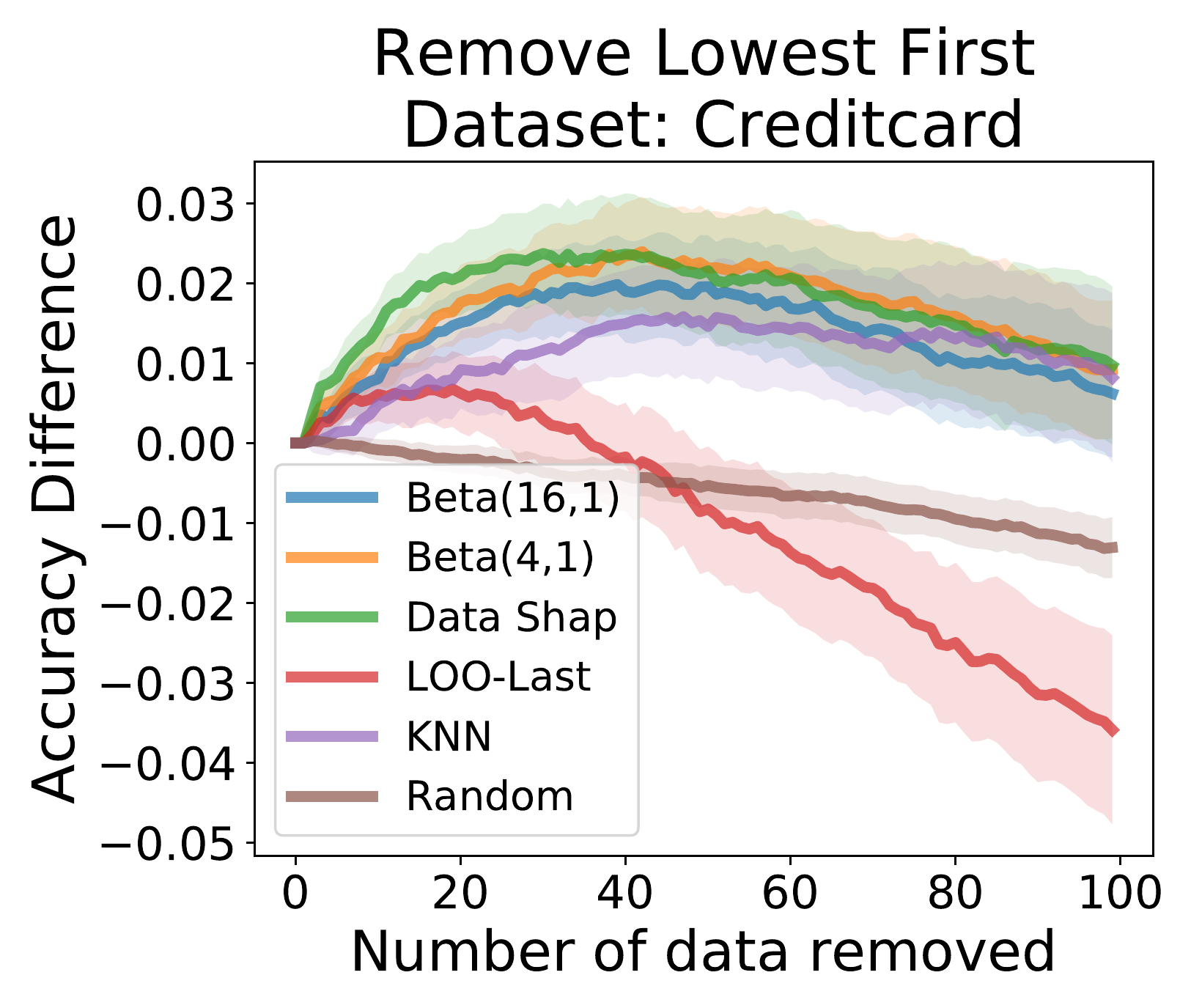}
    \includegraphics[width=0.245\textwidth]{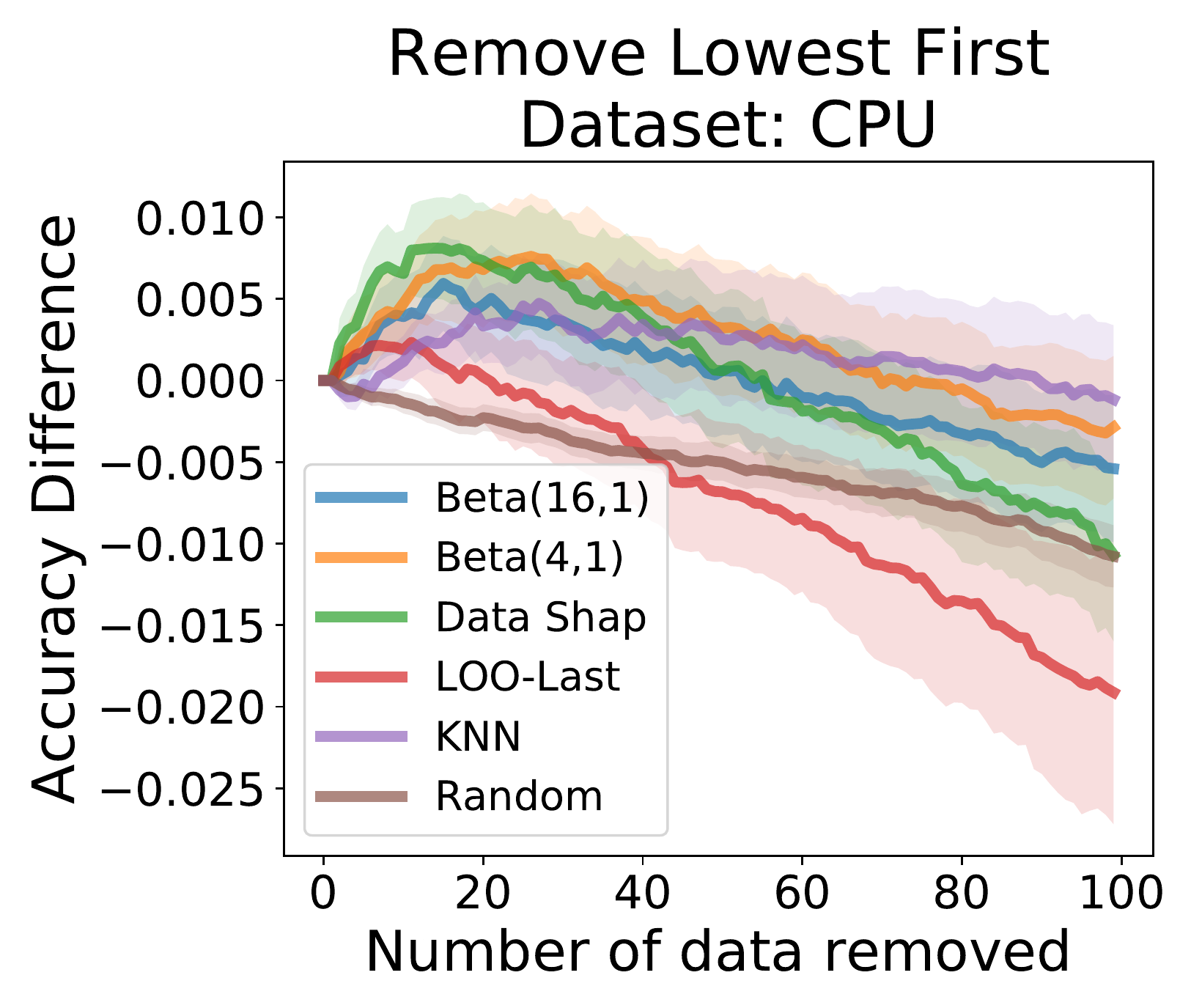}
    \includegraphics[width=0.245\textwidth]{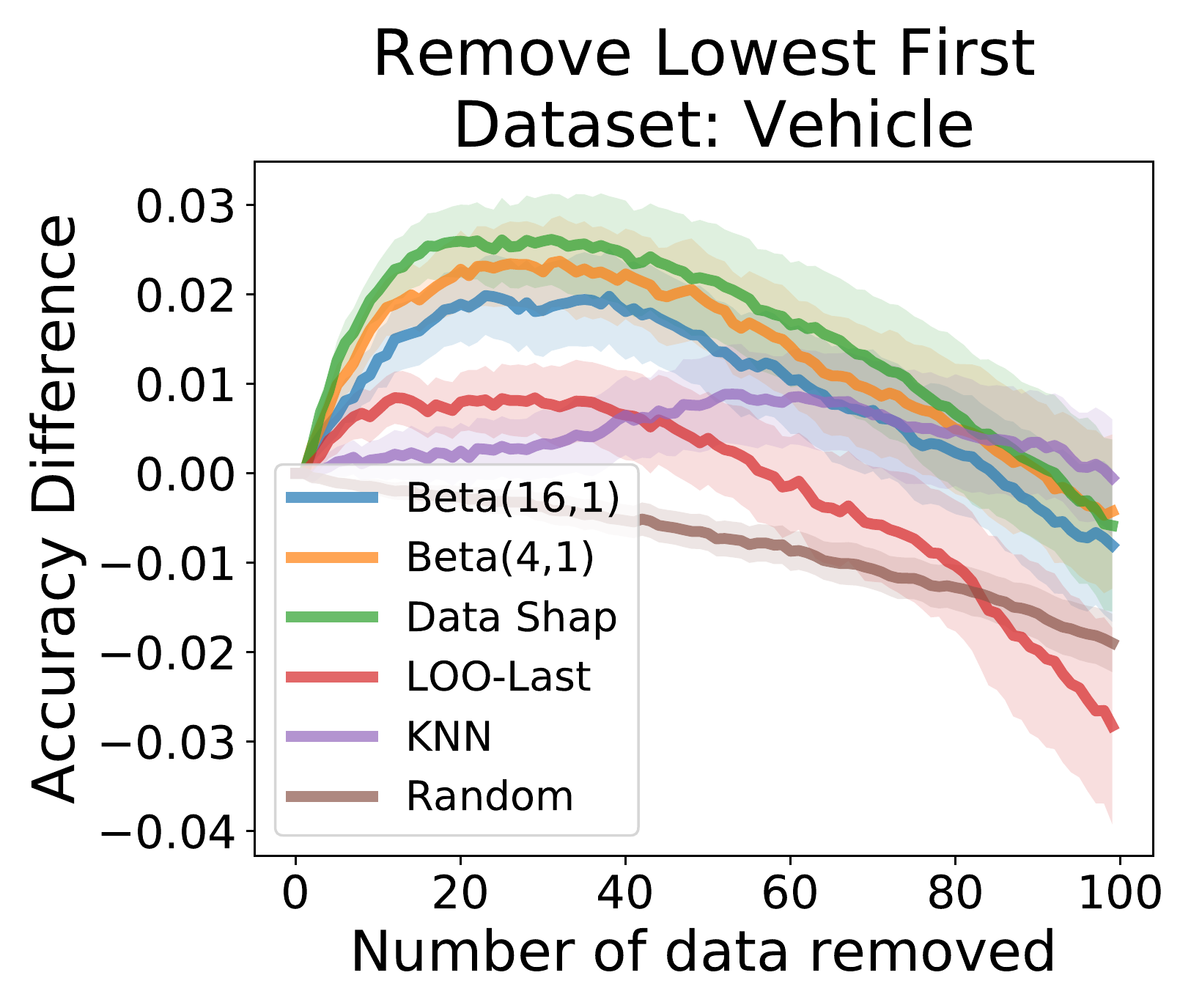}
    \includegraphics[width=0.245\textwidth]{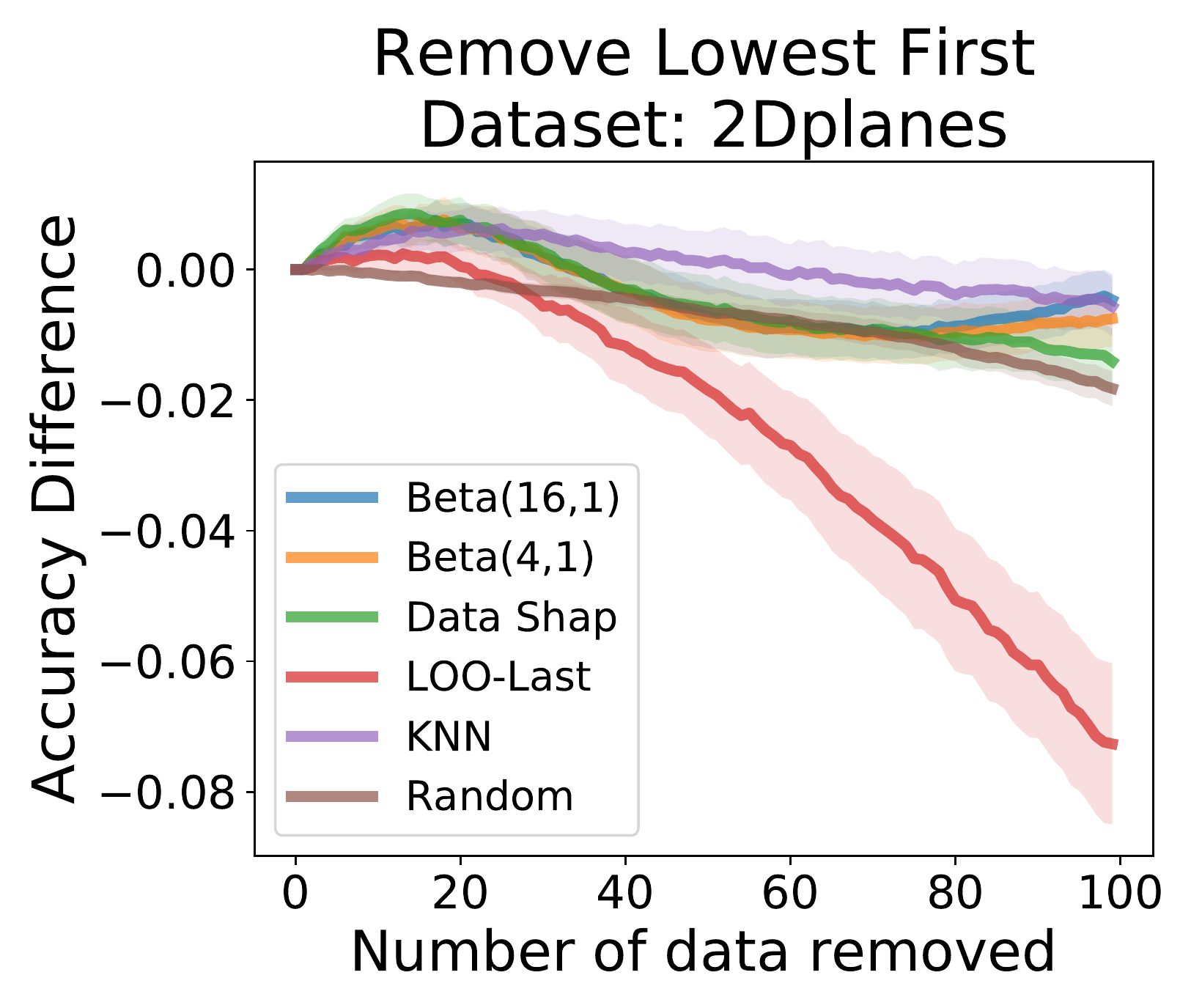}
    \caption{Accuracy change as a function of the number of data points removed on the thirteen datasets. We remove data points whose value is small first.}
    \label{fig:app_point_removal_experiment}
\end{figure*}

\subsection{Beta Shapley with a support vector machine model}
Throughout our experiments, we considered a logistic model to compute a utility. In this section, we demonstrate our results are robust against different models, in particular, a support vector machine model. In the following experiments, we use the same experiment setup but a model. Figure~\ref{fig:app_clean_noisy_marginal_contributions_svm} shows the marginal contribution for clean and noisy samples as a function of the cardinality. Similar to the case of a logistic regression model, the difference between clean and noisy groups is significantly big when the cardinality is small, but they overlap when the cardinality is big. This suggests the uniform weight used in data Shapley might not be optimal, but Beta Shapley can be more effective.
Figure~\ref{fig:app_heatmap_summary_count_svm} shows a summary of performance comparison on the fifteen datasets when a support vector machine is used.

\begin{figure*}[h]
    \centering
    \includegraphics[width=0.245\textwidth]{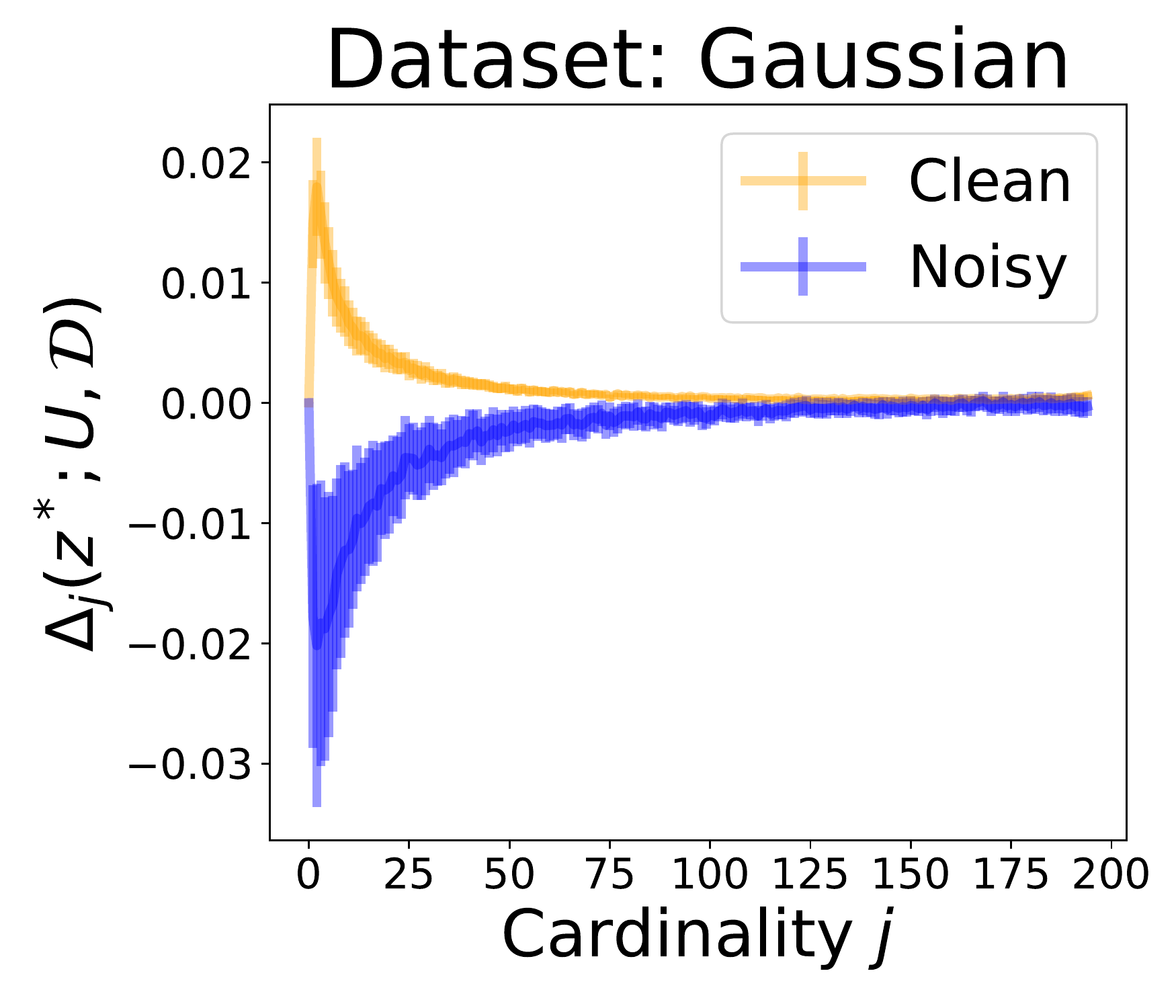}
    \includegraphics[width=0.245\textwidth]{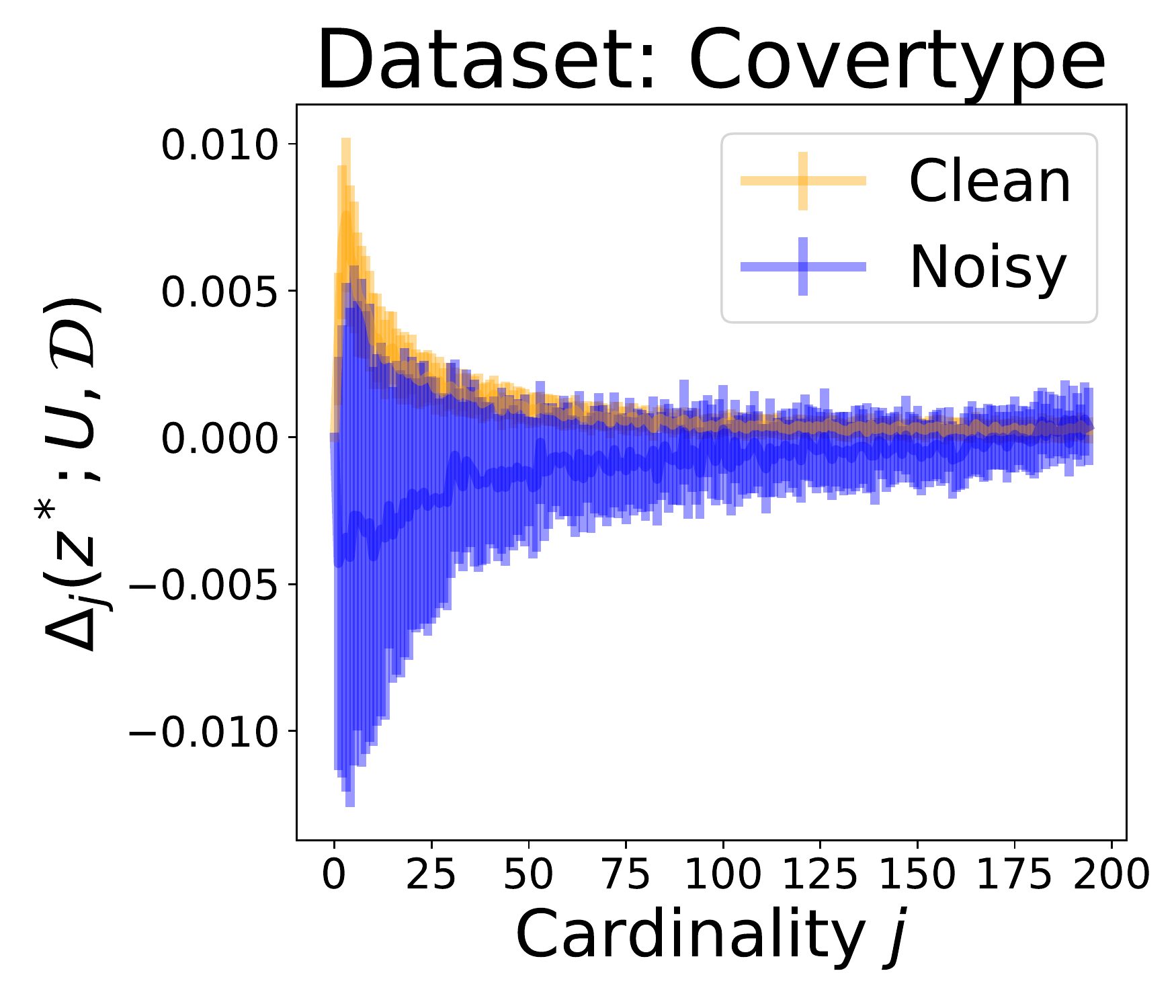}
    \includegraphics[width=0.245\textwidth]{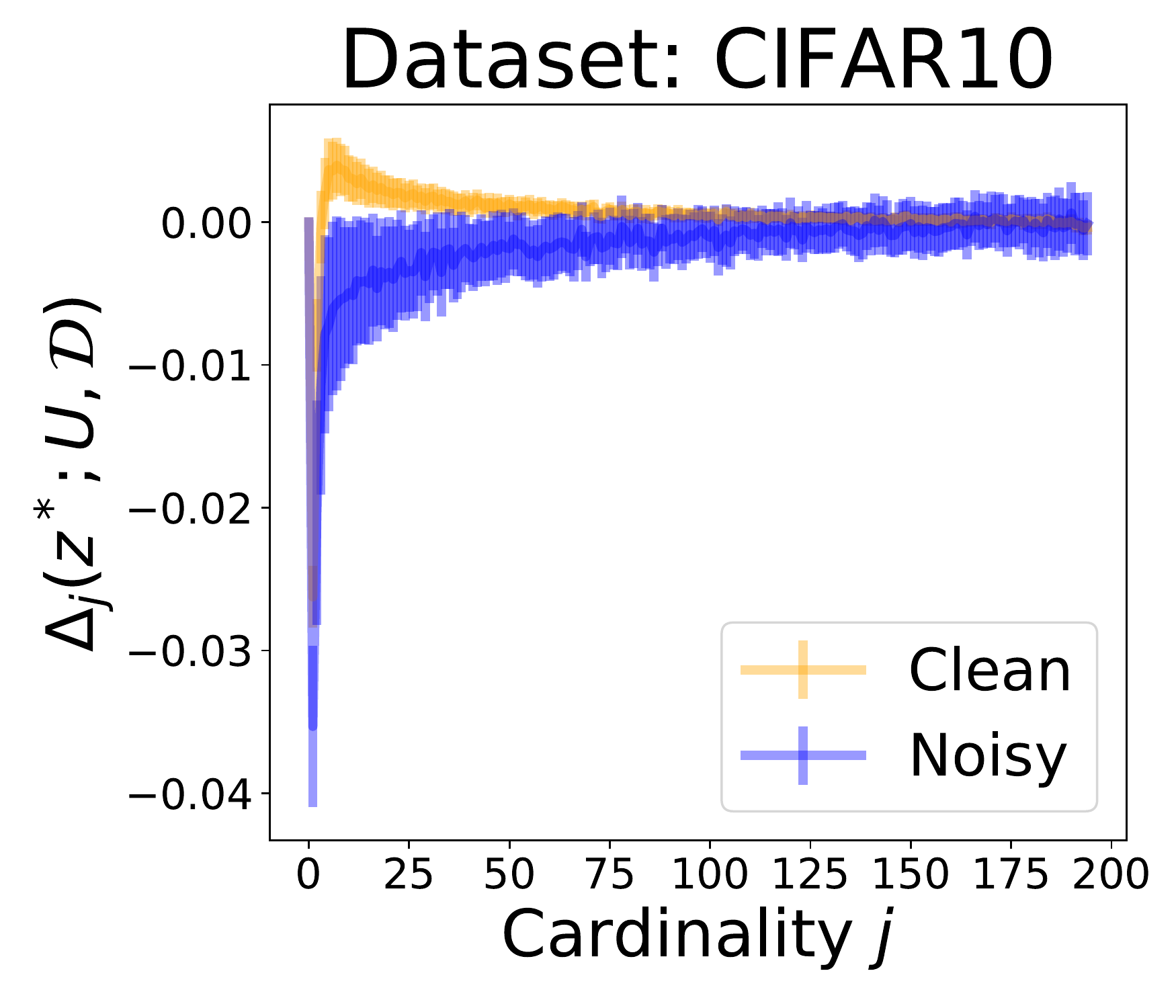}
    \includegraphics[width=0.245\textwidth]{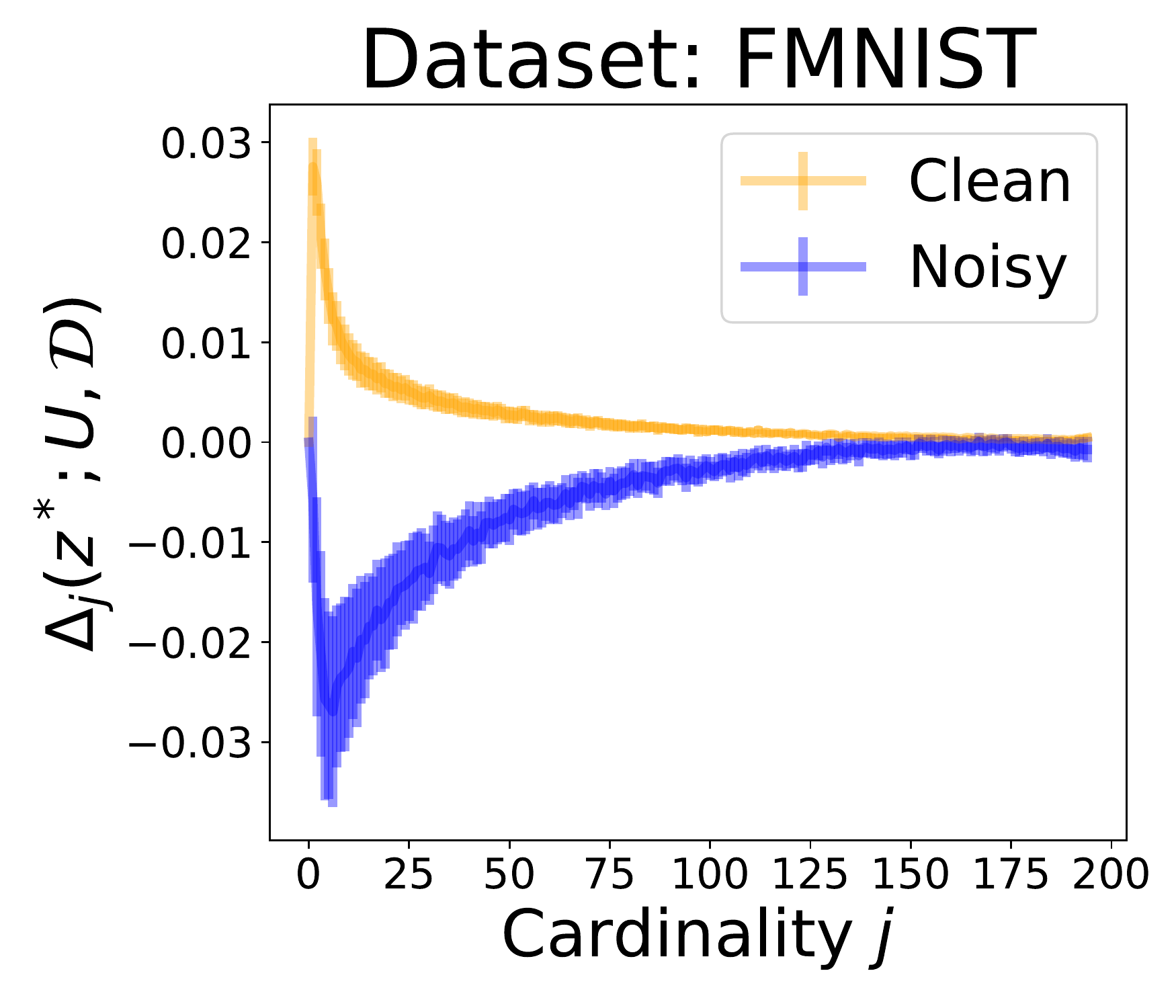}\\
    \includegraphics[width=0.245\textwidth]{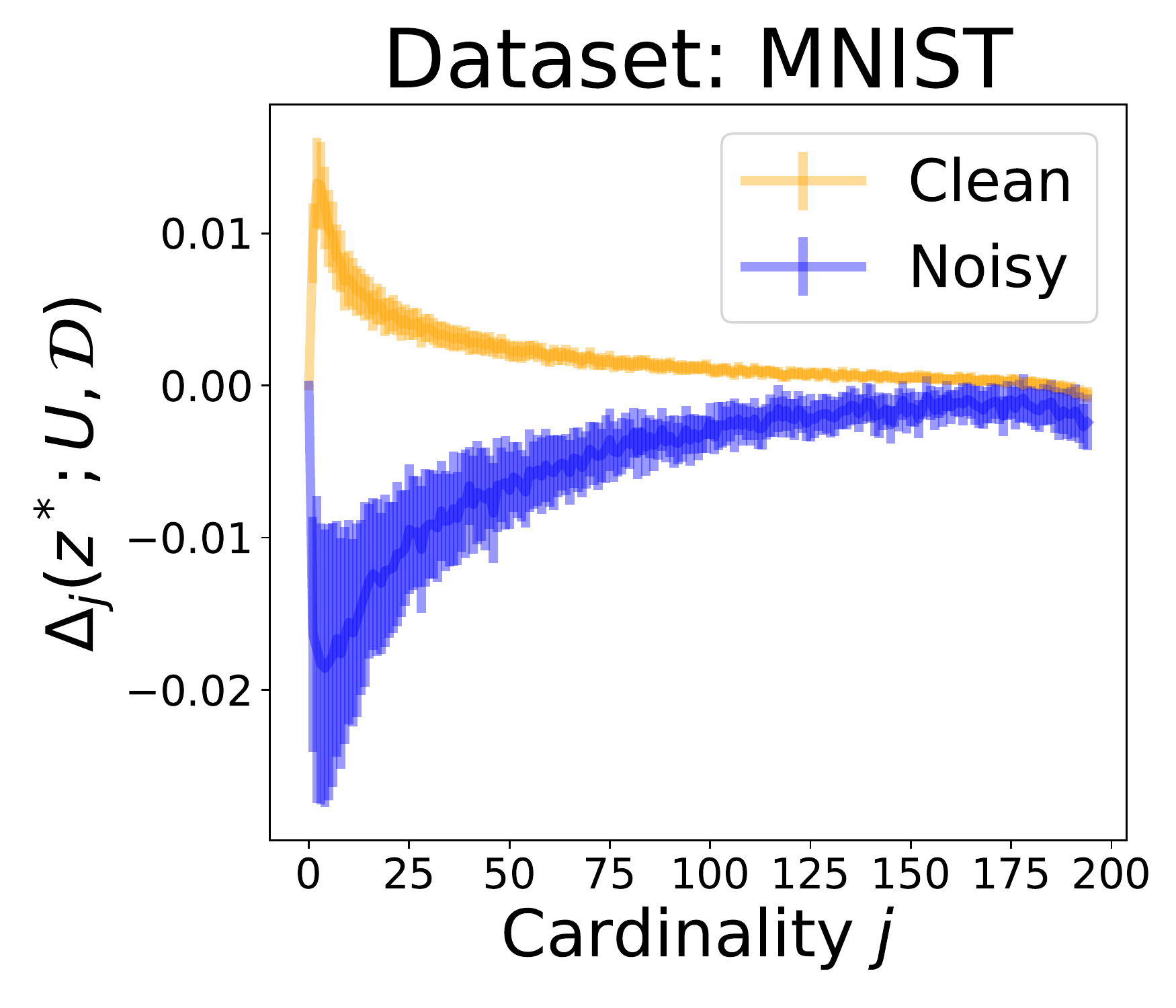}
    \includegraphics[width=0.245\textwidth]{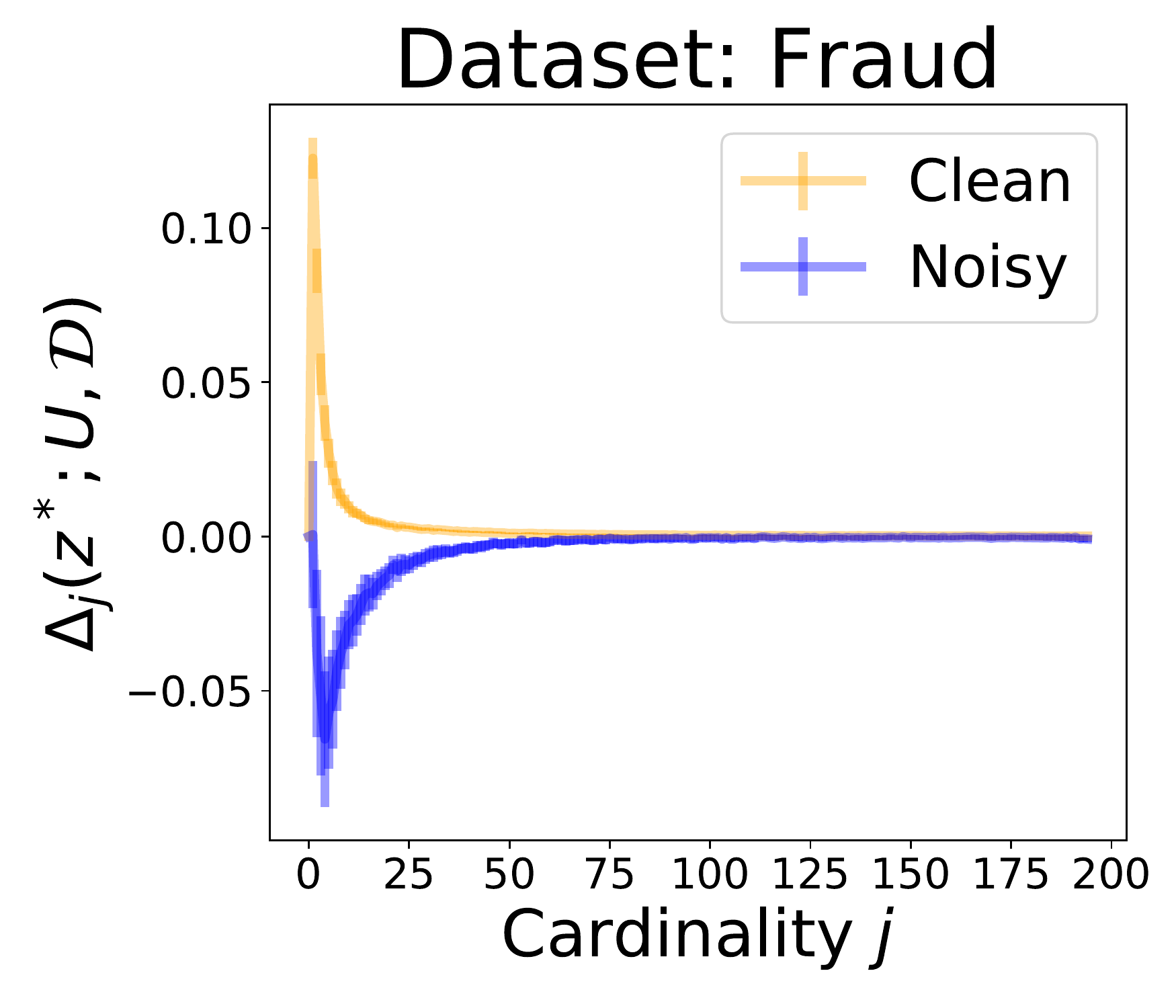}
    \includegraphics[width=0.245\textwidth]{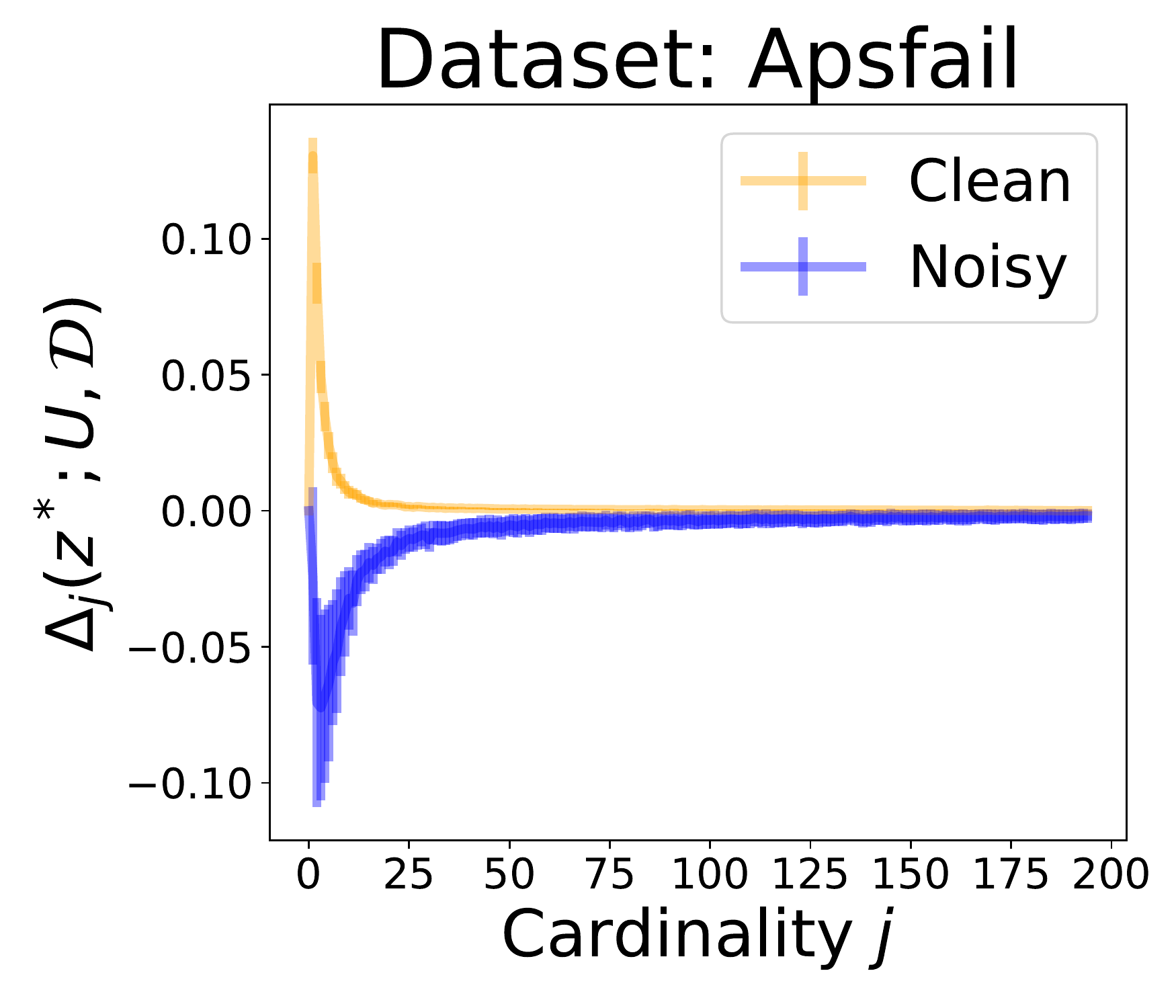}
    \includegraphics[width=0.245\textwidth]{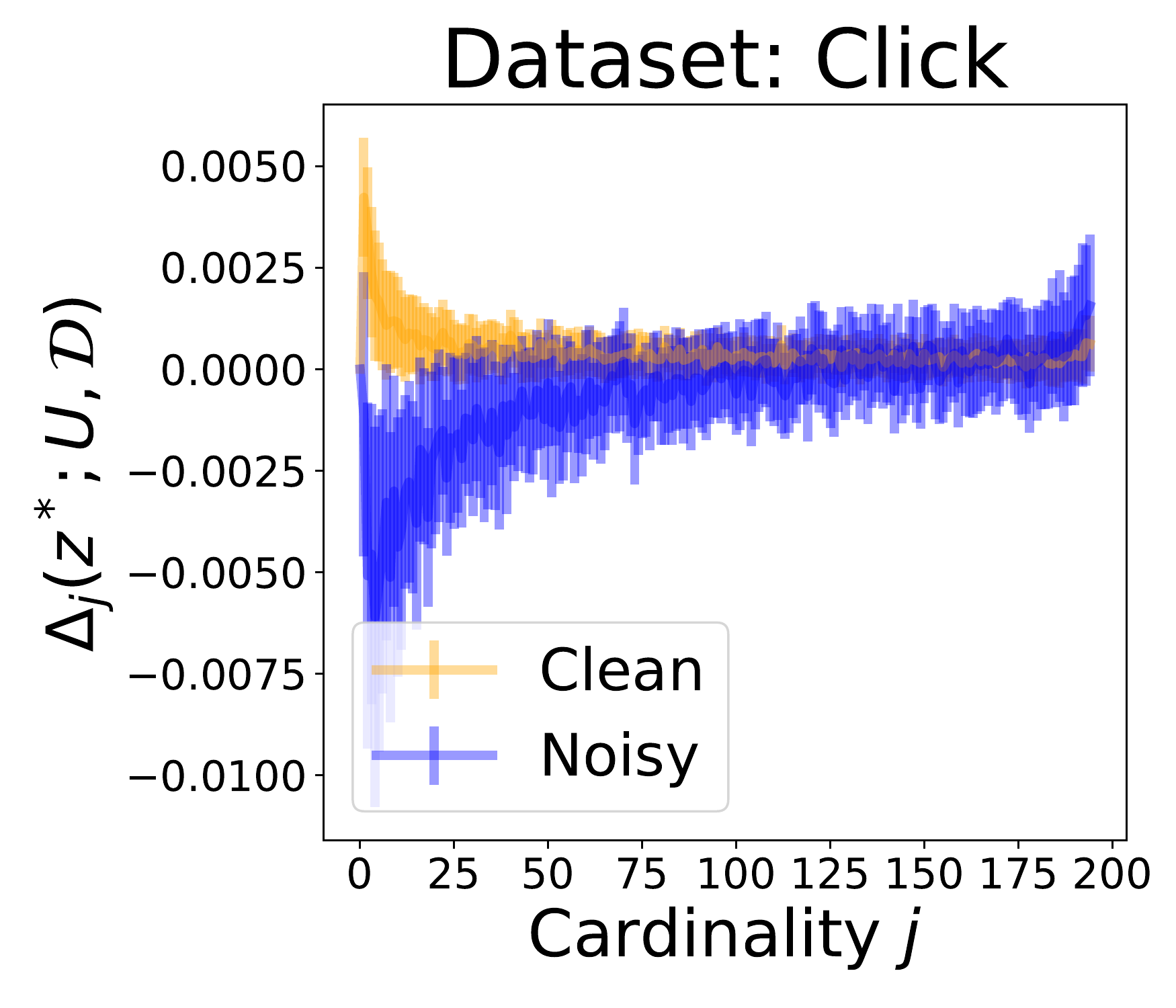}\\
    \includegraphics[width=0.245\textwidth]{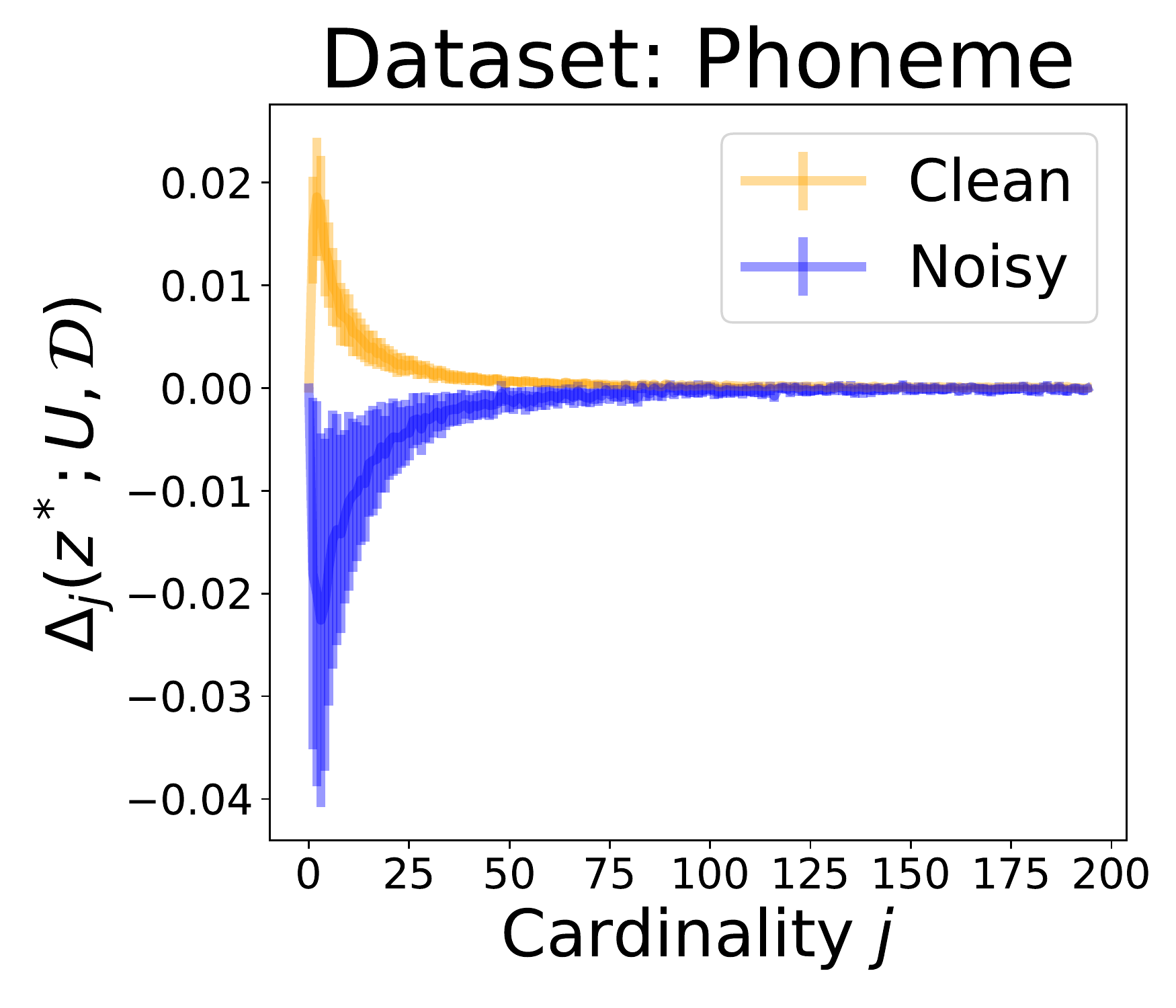}
    \includegraphics[width=0.245\textwidth]{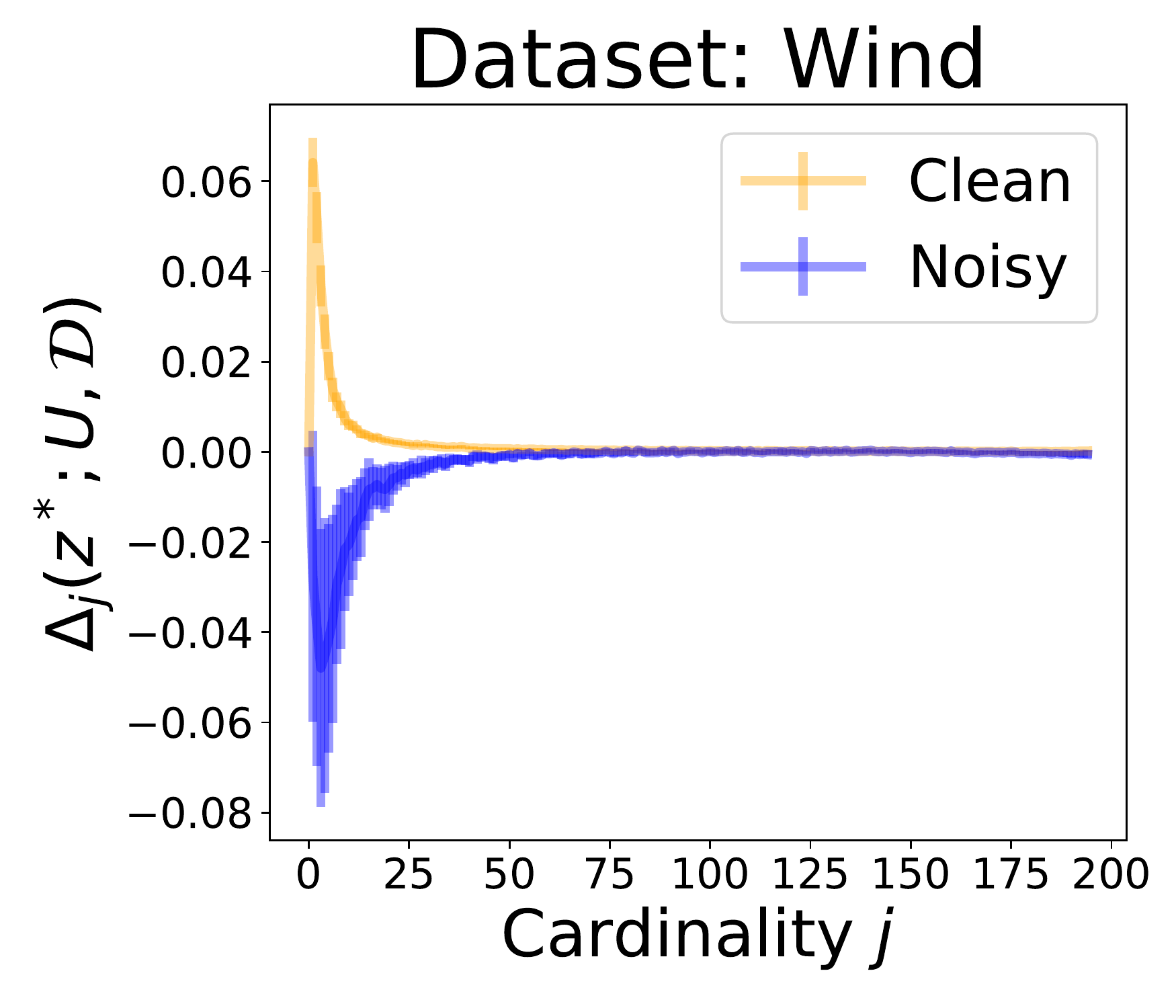}
    \includegraphics[width=0.245\textwidth]{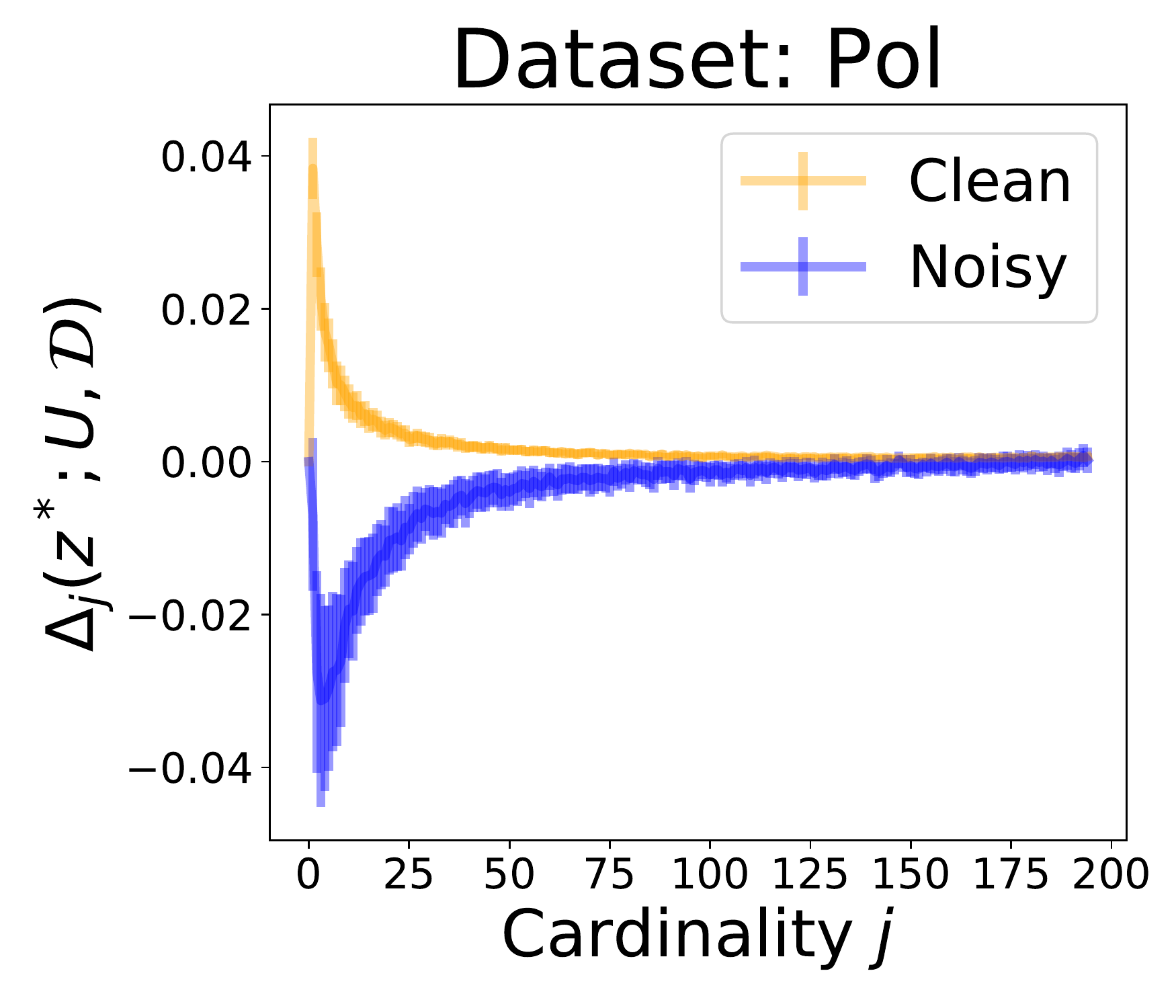}
    \includegraphics[width=0.245\textwidth]{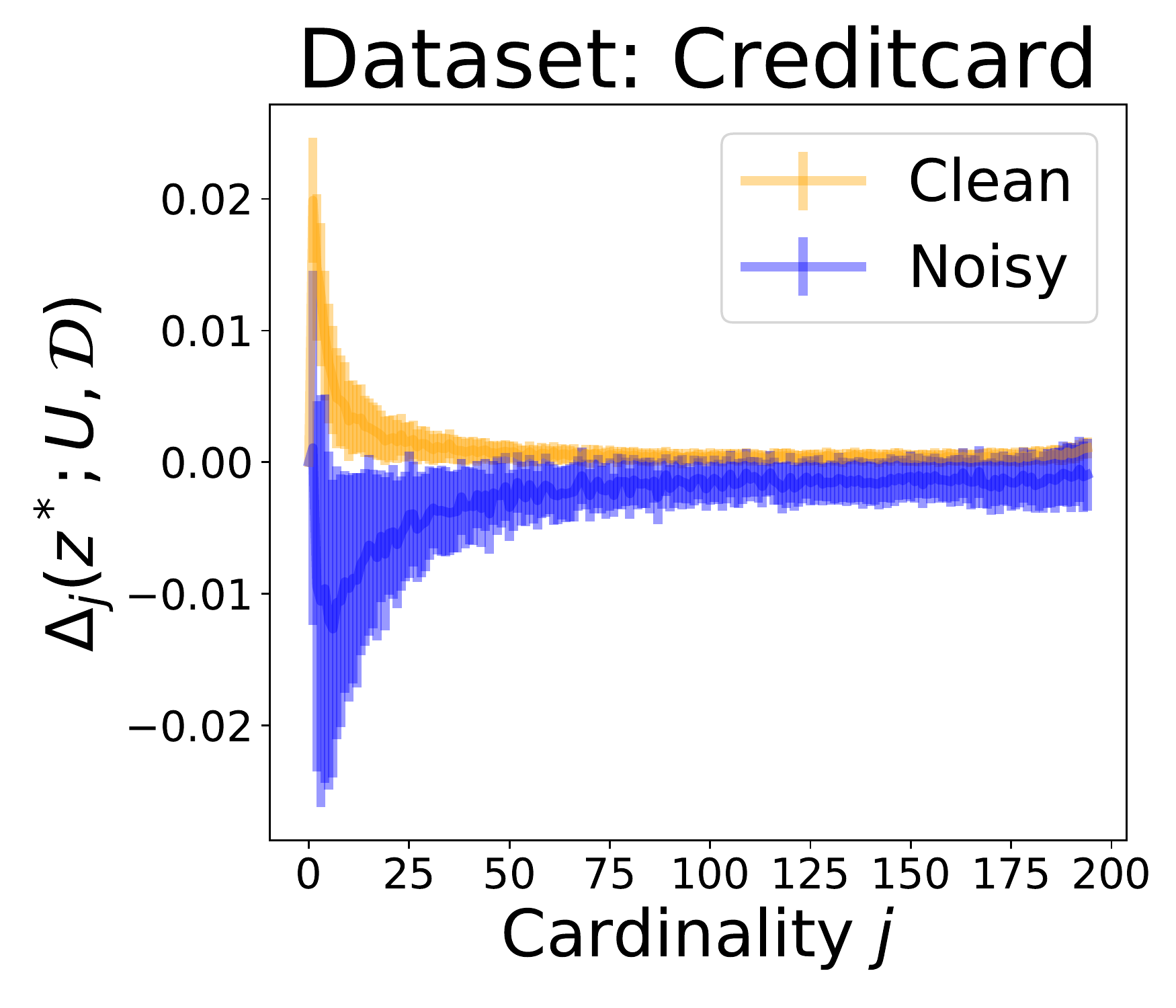}\\
    \includegraphics[width=0.245\textwidth]{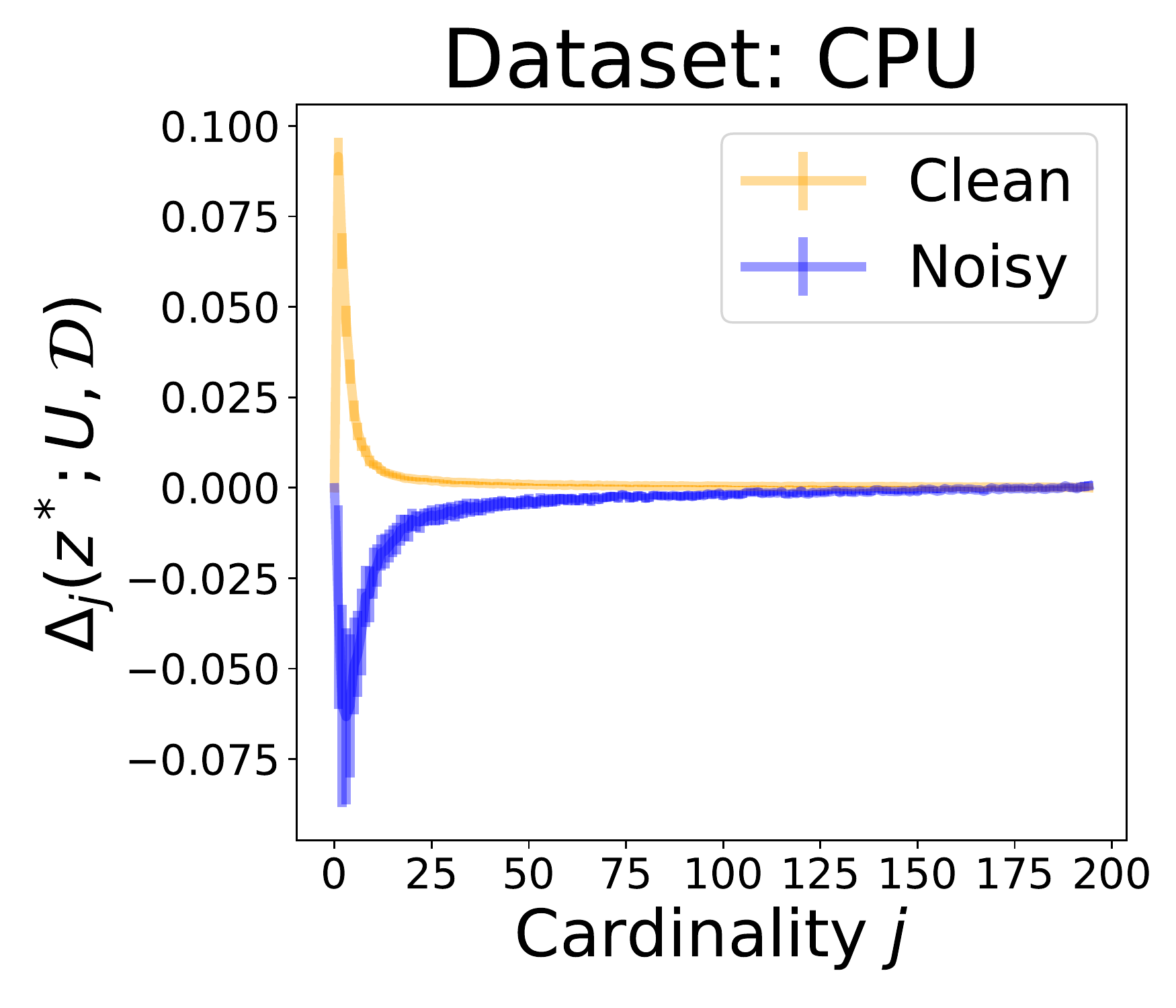}
    \includegraphics[width=0.245\textwidth]{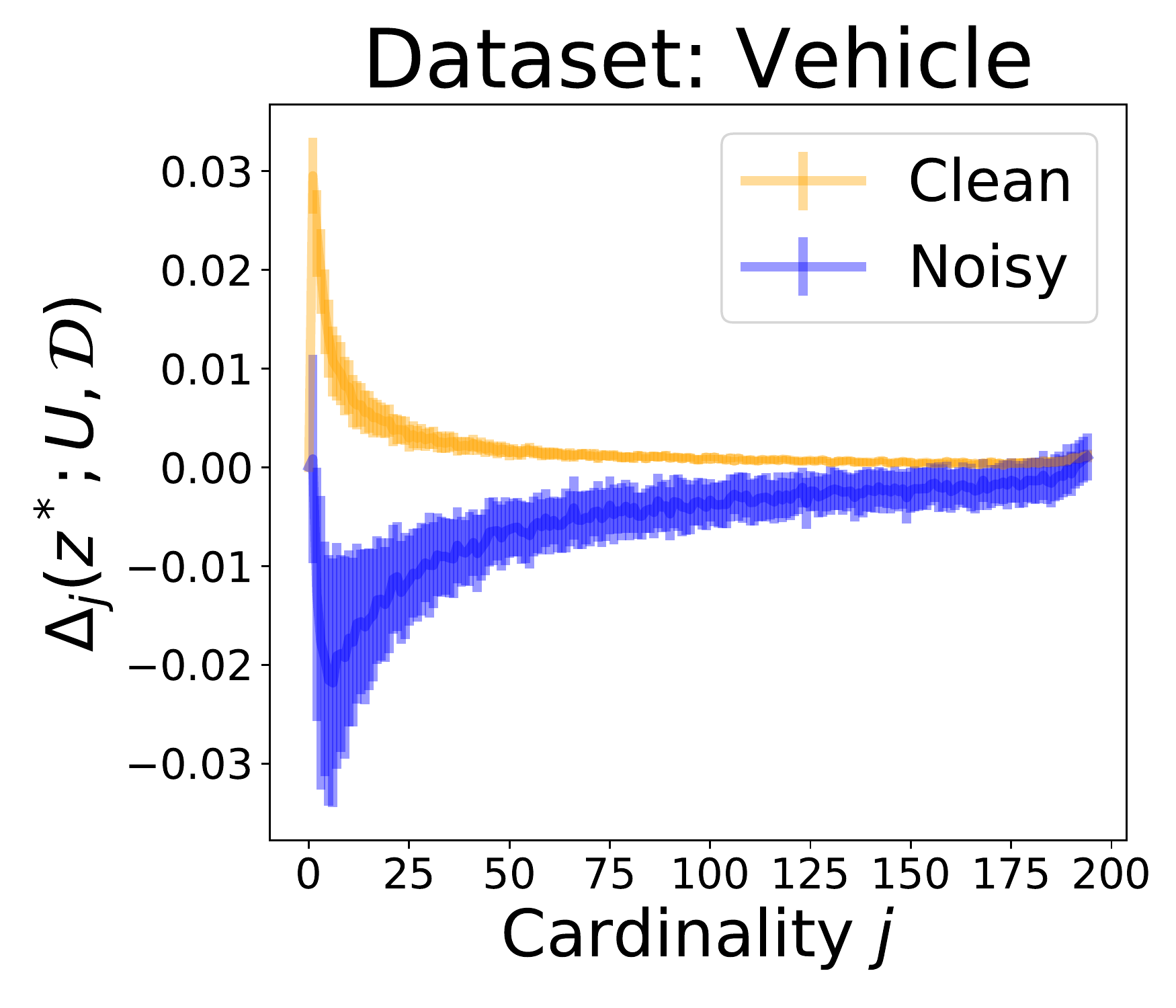}
    \includegraphics[width=0.245\textwidth]{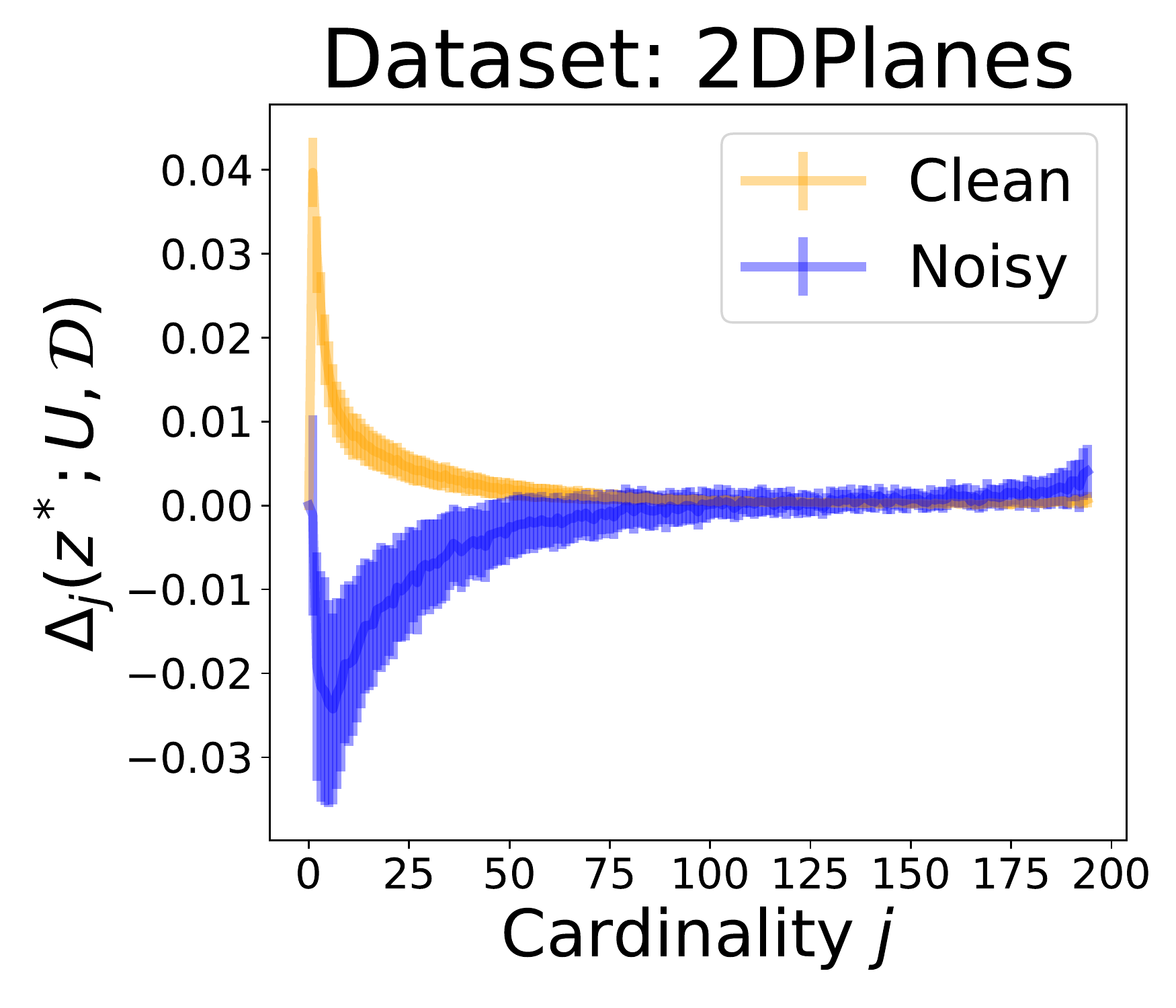}
    \caption{Illustrations of the marginal contributions $\Delta_j (z^*; U, \mathcal{D})$ as a function of the cardinality $j$ on the eleven datasets when a support vector machine is used. Each color indicates a noisy (blue) and a clean (yellow) data point. We denote a 99\% confidence band based on 50 independent runs. When the cardinality $j$ is large, it is hard to tell if point is noisy or not as they become similar or even reversed.}
    \label{fig:app_clean_noisy_marginal_contributions_svm}
\end{figure*}

\begin{figure}[t]
    \centering
    \hspace{-0.3in}
    \includegraphics[width=0.8\columnwidth]{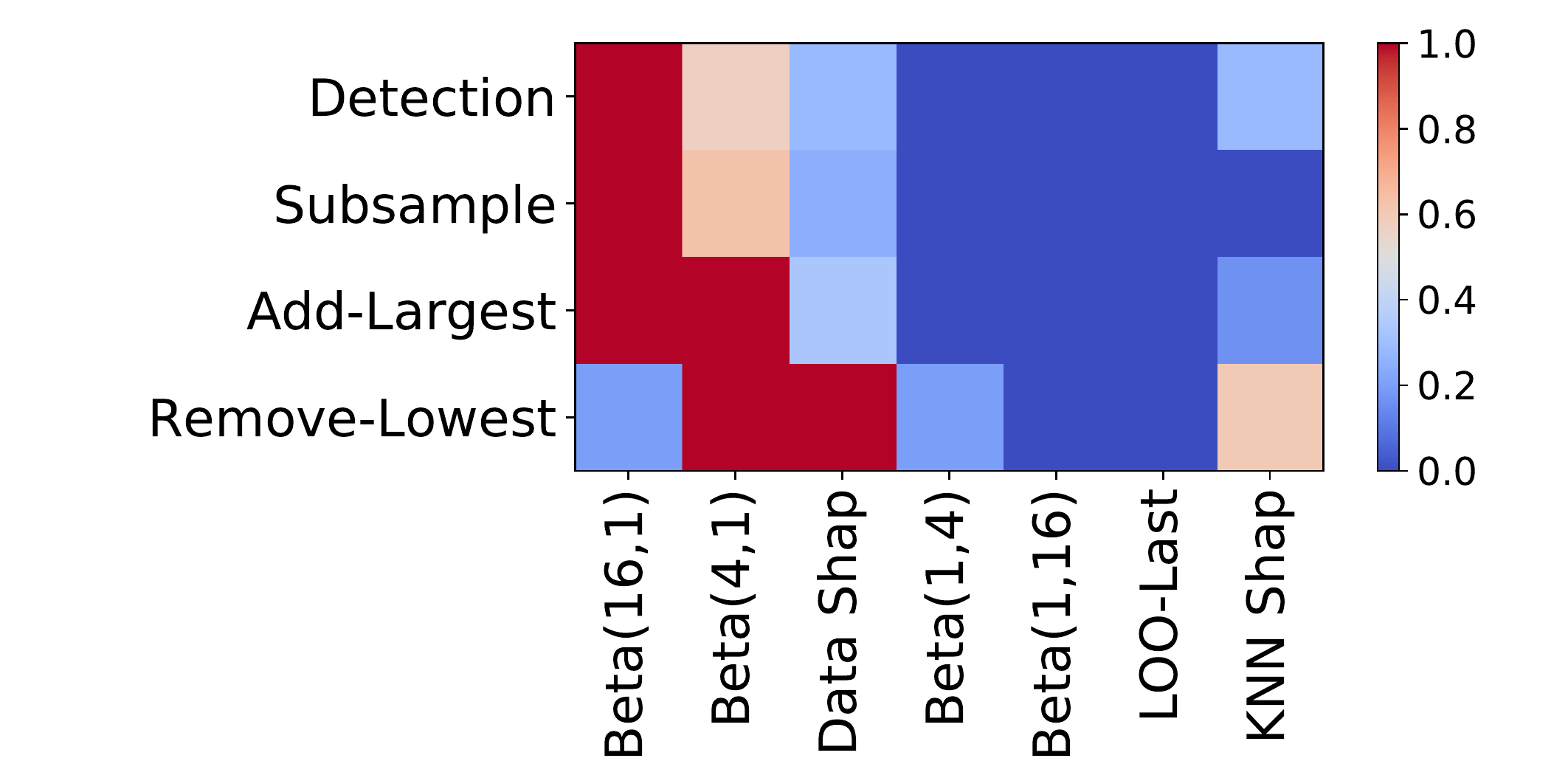}
    \caption{A summary of performance comparison on the fifteen datasets when a support vector machine is used. Each element of the heatmap represents a linearly scaled frequency for each task to be between 0 and 1. Better and worse methods are depicted in red and blue respectively.}
    \label{fig:app_heatmap_summary_count_svm}
    \vspace{-0.025in}
\end{figure}

\end{document}